\definecolor{citecolor}{RGB}{0,180,0}
\definecolor{linkcolor}{RGB}{180,0,0}
\definecolor{urlcolor}{RGB}{0,0,180}
\let\citet\textcite
\let\citep\parencite
\newcommand{\httpurl}[1]{\href{http://#1}{\nolinkurl{#1}}}
\newcommand{\httpsurl}[1]{\href{https://#1}{\nolinkurl{#1}}}
\DeclareMathOperator{\rank}{rank}
\newcommand{\norm}[1]{\left\lVert #1 \right\rVert}
\newcommand{\Norm}[1]{\lVert #1 \rVert}
\newcommand{\RR}{\mathbb{R}}
\newcommand{\NN}{\mathbb{N}}
\newcommand{\OP}{\mathit{op}}
\newcommand{\BP}{\mathit{BP}}
\newcommand{\cK}{\mathcal{K}}
\newcommand{\cD}{\mathcal{D}}
\newcommand{\cE}{\mathcal{E}}
\newcommand{\cS}{\mathcal{S}}
\newcommand{\bE}{\mathbb{E}}
\newcommand{\R}{\mathbb{R}}
\DeclareMathOperator*{\Var}{\mathrm{Var}}
\DeclareMathOperator*{\E}{\bE}      %
\DeclareMathOperator{\rad}{rad}
\DeclareMathOperator{\Tr}{Tr}
\DeclareMathOperator*{\argmin}{arg\,min}
\newtheorem{theorem}{Theorem}
\newtheorem{corollary}{Corollary}
\newtheorem{lemma}{Lemma}
\theoremstyle{definition}
\newtheorem{remark}{Remark}
\title{Uniform Convergence of Interpolators: Gaussian Width, Norm Bounds and Benign Overfitting}
\author{%
    Frederic Koehler\thanks{These authors contributed equally.}\\MIT\\\texttt{fkoehler@mit.edu}
    \And Lijia Zhou$^*$\\University of Chicago\\\texttt{zlj@uchicago.edu}
    \And Danica J.\ Sutherland\\UBC and Amii\\\texttt{dsuth@cs.ubc.ca}
    \And Nathan Srebro\\TTI-Chicago\\\texttt{nati@ttic.edu}
  }
\begin{document}

\maketitle
\setcounter{footnote}{0}
\begin{center}
\vspace{-12mm}
{{Collaboration on the Theoretical Foundations of Deep Learning} (\httpsurl{deepfoundations.ai})}
\vspace{3mm}
\end{center}

\begin{abstract}
We consider interpolation learning in high-dimensional linear regression with Gaussian data, and prove a generic uniform convergence guarantee on the generalization error of interpolators in an arbitrary hypothesis class in terms of the class’s Gaussian width.  Applying the generic bound to Euclidean norm balls recovers the consistency result of \citet{bartlett2020benign} %
for minimum-norm interpolators, and confirms a prediction of \citet{junk-feats} %
for near-minimal-norm interpolators in the special case of Gaussian data.  We demonstrate the generality of the bound by applying it to the simplex, obtaining a novel consistency result for minimum $\ell_1$-norm interpolators (basis pursuit). Our results show how norm-based generalization bounds can explain and be used to analyze benign overfitting, at least in some settings.
\end{abstract}

\section{Introduction}
 
The traditional understanding of machine learning suggests that models with zero training error tend to overfit, and explicit regularization is often necessary to achieve good generalization. Given the empirical success of deep learning models with zero training error \citep{ZBHRV:rethinking,NTS:real-inductive-bias} and the (re-)discovery of the ``double descent'' phenomenon \citep{reconcile:interpolation}, however, it has become clear that the textbook U-shaped learning curve is only part of a larger picture: it is possible for an overparameterized model with zero training loss to achieve low population error in a noisy setting. In an effort to understand how interpolation learning occurs, there has been much recent study of the testbed problem of linear regression with Gaussian features \citep[e.g.][]{bartlett2020benign, tsigler2020benign, BHX:two-models, hastie:surprises, JLL:basis-pursuit, junk-feats, muthukumar:interpolation, negrea:in-defense}. Significant progress has been made in this setting, including nearly-matching necessary and sufficient conditions for consistency of the minimal $\ell_2$ norm interpolator \citep{bartlett2020benign}.

Despite the fundamental role of uniform convergence in statistical learning theory, most of this line of work has used other techniques to analyze the particular minimal-norm interpolator.\footnote{\citet{negrea:in-defense} argue that \citet{bartlett2020benign}'s proof technique is fundamentally based on uniform convergence of a \emph{surrogate} predictor; \citet{YBM:random-feats-gap} study a closely related setting with a uniform convergence-type argument, but do not establish consistency. We discuss both papers in more detail in \cref{sec:euclidean}.}
Instead of directly analyzing the population error of a learning algorithm, a uniform convergence-type argument would control the worst-case generalization gap over a class of predictors containing the typical outputs of a learning rule.
Typically, this is done because for many algorithms -- unlike the minimal Euclidean norm interpolator -- it is difficult to exactly characterize the learned predictor, but we may be able to say e.g.\ that its norm is not too large.
Since uniform convergence does not tightly depend on a specific algorithm, the resulting analysis can highlight the key properties that lead to good generalization: it can give bounds not only for, say, the minimal-norm interpolator, but also for other interpolators with low norm \citep[e.g.][]{junk-feats}, increasing our confidence that low norm -- and not some other property the particular minimal-norm interpolator happens to have -- is key to generalization. In linear regression, practical training algorithms may not always find the exact minimal Euclidean norm solution, so it is also reassuring that all interpolators with sufficiently low Euclidean norm generalize.

\citet{NK:uniform}, however, raised significant questions about the applicability of typical uniform convergence arguments to certain high-dimensional regimes, similar to those seen in interpolation learning. Following their work, \citet{junk-feats, BL:failures, YBM:random-feats-gap, negrea:in-defense} all demonstrated the failure of forms of uniform convergence in various interpolation learning setups. To sidestep these negative results, \citet{junk-feats} suggested considering bounds which are uniform only over predictors \emph{with zero training error}.  This weaker notion of uniform convergence has been standard in analyses of ``realizable'' (noiseless) learning  %
at least since the work of \citet[Chapter 6.4]{vapnik82} and \citet{Valiant84}. 
\citeauthor{junk-feats} demonstrated that at least in one particular noisy setting, such uniform convergence is sufficient for showing consistency of the minimal $\ell_2$ norm interpolator, even though ``non-realizable'' uniform convergence arguments (those over predictors regardless of their training error) cannot succeed. It remains unknown, however, whether these types of arguments can apply to more general linear regression problems and more typical asymptotic regimes, particularly showing rates of convergence rather than just consistency.

In this work, we show for the first time that uniform convergence is indeed able to explain benign overfitting %
in general high-dimensional Gaussian linear regression problems. 
Similarly to how the standard analysis for learning with Lipschitz losses bounds generalization gaps through Rademacher complexity \citep[e.g.][]{understanding-ML},
our \cref{thm:main-generalization} (\cref{sec:generic}) establishes a finite-sample high probability bound on the uniform convergence of the error of interpolating predictors in a hypothesis class, in terms of its Gaussian width.
This is done through an application of the Gaussian Minimax Theorem; see the proof sketch in \cref{sec:sketches}.
Combined with an analysis of the norm of the minimal $\ell_2$ norm interpolator (\cref{thm:interpolator-euclidean} in \cref{sec:euclidean}), our bound recovers known consistency results \citep{bartlett2020benign}, as well as proving a conjectured upper bound for larger-norm interpolators \citep{junk-feats}.

In addition, since we do not restrict ourselves to Euclidean norm balls but instead consider interpolators in an arbitrary compact set, our results allows for a wide range of other applications. 
Our analysis leads to a natural extension of the consistency result and notions of effective rank of \citet{bartlett2020benign} for arbitrary norms (\cref{thm:min-norm} in \cref{sec:general-norm}).
As a demonstration of our general theory, in \cref{sec:basis-pursuit} we show novel consistency results for the minimal $\ell_1$ norm interpolator (basis pursuit) in particular settings, which we believe are the first results of their kind.

\section{Problem Formulation}
\paragraph{Notation.}
We use $\Norm\cdot_p$ for the $\ell_p$ norm, $\norm{x}_p = \left(\sum_i |x_i|^p\right)^{1/p}$. %
We always use $\max_{x \in S} f(x)$ to be $-\infty$ when $S$ is empty, and similarly $\min_{x \in S} f(x)$ to be $\infty$. 
We use standard $O(\cdot)$ notation, and $a \lesssim b$ for inequality up to an absolute constant.
For a positive semidefinite matrix $A$, the \emph{Mahalanobis (semi-)norm} is $\norm{x}_A^2 := \langle x, A x \rangle$.
For a matrix $A$ and set $S$,
$A S$ denotes the set $\{ A x : x \in S \}$.

\paragraph{Data model.} 
We assume that data $(X,Y)$ is generated as
\begin{equation} \label{eqn:model}
    Y = X w^* + \xi
    ,\qquad
    X_i \stackrel{iid}{\sim} N(0, \Sigma)
    ,\qquad
    \xi \sim N(0, \sigma^2 I_n)
,\end{equation}
where $X \in \R^{n \times d}$ has i.i.d.\ Gaussian rows $X_1,\ldots,X_n$, $d \ge n$, $w^*$ is arbitrary,
and $\xi$ is Gaussian and independent of $X$. 
Though our proof techniques crucially depend on $X_i$ being Gaussian, we can easily relax the assumption on the noise $\xi$ to only being sub-Gaussian; we assume Gaussian noise here for simplicity.
The \emph{empirical} and \emph{population loss} are defined as, respectively,
\[
    \hat{L}(w) = \frac{1}{n} \Norm{Y - X w}_2^2
    ,\quad
    L(w) = \E_{(x,y)} (y - \langle w, x \rangle)^2 = \sigma^2 + \Norm{w - w^*}_{\Sigma}^2
,\]
where in the expectation $y = \langle x, w^* \rangle + \xi_0$ with $x \sim N(0,\Sigma)$ independent of $\xi_0 \sim N(0,\sigma^2)$. For an arbitrary norm $\norm{\cdot}$, the minimal norm interpolator is $\hat{w} = \argmin_{\hat{L}(w) = 0} \norm w$.
For Euclidean norm specifically, the minimal norm interpolator can be written explicitly as $\hat{w} = X(XX^T)^{-1}Y$.
If there is more than one minimal norm interpolator, all of our guarantees will hold for any minimizer $\hat w$.
\paragraph{Speculative bound.} \citet{junk-feats} studied uniform convergence of low norm interpolators,
\begin{equation}\label{eq:supInt}
 \sup_{\norm w \le B, \, \hat L(w) = 0} L(w) - \hat L(w).
\end{equation}
Clearly, when $B \ge \norm{\hat w}$, this quantity upper-bounds the population risk of $\hat w$.  \citeauthor{junk-feats} evaluated the asymptotic limit of \eqref{eq:supInt} in one particular setting.
But they further speculated that a bound of the following form may hold more generally:
\begin{equation} \label{eq:spec-ub}
    \sup_{\norm{w}_2 \leq B, \, \hat{L}(w) = 0}
        L(w) - \hat{L}(w)
    \le \frac{B^2 \psi_n}{n}  + o(1)
    \tag{$\star$}
,\end{equation}
where\footnote{When $\norm{x}^2$ concentrates, we need $\psi_n$ to match its typical value. That is, $\psi_n$ might be a high probability bound on $\norm{x}^2$, or for (sub)Gaussian data, as in our case, $\psi_n=\E\norm{x}^2$.} $\psi_n \approx \norm{x}^2$ .  As discussed by \citeauthor{junk-feats}, a bound almost of this form is implied by results of \citet{optimistic-rates} for general data distributions, except that approach gives a large leading constant and logarithmic factors. To show consistency of benign overfitting, though, we need \eqref{eq:spec-ub} to hold without even a constant multiplicative factor. \citeauthor{junk-feats} ask whether and when this holds, speculating that it might in broad generality.

The goal of this paper is essentially to prove \eqref{eq:spec-ub}, at least for Gaussian data, and to use it to show consistency of the minimal norm interpolator. Our main result (\cref{thm:main-generalization}) can be thought of showing \eqref{eq:spec-ub} for Gaussian data with $\psi_n=\E\norm{x}^2$, as well as strengthening and significantly generalizing it.  Our result is more general, as it applies to general compact %
hypothesis sets beyond just the Euclidean ball as in \eqref{eq:spec-ub}.  But it also falls short of fully proving the speculative \eqref{eq:spec-ub} since our results are limited to Gaussian data, while there is no reason we are aware of to believe a tight uniform convergence guarantee of the form \eqref{eq:spec-ub} does not hold much more broadly.  We leave extending our results beyond the Gaussian case open.

\section{Generic Uniform Convergence Guarantee} \label{sec:generic}
To state our results,
we first need to introduce some key tools.

\begin{restatable}{defn}{gwidthrad} \label{def:gwidth-rad}
The \emph{Gaussian width} and the \emph{radius} of a set $S \subset \mathbb{R}^d$ are
\[ W(S) := \E_{H \sim N(0,I_d)} \sup_{s \in S} |\langle s, H \rangle| \quad \text{and} \quad  \rad(S) := \sup_{s \in S} \|s\|_2 .\]
\end{restatable}

The radius measures the size of a set in the Euclidean norm. The Gaussian width of a set $S$ can be interpreted as the number of dimensions that a random projection needs to approximately preserve the norms of points in $S$ \citep{ten-lectures, gordon1988}. These two complexity measures are connected by Gaussian concentration: Gaussian width is the expected value of the supremum of some Gaussian process, and the radius 
upper bounds the typical deviation of that supremum from its expected value.
\begin{restatable}[Covariance splitting]{defn}{covsplit}
Given a positive semidefinite matrix $\Sigma \in \R^{d \times d}$,
we write $\Sigma = \Sigma_1 \oplus \Sigma_2$ if
$\Sigma = \Sigma_1 + \Sigma_2$,
each matrix is positive semidefinite,
and their spans are orthogonal.
\end{restatable}
Note that $\Sigma = \Sigma_1 \oplus \Sigma_2$ effectively splits the eigenvectors of $\Sigma$ into two disjoint parts.

We can now state our generic bound.
\Cref{sec:sketches} sketches the proof;
all full proofs are in the appendix.

\begin{restatable}[Main generalization bound]{theorem}{maingenbound} \label{thm:main-generalization}
There exists an absolute constant $C_1 \leq 66$ such that the following is true. Under the model assumptions in \eqref{eqn:model}, let $\cK$ be an arbitrary compact set, and
take any covariance splitting $\Sigma = \Sigma_1 \oplus \Sigma_2$.
Fixing $\delta \le 1/4$, let $\beta = C_1 \left(\sqrt{\frac{\log(1/\delta)}{n}} + \sqrt{\frac{\rank(\Sigma_1) }{n}} \right)$.
If $n$ is large enough that $\beta \leq 1$, then the following holds with probability at least $1 - \delta$:
\begin{equation*}
    \sup_{w \in \cK, \hat L(w) = 0} L(w)
    \le \frac{1 + \beta}{n}\left[
        W(\Sigma_2^{1/2} \cK)
        + \left( \rad(\Sigma_2^{1/2} \cK) + \Norm{w^*}_{\Sigma_2} \right) \sqrt{2\log \left(\frac{32}{\delta}\right)}
    \, \right]^2.
\end{equation*}
\end{restatable}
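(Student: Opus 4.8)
The plan is to pass to square roots, rewrite the constrained supremum as a Gaussian min--max problem, apply the Gaussian Minimax Theorem to replace the random design by two independent Gaussian vectors, and read off the Gaussian width from the resulting decoupled problem. Since $L(w)=\sigma^2+\norm{w-w^*}_\Sigma^2\ge 0$, it suffices to bound $\mathcal P:=\sup_{w\in\cK,\ \hat L(w)=0}\sqrt{L(w)}$ (with $\mathcal P=-\infty$ when no interpolator exists, making the claim vacuous). Using the variational identity $\sqrt{\sigma^2+\norm{\zeta}_2^2}=\max_{c^2+\norm b_2^2\le 1}(c\sigma+\langle b,\zeta\rangle)$ with $\zeta=\Sigma^{1/2}(w-w^*)$, writing $X=G\Sigma^{1/2}$ where $G$ has i.i.d.\ $N(0,1)$ entries so that $\hat L(w)=0$ becomes $G\zeta=\xi$, and encoding this constraint with a multiplier $u\in\R^n$, I would obtain (interchanging $\inf_u$ with the inner $\max$ over $(c,b)$ by Sion's theorem)
\[
  \mathcal P\ =\ \sup_{\substack{w\in\cK\\ c^2+\norm b_2^2\le 1}}\ \inf_{u\in\R^n}\ \Bigl[c\sigma+\langle b,\Sigma^{1/2}(w-w^*)\rangle+\tfrac1{\sqrt n}\langle u,\,X(w-w^*)-\xi\rangle\Bigr],
\]
in which the design enters only through the bilinear term $\tfrac1{\sqrt n}\langle u,X(w-w^*)\rangle$. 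Finally I would invoke the covariance splitting: orthogonality of spans makes the blocks $G_1:=G|_{\operatorname{range}\Sigma_1}\in\R^{n\times \rank(\Sigma_1)}$ and $G_2:=G|_{\operatorname{range}\Sigma_2}$ independent, $X(w-w^*)=G_1\eta_1+G_2\eta_2$ with $\eta_i:=\Sigma_i^{1/2}(w-w^*)$, and $L(w)=\sigma^2+\norm{\eta_1}_2^2+\norm{\eta_2}_2^2$, $\langle b,\Sigma^{1/2}(w-w^*)\rangle=\langle b_1,\eta_1\rangle+\langle b_2,\eta_2\rangle$.

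Conditioning on $(G_1,\xi)$ and negating, the expression above is a Gaussian min--max whose only $G_2$-Gaussian ingredient is $\tfrac1{\sqrt n}\langle u,G_2\eta_2\rangle$. Applying Gordon's comparison inequality to this term --- in the orientation that upper-bounds $\mathcal P$, which does not require convexity --- replaces $\langle u,G_2\eta_2\rangle$ by $\norm{\eta_2}_2\langle g,u\rangle+\norm u_2\langle h,\eta_2\rangle$ with $g\sim N(0,I_n)$, $h\sim N(0,I_d)$ fresh and independent. Writing $u=\rho v$ ($\rho\ge0$, $\norm v_2=1$), the minimization over $u$ (with $w,c,b$ held fixed) reduces to a sign condition on the coefficient of $\rho$, and then $\max_{c,b}$ restores a norm, leaving the decoupled auxiliary bound: with high probability,
\[
  \mathcal P\ \le\ \sup\Bigl\{\sqrt{\sigma^2+\norm{\eta_1}_2^2+\norm{\eta_2}_2^2}\ :\ w\in\cK,\ \eta_i=\Sigma_i^{1/2}(w-w^*),\ \langle h,\eta_2\rangle\ \ge\ \bigl\Vert G_1\eta_1+\norm{\eta_2}_2\,g-\xi\bigr\Vert_2\Bigr\}.
\]
Here the role of the splitting is transparent: only $\eta_2$ is hit by the fresh Gaussians $g,h$ (so only $\Sigma_2^{1/2}\cK$ will enter the width), whereas $\eta_1$ appears solely inside the norm through $G_1\eta_1$ --- a \emph{fixed} linear image rather than a fresh Gaussian.

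To finish, I would lower-bound the right-hand side of the constraint. Since $\rank(\Sigma_1)\ll n$ whenever $\beta\le1$, $G_1$ is a near-isometry, $\norm{G_1\eta_1}_2^2\ge n(1-O(\beta))\norm{\eta_1}_2^2$; combining this with $\norm g_2^2\ge n(1-O(\beta))$, $\norm\xi_2^2\ge n\sigma^2(1-O(\beta))$, and Cauchy--Schwarz control of the three cross terms --- precisely where one needs $\beta\gtrsim\sqrt{\log(1/\delta)/n}+\sqrt{\rank(\Sigma_1)/n}$ to absorb terms of the form $\sqrt n\,\norm{\eta_1}_2\norm{\eta_2}_2\sqrt{\log(1/\delta)}$ and $\sigma\sqrt n\,\norm{\eta_2}_2\sqrt{\log(1/\delta)}$ --- gives $\bigl\Vert G_1\eta_1+\norm{\eta_2}_2 g-\xi\bigr\Vert_2\ge\sqrt n\,(1-O(\beta))\sqrt{L(w)}$. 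The constraint therefore forces $\sqrt{L(w)}\le\langle h,\eta_2\rangle/(\sqrt n\,(1-O(\beta)))$, so $\mathcal P\le\frac{1}{\sqrt n(1-O(\beta))}\sup_{w\in\cK}\langle h,\Sigma_2^{1/2}(w-w^*)\rangle$. Writing $\langle h,\Sigma_2^{1/2}(w-w^*)\rangle=\langle h,\Sigma_2^{1/2}w\rangle-\langle h,\Sigma_2^{1/2}w^*\rangle$ and using Gaussian concentration of the supremum of a Gaussian process, $\sup_{s\in\Sigma_2^{1/2}\cK}|\langle h,s\rangle|\le W(\Sigma_2^{1/2}\cK)+\rad(\Sigma_2^{1/2}\cK)\sqrt{2\log(1/\delta')}$, together with the scalar tail bound $|\langle h,\Sigma_2^{1/2}w^*\rangle|\le\norm{w^*}_{\Sigma_2}\sqrt{2\log(1/\delta')}$ (so the two $\sqrt{\log}$ terms combine into $(\rad(\Sigma_2^{1/2}\cK)+\norm{w^*}_{\Sigma_2})\sqrt{2\log(1/\delta')}$), then squaring and taking a union bound over the $O(1)$ good events --- the operator-norm bound on $G_1$, the norms of $g$ and $\xi$, the cross-term bounds, the factor-$2$ loss in Gordon's inequality, and the two concentration statements --- and tracking the numerical constants yields the stated bound with $C_1\le66$.

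The step I expect to be the main obstacle is making the Gaussian min--max reduction watertight in exactly the form used here: checking that Gordon's comparison inequality is applied in the orientation that yields an \emph{upper} bound on $\sup_w L(w)$ (rather than a lower one), handling the non-compact multiplier domain $u\in\R^n$ (one must argue the relevant infimum is attained, or may be restricted to a ball, before invoking the theorem), and --- most importantly --- routing the covariance splitting through the argument so that $\eta_1$ is \emph{not} eliminated via a pseudo-inverse (which would cost a factor $\sqrt{\rank(\Sigma_1)}$ in the width term) but instead kept inside the norm, where the near-isometry of $G_1$ costs only the multiplicative $(1+\beta)$ factor. The remaining cross-term estimates and the bookkeeping needed to reach the explicit constant $66$ are routine but lengthy.
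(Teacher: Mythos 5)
Your proposal is correct and follows essentially the same route as the paper's proof: rewrite the constrained supremum as a Gaussian max--min problem, condition on the $\Sigma_1$-block of the design and the noise so that Gordon's inequality is applied only to the $\Sigma_2$-block, handle the non-compact multiplier by truncation, and analyze the auxiliary constraint via near-isometry of the low-rank block, uniform control of the cross terms, and Gaussian-process concentration of $\sup_{s\in\Sigma_2^{1/2}\cK}|\langle h,s\rangle|$. The only cosmetic difference is that you linearize $\sqrt{L(w)}$ with auxiliary variables $(c,b)$, whereas the paper keeps the squared objective inside the continuous function $\psi$ of its GMT variant; both yield the same auxiliary problem.
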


In our applications, we consider $\mathcal{K} = \{ w \in \R^d : \| w \| \leq B \}$ for an arbitrary norm, with $B$ based on a high-probability upper bound for $\norm{\hat{w}}$.
Depending on the application, the rank of $\Sigma_1$ will be either constant or $o(n)$, so that $\beta \to 0$.
The term $\norm{w^*}_{\Sigma_2}$ generally does not scale with $n$ and hence is often negligible.
As hinted earlier, we can think of the Gaussian width term and the radius term as bias and variance, respectively.
To achieve consistency, we can expect that the Gaussian width should scale as $\sigma \sqrt{n}$. This agrees with the intuition that we need increasing norm to memorize noise when the model is not realizable. The radius term requires some care in our applications, but can be handled by the covariance splitting technique. 
As part of the analysis in the following sections, we will rigorously show in many settings that the dominant term in the upper bound is the Gaussian width. In these cases, our 
upper bound is roughly $W(\Sigma_2^{1/2} \cK)^2/n$, which can be viewed as the ratio between the (probabilistic) dimension of our hypothesis class and sample size.  
We will also analyze how large $\cK$ must be to contain any interpolators, allowing us to find consistency results.

\section{Application: Euclidean Norm Ball} 
\label{sec:euclidean}

It can be easily seen that the Gaussian width of a Euclidean norm ball reduces nicely to the product of the norm of our predictor with the typical norm of $x$: if $\mathcal{K} = \{ w \in \R^d : \| w \|_2 \leq B \}$, then
\begin{equation} \label{eqn:width-trace}
W(\Sigma^{1/2} \cK) = B \cdot \E_{H \sim N(0, I_d)} \| \Sigma^{1/2} H \|_2 \leq \sqrt{B^2 \E \norm{x}_2^2} 
.\end{equation}
Therefore, it is plausible that \eqref{eq:spec-ub} holds with $\psi_n = \E \| x \|_2^2 = \Tr(\Sigma)$. \Cref{fig:example} illustrates this generalization bound in two simple examples, motivated by \citet{hastie:surprises,junk-feats}. Indeed, an application of our main theorem proves that this is exactly the case for Gaussian data.

\begin{restatable}[Proof of the speculative bound \eqref{eq:spec-ub} for Gaussian data]{corollary}{specbound}
\label{corr:spec-bound}
Fix any $\delta \leq 1/4$. Under the model assumptions in \eqref{eqn:model} with $B \geq \|w^*\|_2$ and $n \gtrsim \log(1/\delta) $, for some $\gamma \lesssim \sqrt[4]{\log(1/\delta)/n}$, it holds with probability at least $1 - \delta$ that
\begin{equation}
    \sup_{\|w\|_2 \le B, \hat L(w) = 0} L(w) \le (1 + \gamma) \frac{B^2\Tr (\Sigma)}{n}.
\end{equation}
\end{restatable}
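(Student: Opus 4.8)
The plan is to invoke \cref{thm:main-generalization} with $\cK = \{ w \in \R^d : \Norm{w}_2 \le B \}$ and a covariance splitting tailored to control the radius term. For this $\cK$ and any positive semidefinite $M$, the width computation in \eqref{eqn:width-trace} gives $W(M^{1/2}\cK) = B\,\E_{H \sim N(0,I_d)} \Norm{M^{1/2}H}_2 \le B\sqrt{\Tr M}$, while $\rad(M^{1/2}\cK) = B \Norm{M^{1/2}}_{\op} = B \sqrt{\lambda_{\max}(M)}$ and (using $B \ge \Norm{w^*}_2$) $\Norm{w^*}_M \le \sqrt{\lambda_{\max}(M)}\,\Norm{w^*}_2 \le B\sqrt{\lambda_{\max}(M)}$. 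Hence for any splitting $\Sigma = \Sigma_1 \oplus \Sigma_2$, the bracket in \cref{thm:main-generalization} is at most $B\sqrt{\Tr\Sigma_2} + 2B\sqrt{\lambda_{\max}(\Sigma_2)}\sqrt{2\log(32/\delta)}$, and since $\Tr\Sigma_2 \le \Tr\Sigma$ the theorem's right-hand side is bounded by
\[
    \frac{1+\beta}{n}\, B^2 \Tr(\Sigma) \left( 1 + 2\sqrt{\tfrac{\lambda_{\max}(\Sigma_2)}{\Tr\Sigma}}\,\sqrt{2\log(32/\delta)} \right)^{2}.
\]

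It remains to choose $\Sigma_1 \oplus \Sigma_2$ so that both $\beta$ and $\lambda_{\max}(\Sigma_2)/\Tr\Sigma$ are $O\bigl(\sqrt{\log(1/\delta)/n}\bigr)$. Taking $\Sigma_1 = 0$ is not enough, since $\lambda_{\max}(\Sigma)$ can be a constant fraction of $\Tr\Sigma$. Instead, list the eigenvalues of $\Sigma$ as $\lambda_1 \ge \lambda_2 \ge \cdots$, set $k = \lceil \sqrt{n \log(1/\delta)}\, \rceil$, let $\Sigma_1$ be the restriction of $\Sigma$ to its top-$k$ eigenspace and $\Sigma_2$ the restriction to the orthogonal complement (so $\rank(\Sigma_1) \le k$ and $\lambda_{\max}(\Sigma_2) = \lambda_{k+1}$); if $\rank(\Sigma) \le k$ or $d < k$, take $\Sigma_2 = 0$ instead. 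The elementary bound $\lambda_{k+1} \le \frac{1}{k+1}\sum_{i\le k+1}\lambda_i \le \Tr(\Sigma)/(k+1)$ gives $\lambda_{\max}(\Sigma_2)/\Tr\Sigma \le 1/(k+1) \lesssim 1/\sqrt{n\log(1/\delta)}$, so that $\sqrt{\lambda_{\max}(\Sigma_2)/\Tr\Sigma}\,\sqrt{\log(1/\delta)} \lesssim \sqrt[4]{\log(1/\delta)/n}$; meanwhile $\sqrt{\rank(\Sigma_1)/n} \le \sqrt{k/n} \lesssim \sqrt[4]{\log(1/\delta)/n}$, so $\beta \lesssim \sqrt[4]{\log(1/\delta)/n}$, and the hypothesis $n \gtrsim \log(1/\delta)$ with a large enough absolute constant ensures $\beta \le 1$ so that \cref{thm:main-generalization} applies.

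Substituting these estimates, the squared factor above equals $1 + O(\sqrt[4]{\log(1/\delta)/n})$ (the inner error term is at most $1$ when $n \gtrsim \log(1/\delta)$, so squaring only changes constants), and multiplying by $1 + \beta = 1 + O(\sqrt[4]{\log(1/\delta)/n})$ preserves a factor of the form $1 + \gamma$ with $\gamma \lesssim \sqrt[4]{\log(1/\delta)/n}$, yielding exactly $\sup_{\Norm{w}_2 \le B,\, \hat L(w) = 0} L(w) \le (1+\gamma)\, B^2\Tr(\Sigma)/n$. In the degenerate case $\Sigma_2 = 0$ the bound reduces to $0$, which is consistent: when $\rank(\Sigma) \le k$ and $n \gtrsim \log(1/\delta)$ we necessarily have $\rank(\Sigma) < n$, so almost surely no $w$ interpolates and the supremum is $-\infty$.

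The only non-mechanical step is the treatment of the radius term: recognizing that a nontrivial covariance splitting is required, invoking $\lambda_{k+1} \le \Tr(\Sigma)/(k+1)$, and then balancing the $\sqrt{\rank(\Sigma_1)/n}$ contribution to $\beta$ against the $\sqrt{\lambda_{k+1}/\Tr\Sigma}$ contribution to the error factor — which is what fixes $k \asymp \sqrt{n\log(1/\delta)}$ and produces the fourth-root rate. Everything else is substitution into \cref{thm:main-generalization} and \eqref{eqn:width-trace}.
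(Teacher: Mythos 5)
Your proposal is correct and follows essentially the same route as the paper: apply \cref{thm:main-generalization} to the Euclidean ball, bound the bracket via $W \le B\sqrt{\Tr\Sigma_2}$ and $\rad, \Norm{w^*}_{\Sigma_2} \le B\Norm{\Sigma_2}_{\op}^{1/2}$, and split off the top-$k$ eigenspace using the elementary fact $k\,\Norm{\Sigma_2}_{\op} \le \Tr(\Sigma)$ (the paper writes this as $\rank(\Sigma_1)\Norm{\Sigma_2}_{\op} \le \Tr(\Sigma)$), then balance $\sqrt{k/n}$ against $\sqrt{\log(1/\delta)/k}$ to fix $k \asymp \sqrt{n\log(1/\delta)}$ and obtain the fourth-root rate. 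The only cosmetic difference is that you also spell out the degenerate low-rank case, which the paper leaves implicit.
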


\begin{figure}
    \centering
    \begin{subfigure}{.49\textwidth}
    \centering
    \includegraphics[scale=0.38]{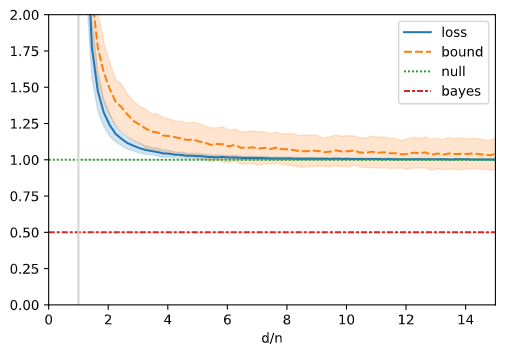}
    \caption{$\lambda = 1$ (isotropic)}
    \end{subfigure}    
    \begin{subfigure}{.49\textwidth}
    \centering
    \includegraphics[scale=0.51]{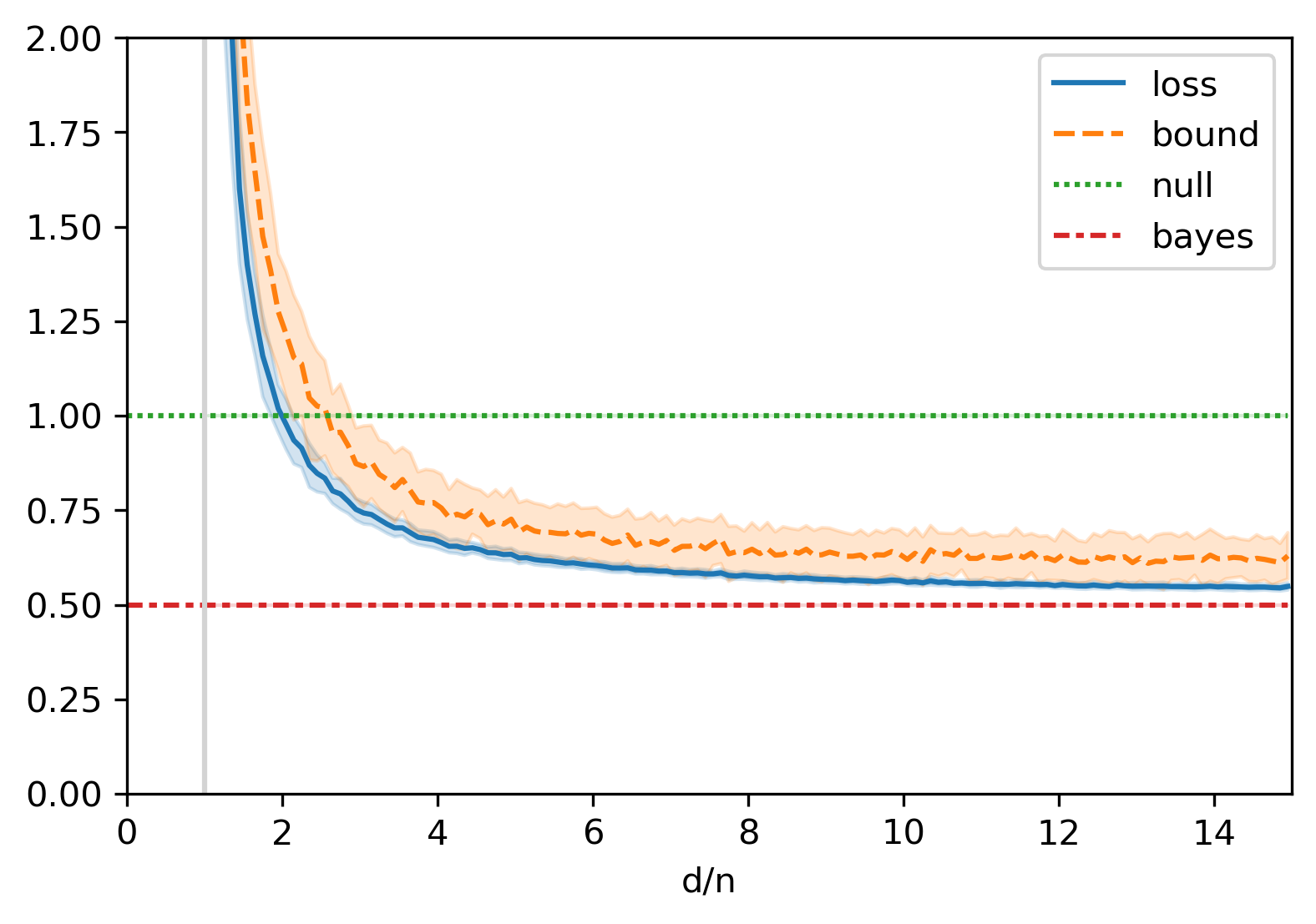}
    \caption{$\lambda = 0.1$}
    \end{subfigure}
    
    \caption{Illustration of our generalization bound when $\Sigma = \begin{bmatrix} 1 & 0 \\ 0 & \lambda^2 I_{d - 1} \end{bmatrix}$, $n = 200$, $\sigma^2 = 1/2$, $w^* = (1/\sqrt{2},0,\ldots,0)$, %
    and $d$ is varied (x-axis). Averages (curve) and standard deviations (error bars) are estimated from 400 trials for each value of $d$. 
    Here the curve marked ``loss'' corresponds to $L(\hat w)$ for the minimum Euclidean norm interpolator $\hat w$, ``bound'' to ${\|\hat w\|_2^2 \Tr \Sigma}/{n} = \E {\|\hat w\|_2^2 (1 + \lambda^2 (d - 1))}/{n}$ which is an asymptotic bound on $L(\hat w)$ due to \cref{corr:spec-bound}, ``null'' is the loss $L(0) = 1$ of the zero estimator, and ``bayes'' is the Bayes-optimal error $L(w^*) = \sigma^2 = 1/2$.  The vertical line is $d/n = 1$, the location of the double-descent peak; for $d/n < 1$ there are almost surely no interpolators.}
    \label{fig:example}
\end{figure}

The above bound is clean and simple, but only proves a sub-optimal rate of $n^{-1/4}$.
This is because the choice of covariance split used in the proof of \cref{corr:spec-bound} uses no information about the particular structure of $\Sigma$. This bound can also be slightly loose in situations where the eigenvalues of $\Sigma$ decay rapidly, in which case $\Tr(\Sigma)$ can be replaced by a smaller quantity. We next state a more precise bound on the generalization error, which requires introducing the following notions of {effective rank}. 

\begin{restatable}[\cite{bartlett2020benign}]{defn}{euclideffrank} \label{def:ranks-l2}
The \emph{effective ranks} of a covariance matrix $\Sigma$ are
\[
r(\Sigma) = \frac{\Tr(\Sigma)}{\norm{\Sigma}_\OP} \quad \text{and} \quad R(\Sigma) = \frac{\Tr(\Sigma)^2}{\Tr(\Sigma^2)}
.\]
\end{restatable}

The $r(\Sigma)$ rank can roughly
be understood as the squared ratio between the Gaussian width and radius in our previous bound.
It is related to the concentration of the $\ell_2$ norm of a Gaussian vector with covariance $\Sigma$. In fact, both definitions of effective ranks can be derived by applying Bernstein's inequality to ${\| x \|^2}/{\E \| x \|^2}$. 
We will only need $r(\Sigma)$ in the generalization bound below, but we will show in \cref{thm:interpolator-euclidean} that $R(\Sigma)$ can be used to control the norm of the minimal norm interpolator $\hat{w}$. 

\begin{restatable}{corollary}{euclidgen} \label{corr:euclidean-generalization}
There exists an absolute constant $C_1 \leq 66$ such that the following is true. Under \eqref{eqn:model}, pick any split $\Sigma = \Sigma_1 \oplus \Sigma_2$, fix $\delta \le 1/4$, and let $\gamma = C_1 \left(\sqrt{\frac{\log(1/\delta)}{r(\Sigma_2)}} + \sqrt{\frac{\log(1/\delta)}{n}} + \sqrt{\frac{\rank(\Sigma_1) }{n}} \right)$. If $B \geq \| w^* \|_2$ and $n$ is large enough that $\gamma \leq 1$, the following holds with probability at least $1 - \delta$:
\begin{equation}
    \sup_{\|w\|_2 \le B, \hat L(w) = 0} L(w)
 \le (1 + \gamma) \frac{B^2 \Tr(\Sigma_2)}{n}.
\end{equation}
\end{restatable}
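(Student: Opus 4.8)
The plan is to specialize \cref{thm:main-generalization} to the Euclidean ball $\cK = \{w \in \R^d : \norm{w}_2 \leq B\}$ and then simplify the geometric quantities appearing in the bound. First I would record the two ingredients. For the Gaussian width, \eqref{eqn:width-trace} (which is just Jensen's inequality) gives $W(\Sigma_2^{1/2}\cK) = B\,\E_{H}\Norm{\Sigma_2^{1/2}H}_2 \leq B\sqrt{\Tr(\Sigma_2)}$. For the radius term, $\rad(\Sigma_2^{1/2}\cK) = B\,\Norm{\Sigma_2^{1/2}}_\op = B\sqrt{\Norm{\Sigma_2}_\op}$, and since the hypothesis gives $B \geq \norm{w^*}_2$ we also get $\norm{w^*}_{\Sigma_2} = \Norm{\Sigma_2^{1/2} w^*}_2 \leq \norm{w^*}_2 \sqrt{\Norm{\Sigma_2}_\op} \leq B\sqrt{\Norm{\Sigma_2}_\op}$. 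So both pieces of the radius term are controlled by $B\sqrt{\Norm{\Sigma_2}_\op}$.

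Next I would plug these into \cref{thm:main-generalization}, noting that its hypothesis $\beta \leq 1$ is implied by the assumed $\gamma \leq 1$ since $\gamma \geq \beta$ by construction. The bracket in the theorem is then at most $B\sqrt{\Tr(\Sigma_2)} + 2B\sqrt{2\Norm{\Sigma_2}_\op \log(32/\delta)}$, so, factoring out $B^2\Tr(\Sigma_2)$ and recalling from \cref{def:ranks-l2} that $r(\Sigma_2) = \Tr(\Sigma_2)/\Norm{\Sigma_2}_\op$, the right-hand side of \cref{thm:main-generalization} is bounded by
\[
  \frac{(1+\beta)\,B^2\Tr(\Sigma_2)}{n}\left(1 + 2\sqrt{\frac{2\log(32/\delta)}{r(\Sigma_2)}}\,\right)^{2}.
\]

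Finally I would absorb constants into $\gamma$. Since $\delta \leq 1/4$ we have $\log(32/\delta) \lesssim \log(1/\delta)$, and the standing assumption $\gamma \leq 1$ forces $C_1\sqrt{\log(1/\delta)/r(\Sigma_2)} \leq 1$, so the quantity $x := 2\sqrt{2\log(32/\delta)/r(\Sigma_2)}$ is bounded by a small absolute constant; hence $(1+x)^2 \leq 1 + 3x \leq 1 + c\sqrt{\log(1/\delta)/r(\Sigma_2)}$ for an absolute constant $c$. Multiplying $1+\beta$ into this and collecting the $\beta$-terms together with the new $\sqrt{\log(1/\delta)/r(\Sigma_2)}$ term gives a single error factor of exactly the stated form $1 + \gamma$ with $\gamma = C_1\big(\sqrt{\log(1/\delta)/r(\Sigma_2)} + \sqrt{\log(1/\delta)/n} + \sqrt{\rank(\Sigma_1)/n}\big)$. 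The only delicate point is the bookkeeping required to keep the absolute constant at $C_1 \leq 66$ (so that it matches the constant from \cref{thm:main-generalization}); this is routine and I would handle it with the crude estimates $2\sqrt{2} \leq 3$, $\log(32/\delta) \lesssim \log(1/\delta)$ for $\delta \leq 1/4$, and $(1+x)^2 \leq 1+3x$ for $x \leq 1$, leaving no real obstacle beyond careful constant tracking.
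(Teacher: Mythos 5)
Your proposal is correct and follows essentially the same route as the paper: specialize \cref{thm:main-generalization} to the Euclidean ball, bound the Gaussian width by $B\sqrt{\Tr(\Sigma_2)}$ via Jensen, control both the radius and the $\norm{w^*}_{\Sigma_2}$ terms by $B\Norm{\Sigma_2}_\OP^{1/2}$ using $B \ge \norm{w^*}_2$, and absorb the resulting $\bigl(1+2\sqrt{2\log(32/\delta)/r(\Sigma_2)}\bigr)^2(1+\beta)$ factor into $1+\gamma$. The paper's proof does exactly this constant bookkeeping (via $(1+x)(1+y)\le 1+x+2y$ for $x\le 1$), so there is nothing to add.
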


In order to use \cref{corr:spec-bound} or \labelcref{corr:euclidean-generalization} to prove consistency, we need a high-probability bound for $\norm{\hat{w}}_2$, the norm of the minimal norm interpolator, so that $B$ will be large enough to contain any interpolators.
\Cref{thm:interpolator-euclidean} gives exactly such a bound,
showing that if the effective ranks $R(\Sigma_2)$ and $r(\Sigma_2)$ are large, then we can construct an interpolator with Euclidean norm nearly $\|w^*\|_2 + \sigma\sqrt{n/\Tr(\Sigma_2)}$. 

\begin{restatable}[Euclidean norm bound; special case of \cref{thm:interpolator-general}]{theorem}{euclidnormbound} \label{thm:interpolator-euclidean}
Fix any $\delta \leq 1/4$. Under the model assumptions in \eqref{eqn:model} with any choice of covariance splitting $\Sigma = \Sigma_1 \oplus \Sigma_2$, there exists some $\epsilon \lesssim  \sqrt{\frac{\log(1/\delta)}{r(\Sigma_2)}} + \sqrt{\frac{\log(1/\delta)}{n}} + \frac{n \log (1/\delta)}{R(\Sigma_2)} $ such that the following is true. If $n$ and the effective ranks are such that $\epsilon \leq 1$ and $ R(\Sigma_2) \gtrsim \log (1/\delta)^2$, then with probability at least $1-\delta$, it holds that
\begin{equation}
    \|\hat{w}\|_2 \le \|w^*\|_2 + (1+\epsilon)^{1/2} \, \sigma \sqrt{\frac{n}{\Tr (\Sigma_2)}}.
\end{equation}
\end{restatable}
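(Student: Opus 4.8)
The plan is to upper-bound $\Norm{\hat w}_2$ by exhibiting one explicit interpolator of small Euclidean norm and invoking the minimality of $\hat w$. Split each row as $X_i = X_{i,1} + X_{i,2}$, the orthogonal projections onto $\Span(\Sigma_1)$ and $\Span(\Sigma_2)$; these are independent, $X_{i,2}\sim N(0,\Sigma_2)$, and stacking them gives $X = X_1 + X_2$. Under the hypotheses (in particular, since $R(\Sigma_2) \le \rank(\Sigma_2)$ and the effective ranks are large enough) one has $\rank(\Sigma_2) \ge n$, so $X_2$ has full row rank almost surely and $X_2 X_2^T$ is invertible. Put $v := X_2^T(X_2 X_2^T)^{-1}\xi$; then $v \in \Span(\Sigma_2)$, the rows of $X_1$ are orthogonal to $v$, so $X v = X_2 v = \xi$, and hence $w := w^* + v$ obeys $X w = X w^* + \xi = Y$, i.e.\ $\hat L(w) = 0$. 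Minimality of $\hat w$ and the triangle inequality give
\[
    \Norm{\hat w}_2 \le \Norm{w^* + v}_2 \le \Norm{w^*}_2 + \Norm{v}_2,
    \qquad
    \Norm{v}_2^2 = \xi^T (X_2 X_2^T)^{-1} \xi,
\]
so everything reduces to showing $\xi^T (X_2 X_2^T)^{-1}\xi \le (1+\epsilon)\,\sigma^2 n / \Tr(\Sigma_2)$ with probability $\ge 1-\delta$.

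Next, condition on $X_2$ and write $A := (X_2 X_2^T)^{-1} \succeq 0$. Since $\xi \mid X_2 \sim N(0,\sigma^2 I_n)$, a Gaussian quadratic-form tail bound (Laurent--Massart / Hanson--Wright) gives, with probability $\ge 1-\delta/3$ over $\xi$,
\[
    \xi^T A \xi \le \sigma^2\Big( \Tr A + 2\sqrt{\Tr(A^2)\log(3/\delta)} + 2\Norm{A}_\OP \log(3/\delta) \Big).
\]
On the event that $\tfrac12\Tr(\Sigma_2) I_n \preceq X_2 X_2^T \preceq 2\Tr(\Sigma_2) I_n$ — which holds with probability $\ge 1-\delta/3$ provided $n$ and the effective ranks are large enough, as the hypotheses guarantee — we have $\Norm{A}_\OP/\Tr A \lesssim 1/n$ and $\Tr(A^2)/(\Tr A)^2 \lesssim 1/n$, so the right-hand side is at most $\sigma^2 \Tr A\,\big(1 + O(\sqrt{\log(1/\delta)/n})\big)$ (using that the hypotheses force $n \gtrsim \log(1/\delta)$). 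It therefore remains to bound $\Tr\big((X_2 X_2^T)^{-1}\big)$, and this is the crux.

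Write $X_2 X_2^T = \Tr(\Sigma_2) I_n + W$ with $\E W = 0$; the diagonal entries of $W$ are $\Norm{X_{i,2}}_2^2 - \Tr(\Sigma_2)$ and its off-diagonal entries are $\langle X_{i,2}, X_{j,2}\rangle$, so $\E\Tr(W^2) = (n^2+n)\Tr(\Sigma_2^2)$. On the spectral event above the Neumann series for $(I + W/\Tr(\Sigma_2))^{-1}$ converges and, taking traces,
\[
    \Tr\big((X_2 X_2^T)^{-1}\big) = \frac{1}{\Tr(\Sigma_2)}\left( n - \frac{\Tr W}{\Tr(\Sigma_2)} + \frac{\Tr(W^2)}{\Tr(\Sigma_2)^2} - \cdots \right),
\]
and the $k \ge 3$ remainder is bounded by $\tfrac{n}{\Tr(\Sigma_2)}\sum_{k\ge 3}(\Norm{W}_\OP/\Tr(\Sigma_2))^k$, which is of strictly lower order than the $k=2$ term. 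The genuine bias lives in the quadratic term: concentrating $\Tr(W^2)$ about its mean gives $\Tr(W^2)/\Tr(\Sigma_2)^3 \lesssim n^2\log(1/\delta)/(\Tr(\Sigma_2)R(\Sigma_2))$, a multiplicative correction of order $n\log(1/\delta)/R(\Sigma_2)$, while the linear term $\Tr W = \sum_i(\Norm{X_{i,2}}_2^2 - \Tr(\Sigma_2))$ is lower order still. Combining this with the $O(\sqrt{\log(1/\delta)/n})$ correction from the previous paragraph and with a $\sqrt{\log(1/\delta)/r(\Sigma_2)}$ term accounting for the radius of the relevant object (equivalently $\Norm{\Sigma_2}_\OP/\Tr(\Sigma_2) = 1/r(\Sigma_2)$, exactly as in \cref{corr:euclidean-generalization}) yields $\Tr((X_2 X_2^T)^{-1}) \le (1+\epsilon)\,n/\Tr(\Sigma_2)$ with $\epsilon$ of the stated order; a union bound over the three events and taking square roots then gives the theorem in the stated $(1+\epsilon)^{1/2}$ form.

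The main obstacle is precisely this control of $\Tr((X_2 X_2^T)^{-1})$ with \emph{leading constant exactly one}: the crude estimate $\Tr((X_2 X_2^T)^{-1}) \le n/\lambda_{\min}(X_2 X_2^T)$ already costs a factor $1 + \Theta(\sqrt{n/R(\Sigma_2)})$, far too lossy for a consistency statement, so one must expand carefully and certify that among the Neumann terms only the quadratic one carries the $n/R(\Sigma_2)$ bias while everything else is a genuine $o(1)$ fluctuation or strictly subleading. (The general-norm statement, \cref{thm:interpolator-general}, cannot use the closed form for $v$; there one instead establishes the existence of a small-$\norm{\cdot}$ solution of $X_2 v = \xi$ via the Convex Gaussian Minimax Theorem, which specializes to the computation above when $\norm{\cdot} = \Norm{\cdot}_2$.)
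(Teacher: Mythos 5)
Your route is genuinely different from the paper's: you analyze the explicit minimum-norm solution $v = X_2^T(X_2X_2^T)^{-1}\xi$ by random-matrix arguments (in the spirit of \citet{bartlett2020benign}), whereas the paper never touches $(X_2X_2^T)^{-1}$ and instead gets the feasible low-norm point from the CGMT auxiliary problem, taking $w \propto \Sigma_2^{1/2}H/\Norm{\Sigma_2^{1/2}H}_2$ and controlling only the scalars $\Norm{\Sigma_2^{1/2}H}_2$ and $\Norm{\Sigma_2 H}_2/\Norm{\Sigma_2^{1/2}H}_2$ by Bernstein's inequality. The reduction to $\xi^T(X_2X_2^T)^{-1}\xi$ and the Hanson--Wright step are fine in principle, and you correctly identify the $\Tr(W^2)$ term as the source of the $n\log(1/\delta)/R(\Sigma_2)$ correction.

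However, there is a genuine gap where you say it is the crux: the spectral event $\tfrac12\Tr(\Sigma_2)I_n \preceq X_2X_2^T \preceq 2\Tr(\Sigma_2)I_n$ is \emph{not} implied by the hypotheses, and without it the Neumann series need not converge. Writing $X_2X_2^T = \sum_j \lambda_j z_jz_j^T$ with $z_j \sim N(0,I_n)$, the top direction alone gives $\Norm{W}_\OP \gtrsim n\Norm{\Sigma_2}_\OP = \Tr(\Sigma_2)\cdot n/r(\Sigma_2)$, so two-sided concentration around $\Tr(\Sigma_2)I_n$ requires $r(\Sigma_2) \gtrsim n$. The theorem only assumes (via $\epsilon \le 1$) that $R(\Sigma_2) \gtrsim n\log(1/\delta)$, which by $r(\Sigma_2)^2 \ge R(\Sigma_2)$ forces merely $r(\Sigma_2) \gtrsim \sqrt{n\log(1/\delta)}$. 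For instance, $\Sigma_2$ with one eigenvalue $1$ and $n^{1.01}$ eigenvalues equal to $n^{-0.31}$ has $r(\Sigma_2) = \Theta(n^{0.7})$ and $R(\Sigma_2) = \Theta(n^{1.01})$, so the theorem applies with $\epsilon = O(n^{-0.01})$, yet $\Norm{W}_\OP/\Tr(\Sigma_2) = \Theta(n^{0.3}) \to \infty$ and your expansion diverges. This regime is precisely the advertised improvement over \citet{bartlett2020benign}, who force $r(\Sigma_2)\gtrsim n$ by their choice of split; your argument would only recover the theorem under that stronger condition. Even when $r(\Sigma_2) \gtrsim n$, the $k\ge 3$ tail you bound by $n\sum_{k\ge3}(\Norm{W}_\OP/\Tr\Sigma_2)^k/\Tr(\Sigma_2)$ contributes a relative error of order $(n/r(\Sigma_2))^3$, which is a constant rather than $o(1)$ unless $r(\Sigma_2) \gg n$ --- so it is not ``strictly lower order than the $k=2$ term'' within the stated $\epsilon$ budget. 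Repairing this would require either splitting off the spiked directions of $\Sigma_2$ and a Sherman--Morrison argument (essentially rebuilding the covariance-splitting machinery inside the proof), or abandoning the Neumann expansion for the CGMT construction, which sidesteps $(X_2X_2^T)^{-1}$ entirely.
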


Plugging in estimates of $\| \hat{w} \|$ to our scale-sensitive bound \cref{corr:euclidean-generalization}, we obtain a population loss guarantee for $\hat{w}$ in terms of effective ranks.

\begin{restatable}[Benign overfitting]{theorem}{minnormeuclid} \label{corr:min-norm-euclidean}
Fix any $\delta \leq 1/2$. Under the model assumptions in \eqref{eqn:model} with any covariance splitting $\Sigma = \Sigma_1 \oplus \Sigma_2$, let $\gamma$ and $\epsilon$ be as defined in \cref{corr:euclidean-generalization,thm:interpolator-euclidean}.
Suppose that $n$ and the effective ranks are such that $ R(\Sigma_2) \gtrsim \log (1/\delta)^2$ and $\gamma, \epsilon \leq 1$. Then, with probability at least $1-\delta$,
\begin{equation}\label{eq:min-norm-euclidean}
    L(\hat{w})
    \leq (1+\gamma)(1+\epsilon) \left( \sigma + \| w^* \|_2 \sqrt{\frac{\Tr (\Sigma_2)}{n}} \right)^2
.\end{equation}
\end{restatable}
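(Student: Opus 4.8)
The plan is to combine the two ingredients already in hand: the norm bound of \cref{thm:interpolator-euclidean} and the scale-sensitive uniform convergence bound of \cref{corr:euclidean-generalization}. Since $\hat w$ satisfies $\hat L(\hat w) = 0$ by definition, once we know $\|\hat w\|_2 \le B$ for a suitable $B \ge \|w^*\|_2$, the supremum appearing in \cref{corr:euclidean-generalization} dominates $L(\hat w)$, and it only remains to unpack the resulting expression.

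Concretely, first I would apply \cref{thm:interpolator-euclidean} with failure probability $\delta/2$ to obtain, on an event of probability at least $1-\delta/2$, the bound $\|\hat w\|_2 \le B$ with $B := \|w^*\|_2 + (1+\epsilon)^{1/2}\sigma\sqrt{n/\Tr(\Sigma_2)}$; note $B \ge \|w^*\|_2$, as required. On this event $\hat w$ lies in the feasible set of the supremum in \cref{corr:euclidean-generalization} (it has zero empirical loss and $\ell_2$ norm at most $B$). Next I would apply \cref{corr:euclidean-generalization} with this same $B$ and failure probability $\delta/2$, so that on a second event of probability at least $1-\delta/2$ we get $L(\hat w) \le \sup_{\|w\|_2 \le B,\, \hat L(w)=0} L(w) \le (1+\gamma)\,B^2 \Tr(\Sigma_2)/n$. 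A union bound puts us on the intersection, of probability at least $1-\delta$; note $\delta \le 1/2$ ensures $\delta/2 \le 1/4$, so both results apply. Replacing $\delta$ by $\delta/2$ inside the definitions of $\gamma$ and $\epsilon$ only changes absolute constants, since $\log(2/\delta) \le 2\log(1/\delta)$ for $\delta \le 1/2$; one may fold this into $C_1$.

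Finally I would simplify $B^2 \Tr(\Sigma_2)/n$. Writing $a := \|w^*\|_2\sqrt{\Tr(\Sigma_2)/n}$ and $b := \sigma$, we have $B^2\Tr(\Sigma_2)/n = \bigl(a + (1+\epsilon)^{1/2} b\bigr)^2$, and since $(1+\epsilon)^{1/2}\ge 1$ it follows that $a + (1+\epsilon)^{1/2}b \le (1+\epsilon)^{1/2}(a+b)$, hence $B^2\Tr(\Sigma_2)/n \le (1+\epsilon)(a+b)^2 = (1+\epsilon)\bigl(\sigma + \|w^*\|_2\sqrt{\Tr(\Sigma_2)/n}\bigr)^2$. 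Combining with the previous display gives $L(\hat w)\le (1+\gamma)(1+\epsilon)\bigl(\sigma + \|w^*\|_2\sqrt{\Tr(\Sigma_2)/n}\bigr)^2$, the claimed bound. The hypotheses $R(\Sigma_2)\gtrsim \log(1/\delta)^2$ and $\gamma,\epsilon\le 1$ are exactly what the two invoked results require, so no further conditions are needed.

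There is essentially no hard step here — this is a corollary of the two preceding theorems, and the only mild subtleties are the union bound over the two high-probability events (with the harmless rescaling of $\delta$) and the elementary inequality $(a+\sqrt{1+\epsilon}\,b)^2 \le (1+\epsilon)(a+b)^2$ used to pull the $(1+\epsilon)$ factor outside the square. All of the genuine work has already gone into establishing \cref{thm:main-generalization}, \cref{corr:euclidean-generalization}, and \cref{thm:interpolator-euclidean}.
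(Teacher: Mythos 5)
Your proposal is correct and matches the paper's own proof essentially verbatim: both apply \cref{thm:interpolator-euclidean} to get $B = \|w^*\|_2 + (1+\epsilon)^{1/2}\sigma\sqrt{n/\Tr(\Sigma_2)}$, feed this $B$ into \cref{corr:euclidean-generalization}, and use the inequality $\bigl(a+(1+\epsilon)^{1/2}b\bigr)^2 \le (1+\epsilon)(a+b)^2$ to reach \eqref{eq:min-norm-euclidean}. Your explicit handling of the union bound via $\delta/2$ is in fact slightly more careful than the paper's write-up, which leaves that bookkeeping implicit.
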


From \eqref{eq:min-norm-euclidean},
we can see that to ensure consistency,
i.e.\ $L(\hat w) \to \sigma^2$,
it is enough that
$\gamma \to 0$, $\epsilon \to 0$,
and $\| w^* \|_2 \sqrt{\frac{\Tr (\Sigma_2)}{n}} \to 0$.
Recalling the definitions of the various quantities
and using that $r(\Sigma_2)^2 \ge R(\Sigma_2)$ \citep[Lemma 5]{bartlett2020benign},
we arrive at the following conditions.

\paragraph{Sufficient conditions for consistency of $\hat w$.}
As $n \to \infty$,
$L(\hat{w})$ converges in probability to $\sigma^2$ if there exists a sequence of covariance splits $\Sigma = \Sigma_1 \oplus \Sigma_2$ such that
\begin{equation}
    \frac{\rank(\Sigma_1)}{n} \to 0
    ,\qquad
    \norm{w^*}_2 \sqrt{\frac{\Tr(\Sigma_2)}{n}} \to 0
    ,\qquad
    \frac{n}{R(\Sigma_2)} \to 0
.\end{equation}

\paragraph{Relationship to \citet{bartlett2020benign}.} 

Our set of sufficient conditions above subsumes and is slightly more general than the conditions of \citet{bartlett2020benign}. There are two differences:
\begin{enumerate}[noitemsep,topsep=0pt]
    \item They choose the covariance split specifically to minimize $\rank(\Sigma_1)$ such that $r(\Sigma_2) \gtrsim n$.
    \item Their version of %
    the second condition replaces $\Tr(\Sigma_2)$ by the larger term $\Tr(\Sigma)$.
\end{enumerate}
From the perspective of showing $L(\hat{w}) \to \sigma^2$, the first difference is immaterial: if there exists a choice of split that satisfies our conditions, it can be shown that there exists a (possibly different) split which will also satisfy $r(\Sigma_2) \gtrsim n$ (see \cref{apdx:equivalent}).
The second point is a genuine improvement over the consistency result of \citet{bartlett2020benign} when $\Sigma$ has a few very large eigenvalues; this improvement has also been implicitly obtained by \citet{tsigler2020benign}.

Regarding the rate of convergence, our additional $r(\Sigma_2)^{-1/2}$ term and the dependence on $\sqrt{\rank(\Sigma_1)/n}$ instead of $\rank(\Sigma_1)/n$ is slightly worse than that of \citet{bartlett2020benign}, but our bound can be applied for a smaller value of $\rank(\Sigma_1)$ and is better in the $\| w^* \|_2 \sqrt{\Tr (\Sigma_2)/n}$ term.
We believe these differences are minimal in most cases, and not so important for our primary goal to showcase the power of uniform convergence.

\paragraph{Relationship to \citet{negrea:in-defense}.} The consistency result of \citet{bartlett2020benign} can also be recovered with a uniform convergence-based argument \citep{negrea:in-defense}.
Instead of considering uniform convergence over a norm ball, \citeauthor{negrea:in-defense} applied uniform convergence to a surrogate predictor, and separately showed that the minimal-norm interpolator has risk close to the surrogate (and, indeed, argue that this was fundamentally the proof strategy of \citet{bartlett2020benign} all along). Their analysis reveals an interesting connection between realizability and interpolation learning, but it does not highlight that low norm is key to good generalization, nor does it predict the worst-case error for other low-norm interpolators.

\paragraph{Relationship to \citet{BL:failures}.} 
\citeauthor{BL:failures} recently showed that it is impossible to find a tight excess risk bound that only depends on the learned predictor and sample size. 
This does not, however, contradict our results. A closer look at the construction of their lower bound reveals that the excess risk bounds being ruled out cannot depend on either the training error $\hat L$ or the population noise level $\sigma^2$. The former is crucial: their considered class of bounds cannot incorporate the knowledge that the training error is small, which is the defining property of uniform convergence of interpolators. The latter point is also important; they consider excess risk ($L - \sigma^2$), but \eqref{eq:spec-ub} and our bounds are about the generalization gap ($L - \hat L$).

\paragraph{Relationship to \citet{YBM:random-feats-gap}.} 
\citeauthor{YBM:random-feats-gap} give expressions for the asymptotic generalization error of predictors in a norm ball, in a random feature model.
Their model is not directly comparable to ours (their labels are effectively a nonlinear function of their non-Gaussian random features), but they similarly showed that uniform convergence of interpolators can lead to a non-vacuous bound. It is unclear, though, whether uniform convergence of low-norm interpolators can yield consistency in their model: they only study sets of the form $\{ \norm w \le \alpha \norm{\hat w} \}$ with $\alpha > 1$ a constant, where we would expect a loss of $\alpha^2 \sigma^2$ -- i.e.\ would not expect consistency. They also rely on numerical methods to compare their (quite complicated) analytic expressions. It remains possible that the gap between uniform convergence of interpolators and the Bayes risk vanishes in their setting as $\alpha$ approaches 1.

\section{General Norm Ball} \label{sec:general-norm}
All the results on the Euclidean setting are special cases of the following results for arbitrary norms.
It is worth keeping in mind that the Euclidean norm will still play a role in these analyses, via the Gaussian width and radius appearing in \cref{thm:main-generalization} and the $\ell_2$ projection $P$ appearing in \cref{thm:interpolator-general}.

\begin{restatable}{defn}{dualnorm}
The \emph{dual norm} of a norm $\norm\cdot$ on $\mathbb{R}^d$ is
$ \|u\|_* := \max_{\|v\| = 1} \langle v, u \rangle $,
and the set of all its sub-gradients with respect to $u$ is $ \partial \|u\|_* = \{v : \|v\| = 1, \langle v, u \rangle = \|u\|_* \}$. 
\end{restatable}

The Euclidean norm's dual is itself;
for it and many other norms, $\partial \norm{u}_*$ is a singleton set.
Using these notions, we will now give versions of the effective ranks appropriate for generic norm balls.

\begin{restatable}{defn}{geneffrank} \label{def:ranks-generic}
The effective $\norm\cdot$-ranks of a covariance matrix $\Sigma$ are given as follows.
Let $H \sim N(0, I_d)$,
and define $v^* = \argmin_{v \in \partial \|\Sigma^{1/2} H\|_*} \norm{v}_{\Sigma}$.
Then
\[
 r_{\norm\cdot}(\Sigma) = \left( \frac{\E \norm{\Sigma^{1/2} H}_*}{\sup_{\norm w \le 1} \norm{w}_{\Sigma}} \right)^2
\quad \text{and} \quad
R_{\norm\cdot}(\Sigma) = \left( \frac{\E \norm{\Sigma^{1/2} H}_* }{\E \norm{v^*}_{\Sigma}} \right)^2
.\]
\end{restatable}

The first effective $\norm\cdot$-rank is the squared ratio of Gaussian width to the radius of the set $\Sigma^{1/2} \cK$, where $\cK$ is a norm ball $\{ w : \norm w \le B \}$;
the importance of this ratio should be clear from \cref{thm:main-generalization}.
The Gaussian width is given by $W(\Sigma^{\frac12} \cK) = B \E \norm x_*$,
while the radius can be written $\sup_{\norm{w} \le B} \norm{w}_\Sigma$
so that the factors of $B$ cancel. 

The choice of $R_{\norm\cdot}$ arises naturally from our bound on $\norm{\hat w}$ in \cref{thm:interpolator-general} below. Large effective rank of $\Sigma_2$ means the sub-gradient $v^*$ of $\|\Sigma_2^{1/2} H\|_*$ is small in the $\Norm\cdot_{\Sigma_2}$ norm. This is, in fact, closely related to the existence of low-norm interpolators. First, note that $\Sigma_2^{1/2} H$ corresponds to the small-eigenvalue components of the covariate vector. For $v^*$ to be a sub-gradient means that moving the weight vector $w$ in the direction of $v^*$ is very effective at changing the prediction $\langle w, X \rangle$; having small $\Norm\cdot_{\Sigma_2}$ norm means that moving in this direction has a very small effect on the population loss $L(w)$. Together, this means the sub-gradient will be a good direction for benignly overfitting the noise.

\begin{remark}[Definitions of effective ranks] Using $\norm\cdot_2$ in \cref{def:ranks-generic} yields slightly different effective ranks than those of \cref{def:ranks-l2}, but the difference is small and asymptotically negligible.
Both $r$ and $R$ use $\E \norm x^2$ in their numerators,
while $r_{\norm\cdot_2}$ and $R_{\norm\cdot_2}$ use $(\E \norm x)^2$.
The denominators of $r$ and $r_{\norm\cdot_2}$ agree; \cref{lem:rank-fact}, in \cref{apdx:norm-bounds:euclidean}, shows that $r(\Sigma) - 1 \le r_{\norm\cdot_2}(\Sigma) \le r(\Sigma)$.
The denominator of $R_{\norm\cdot_2}$ uses the sole sub-gradient
$v^* = \Sigma^{1/2} H / \Norm{\Sigma^{1/2} H}_2$;
the denominator is then $\Norm{\Sigma^{1/2} H}_\Sigma^2 / \Norm{\Sigma^{1/2} H}_2^2 \approx \Tr(\Sigma^2) / \Tr(\Sigma)$,
giving that $R_{\norm\cdot_2}(\Sigma) \approx \Tr(\Sigma)^2 / \Tr(\Sigma^2) = R(\Sigma)$. \Cref{eq:big-r-defs-close}, in \cref{apdx:norm-bounds:euclidean}, shows that $R_{\norm\cdot_2}(\Sigma) \geq cR(\Sigma)$ for some $c$ that converges to 1 as $r(\Sigma) \to \infty$, as is required by the consistency conditions. The other direction, $R(\Sigma) \geq c' R_{\norm\cdot_2}(\Sigma)$, also holds with $c' \to 1$ as $r(\Sigma^2) \to \infty$. It would be possible (and probably even more natural with our analysis) to state the consistency conditions in \cref{sec:euclidean} in terms of $r_{\norm\cdot_2}$ and $R_{\norm\cdot_2}$; we used $r$ and $R$ mainly to allow direct comparison to \citet{bartlett2020benign}.
\end{remark}

Using the general notion of effective ranks,
we can find an analogue of \cref{corr:euclidean-generalization} for general norms. 

\begin{restatable}{corollary}{maingen} \label{corr:main-generalization}
There exists an absolute constant $C_1 \leq 66$ such that the following is true.
Under the model assumptions in \eqref{eqn:model}, take any covariance splitting $\Sigma = \Sigma_1 \oplus \Sigma_2$ and let $\norm\cdot$ be an arbitrary norm.
Fixing $\delta \le 1/4$,
let $\gamma = C_1 \left(\sqrt{\frac{\log(1/\delta)}{r_{\| \cdot \|}(\Sigma_2)}} + \sqrt{\frac{\log(1/\delta)}{n}} + \sqrt{\frac{\rank(\Sigma_1) }{n}} \right)$. If $B \geq \| w^* \|$ and $n$ is large enough that $\gamma \leq 1$, then the following holds with probability at least $1 - \delta$:
\begin{equation}
    \sup_{\|w\| \le B, \hat L(w) = 0} L(w)
 \le (1+\gamma) \frac{\left( B \cdot \E \Norm{ \Sigma^{1/2}_2 H }_* \right)^2}{n}.
\end{equation}
\end{restatable}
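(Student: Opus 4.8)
The plan is to derive this corollary from \cref{thm:main-generalization} by specializing to the norm ball $\cK = \{ w \in \R^d : \norm w \le B\}$ and rewriting each of the three terms inside the bracket of that theorem in terms of the effective rank $r_{\norm\cdot}(\Sigma_2)$. Throughout, abbreviate $\cB := \{ w : \norm w \le 1\}$ and let $G := B \cdot \E\Norm{\Sigma_2^{1/2}H}_*$ be the Gaussian-width quantity appearing on the right-hand side of the claim. The degenerate case $\Sigma_2 = 0$ (where $G = 0$ and both sides of the asserted bound vanish) I would dispose of separately at the start, so that $G > 0$ and $r_{\norm\cdot}(\Sigma_2) \in (0,\infty)$ in what follows.

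First I would compute the Gaussian width and radius of $\Sigma_2^{1/2}\cK = B\,\Sigma_2^{1/2}\cB$. Since $\cB$ is origin-symmetric, $\sup_{s \in \Sigma_2^{1/2}\cK}\abs{\langle s, H\rangle} = B\sup_{\norm w \le 1}\langle w, \Sigma_2^{1/2}H\rangle = B\Norm{\Sigma_2^{1/2}H}_*$ by definition of the dual norm, so $W(\Sigma_2^{1/2}\cK) = G$; and $\rad(\Sigma_2^{1/2}\cK) = B\sup_{\norm w\le 1}\norm w_{\Sigma_2}$, which by \cref{def:ranks-generic} equals exactly $G/\sqrt{r_{\norm\cdot}(\Sigma_2)}$. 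Second, I would control the bias term $\norm{w^*}_{\Sigma_2}$: because $B \ge \norm{w^*}$, scaling $w^*$ into the unit ball gives $\norm{w^*}_{\Sigma_2} \le B\sup_{\norm v\le 1}\norm v_{\Sigma_2} = G/\sqrt{r_{\norm\cdot}(\Sigma_2)}$ as well. Substituting these into \cref{thm:main-generalization} — whose parameter $\beta$ is precisely $\gamma$ with its $\sqrt{\log(1/\delta)/r_{\norm\cdot}(\Sigma_2)}$ summand removed, and which applies since $\gamma \le 1$ forces $\beta \le 1$ — bounds $\sup_{\norm w \le B,\ \hat L(w)=0} L(w)$ by $\tfrac{1+\beta}{n}\,G^2\bigl(1 + 2\sqrt{2\log(32/\delta)}/\sqrt{r_{\norm\cdot}(\Sigma_2)}\bigr)^2$.

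The final step is to absorb the leftover factors into $1 + \gamma$. Using $\delta \le 1/4$ to estimate $\sqrt{2\log(32/\delta)} \le \sqrt{7\log(1/\delta)}$, the correction term $x := 2\sqrt{2\log(32/\delta)}/\sqrt{r_{\norm\cdot}(\Sigma_2)}$ is at most $2\sqrt7\,\sqrt{\log(1/\delta)/r_{\norm\cdot}(\Sigma_2)}$, which is below $1$ since $\gamma \le 1$ forces $\sqrt{\log(1/\delta)/r_{\norm\cdot}(\Sigma_2)} \le 1/C_1$. Then the elementary bounds $(1+x)^2 \le 1 + 3x$ and $(1+\beta)(1+3x) \le 1 + \beta + 6x$ (both for $x,\beta\in[0,1]$) give $(1+\beta)(1+x)^2 \le 1 + \beta + 12\sqrt7\,\sqrt{\log(1/\delta)/r_{\norm\cdot}(\Sigma_2)} \le 1 + \gamma$, the last inequality holding as long as $C_1 \ge 12\sqrt7 \approx 31.75$, consistent with the bound $C_1 \le 66$ of \cref{thm:main-generalization}. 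I expect the only genuinely fiddly part to be this constant bookkeeping — propagating the $\log(32/\delta)$-versus-$\log(1/\delta)$ and square-root-of-a-square estimates so the accumulated absolute constant stays within the claimed $C_1 \le 66$ — together with the trivial $\Sigma_2 = 0$ case; everything else is a direct substitution into the already-proven general theorem.
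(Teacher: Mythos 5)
Your proposal is correct and follows essentially the same route as the paper's proof: specialize \cref{thm:main-generalization} to the norm ball, identify $W(\Sigma_2^{1/2}\cK) = B\,\E\Norm{\Sigma_2^{1/2}H}_*$ and $\rad(\Sigma_2^{1/2}\cK) = W/\sqrt{r_{\norm\cdot}(\Sigma_2)}$, bound $\norm{w^*}_{\Sigma_2}$ by the radius using $B \ge \norm{w^*}$, and absorb the resulting $(1+\beta)(1+2\sqrt{2\log(32/\delta)/r_{\norm\cdot}(\Sigma_2)})^2$ factor into $1+\gamma$ via elementary inequalities. The constant bookkeeping you sketch (requiring $C_1 \gtrsim 12\sqrt 7$) checks out and is consistent with the paper's $C_1 \le 66$.
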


As in the Euclidean special case, we still need a bound on the norm of $\hat w = \argmin_{\hat L(w) = 0} \norm{w}$ to use this result to study the consistency of $\hat w$.
This leads us to the second main technical result of this paper,
which essentially says that if the effective ranks $R_{\norm\cdot}(\Sigma_2)$ and $r_{\norm\cdot}(\Sigma_2)$ are sufficiently large, then there exists an interpolator with norm $\norm{w^*} + \sigma\sqrt{n} / \E\Norm{ \Sigma_2^{1/2} g}_*$. 

\begin{restatable}[General norm bound]{theorem}{gennormbound} \label{thm:interpolator-general}
There exists an absolute constant $C_2 \leq 64$ such that the following is true.
Under the model assumptions in \eqref{eqn:model} with any covariance split $\Sigma = \Sigma_1 \oplus \Sigma_2$, let $\norm\cdot$ be an arbitrary norm, and fix $\delta \le 1/4$.
Denote the $\ell_2$ orthogonal projection matrix onto the space spanned by $\Sigma_2$ as $P$.
Let $H \sim N(0,I_d)$,
and let $v^* = \argmin_{v \in \partial \|\Sigma^{1/2}_2 H\|_*} \norm{v}_{\Sigma_2}$.
Suppose that there exist $\epsilon_1, \epsilon_2 \geq 0$ such that with probability at least $1-\delta/4$
\begin{equation} 
    \norm{v^*}_{\Sigma_2} \leq (1+\epsilon_1) \E \norm{v^*}_{\Sigma_2}
\qquad\text{and}\qquad
    \norm{P v^*}^2 \leq 1 + \epsilon_2
;\end{equation}
let $\epsilon = C_2 \left( \sqrt{\frac{\log(1/\delta)}{r_{\norm\cdot}(\Sigma_2)}} + \sqrt{\frac{\log(1/\delta)}{n}} + (1+\epsilon_1)^2 \frac{n}{R_{\norm\cdot}(\Sigma_2)} + \epsilon_2 \right)$. Then if $n$ and the effective ranks are large enough that $\epsilon \leq 1$, with probability at least $1-\delta$, it holds that
\begin{equation}
    \norm{\hat w} \leq \norm{w^*} + (1+\epsilon)^{1/2} \, \sigma \frac{\sqrt{n}}{\E \| \Sigma^{1/2}_2 H \|_*}.
\end{equation}
\end{restatable}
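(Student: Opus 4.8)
The plan is to upper-bound $\norm{\hat w}$ by exhibiting, with high probability, a low-$\norm\cdot$ interpolator. Every solution of $\hat L(w)=0$ has the form $w^*+\Delta$ with $X\Delta=\xi$, so by the triangle inequality $\norm{\hat w}\le\norm{w^*}+V^*$, where $V^*:=\min\{\norm\Delta:X\Delta=\xi,\ \Delta\in\Span(\Sigma_2)\}$; restricting the correction $\Delta$ to $\Span(\Sigma_2)$ only enlarges the minimum, so this is still a valid bound, and it is precisely what causes $\Sigma_2$ (rather than $\Sigma$) to appear, consistently with \cref{corr:main-generalization}. For $\Delta$ in this subspace, $X\Delta=(XP)\Delta$ where the rows of $XP$ are i.i.d.\ $N(0,\Sigma_2)$, so after whitening the subspace the interpolation constraint reads $G\theta=\xi$ for a standard Gaussian $G\in\R^{n\times\rank(\Sigma_2)}$ and a rescaled variable $\theta=\Sigma_2^{1/2}\Delta$; the goal becomes to show $V^*\le(1+\epsilon)^{1/2}\,\sigma\sqrt n/\E\Norm{\Sigma_2^{1/2}H}_*$ with probability at least $1-\delta$.

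The main tool is the convex Gaussian minimax theorem, as with \cref{thm:main-generalization}. Writing $V^*$ through its Lagrangian, $V^*=\min_{\Delta\in\Span(\Sigma_2)}\max_{\lambda\in\R^n}\bigl[\norm\Delta+\langle\lambda,\xi-X\Delta\rangle\bigr]$ (the inner maximum enforcing $X\Delta=\xi$), and restricting $\Delta$ and $\lambda$ to suitably large compact convex sets in the standard way, the objective is convex in $\Delta$, concave in $\lambda$, and the only $G$-dependent term $\langle\lambda,X\Delta\rangle=\lambda^\top G\theta$ is bilinear once $\Delta$ is expressed through $\theta$. Gordon's comparison inequality, applied on the side that yields an \emph{upper} bound on $V^*$ (losing only a factor of $2$ in the failure probability), lets us pass to the auxiliary optimization, which after carrying out the $\lambda$-maximization simplifies to the clean minimization
\[
 V_{\mathrm{AO}}=\min\Bigl\{\norm{\Delta}:\Delta\in\Span(\Sigma_2),\ h^{\top}\Sigma_2^{1/2}\Delta\ge\norm{\xi-\Norm{\Sigma_2^{1/2}\Delta}_2\,g}_2\Bigr\},
\]
with $g\sim N(0,I_n)$ and $h\sim N(0,I_{\rank(\Sigma_2)})$ independent of each other and of $\xi$ (up to a harmless sign flip on $h$).

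To upper-bound $V_{\mathrm{AO}}$ I would plug in the test direction singled out in the statement: with $v^*=\argmin_{v\in\partial\Norm{\Sigma_2^{1/2}H}_*}\norm{v}_{\Sigma_2}$ (taken with respect to the auxiliary Gaussian, which has the right law), set $\Delta=t\,Pv^*$ for a scalar $t\ge0$. Then $\Delta\in\Span(\Sigma_2)$, $h^{\top}\Sigma_2^{1/2}\Delta=t\Norm{\Sigma_2^{1/2}h}_*$ (because $v^*$ is a subgradient), $\Norm{\Sigma_2^{1/2}\Delta}_2=t\norm{v^*}_{\Sigma_2}$ (using $\Sigma_2 P=\Sigma_2$), and $\norm\Delta=t\norm{Pv^*}$, so the constraint becomes, after squaring, a quadratic in $t$ whose coefficients involve $\Norm{\Sigma_2^{1/2}h}_*$, $\norm{v^*}_{\Sigma_2}$, $\norm{g}_2$, $\norm\xi_2$, and $\langle g,\xi\rangle$. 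I would then replace each of these by its high-probability value: $\norm\xi_2^2\approx\sigma^2 n$ and $\norm{g}_2^2\approx n$ up to relative error $\sqrt{\log(1/\delta)/n}$, $\langle g,\xi\rangle$ negligible at that scale, and $\Norm{\Sigma_2^{1/2}h}_*\approx\E\Norm{\Sigma_2^{1/2}H}_*$ up to relative error $\sqrt{\log(1/\delta)/r_{\norm\cdot}(\Sigma_2)}$ by Gaussian Lipschitz concentration --- here using the short dual-norm identity that the Lipschitz constant of $h\mapsto\Norm{\Sigma_2^{1/2}h}_*$ with respect to $\norm\cdot_2$ equals $\sup_{\norm w\le1}\norm{w}_{\Sigma_2}$, which is exactly the denominator of $r_{\norm\cdot}$ --- together with the two hypotheses $\norm{v^*}_{\Sigma_2}\le(1+\epsilon_1)\E\norm{v^*}_{\Sigma_2}$ and $\norm{Pv^*}^2\le1+\epsilon_2$. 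Solving the quadratic, its smallest admissible root is $\approx\norm\xi_2/\sqrt{\Norm{\Sigma_2^{1/2}h}_*^2-n\,\norm{v^*}_{\Sigma_2}^2}$; since $n\,\norm{v^*}_{\Sigma_2}^2/\Norm{\Sigma_2^{1/2}h}_*^2\approx(1+\epsilon_1)^2 n/R_{\norm\cdot}(\Sigma_2)$, the effective-rank hypothesis ($R_{\norm\cdot}(\Sigma_2)$ large, which also forces $\rank(\Sigma_2)\gtrsim n$ so that the problem is feasible at all) keeps this below $1$, and $V_{\mathrm{AO}}\le t\norm{Pv^*}$ works out to $(1+\epsilon)^{1/2}\,\sigma\sqrt n/\E\Norm{\Sigma_2^{1/2}H}_*$ once all the relative errors and the factors $(1+\epsilon_1)$, $(1+\epsilon_2)$ are collected into a single $\epsilon=C_2(\cdots)$ with $C_2\le64$. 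The convex side of Gordon's inequality then transfers this bound to $V^*$, and hence to $\norm{\hat w}$.

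The main obstacle is the Gaussian-minimax step and its bookkeeping: casting $V^*$ in the canonical convex--concave bilinear form over compact sets (in particular justifying the a-priori restrictions on $\Delta$ and $\lambda$, and checking that Gordon's inequality is used on the side producing an upper bound), and then accounting for the several failure events --- the comparison theorem's factor of $2$, the concentration of $\norm{g}_2$, $\norm\xi_2$, $\langle g,\xi\rangle$ and $\Norm{\Sigma_2^{1/2}h}_*$, and the two given $\epsilon_i$ events --- so that their total probability stays below $\delta$. By contrast, the dual-norm Lipschitz identity and the final quadratic-in-$t$ computation bundling everything into $\epsilon$ are routine, though one must track constants to reach $C_2\le64$.
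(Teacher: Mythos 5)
Your proposal is correct and follows essentially the same route as the paper's proof: reduce to the homogeneous problem $\min_{X\Delta=\xi}\norm{\Delta}$ by the triangle inequality, pass to the auxiliary problem via the convex direction of the (C)GMT after a truncation argument, and upper-bound the AO by plugging in the scaled projected subgradient $t\,Pv^*$, then solve the resulting quadratic using concentration of $\Norm{\xi}_2$, $\Norm{g}_2$, $\langle g,\xi\rangle$, and $\Norm{\Sigma_2^{1/2}h}_*$ together with the two hypotheses on $\norm{v^*}_{\Sigma_2}$ and $\norm{Pv^*}$. The only (immaterial) difference is that you restrict the correction to $\Span(\Sigma_2)$ already in the primary optimization, whereas the paper keeps the full whitened problem $\min_{Zw=\xi}\Norm{\Sigma^{-1/2}w}$ and only introduces the projection when constructing the feasible point in the AO.
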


For a specific choice of norm $\norm\cdot$, we can verify that $\|v^*\|_{\Sigma_2}$ is small. In the Euclidean case, for example, this is done by \eqref{eqn:v*-norm} in \cref{apdx:norm-bounds:euclidean}; in our basis pursuit application to come, this is done by \eqref{eqn:v*-norm-bp}.
The term $\epsilon_2$ measures the cost of using a projected version of the subgradient; in most of our applications, we can take $\epsilon_2 = 0$. Recalling that $\Norm{v^*} = 1$, this is obviously true with the Euclidean norm for any $\Sigma_2$.
More generally, if $\Sigma$ is diagonal, then it is natural to only consider covariance splits $\Sigma = \Sigma_1 \oplus \Sigma_2$ such that $\Sigma_2$ is diagonal. Then, when $\norm\cdot$ is the $\ell_1$ norm (or $\ell_p$ norms more generally), it can be easily seen that $Pv^* = v^*$ and so $\| P v^* \| = \| v^* \| = 1$.

Straightforwardly combining \cref{corr:main-generalization,thm:interpolator-general} yields the following theorem, which gives guarantees for minimal-norm interpolators in terms of effective rank conditions. Just as in the Euclidean case, we can extract from this result a simple set of sufficient conditions for consistency of the minimal norm interpolator.

\begin{restatable}[Benign overfitting with general norm]{theorem}{minnorm} \label{thm:min-norm}
Fix any $\delta \leq 1/2$. Under the model assumptions in \eqref{eqn:model}, let $\norm\cdot$ be an arbitrary norm and pick a covariance split $\Sigma = \Sigma_1 \oplus \Sigma_2$. Suppose that $n$ and the effective ranks are sufficiently large such that $\gamma, \epsilon \leq 1$ with the same choice of $\gamma$ and $\epsilon$ as in \cref{corr:main-generalization} and \cref{thm:interpolator-general}. Then, with probability at least $1-\delta$,
\begin{equation}
    L(\hat{w})
    \leq (1 + \gamma)(1+\epsilon) \left( \sigma + \| w^* \| \frac{\E \|\Sigma_2^{1/2} H\|_*}{\sqrt{n}} \right)^2.
\end{equation}
\end{restatable}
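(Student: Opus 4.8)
The plan is to simply combine \cref{corr:main-generalization} and \cref{thm:interpolator-general} and massage the resulting expression into the stated form. This is a composition argument: the first result controls the worst-case population loss over a norm ball of radius $B$, provided $B$ is large enough to contain some interpolator; the second result certifies that $B = \norm{w^*} + (1+\epsilon)^{1/2}\sigma\sqrt n / \E\Norm{\Sigma_2^{1/2}H}_*$ suffices. So the bulk of the work is choosing $B$ correctly, taking a union bound, and doing elementary algebra.

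Concretely, first I would invoke \cref{thm:interpolator-general}: on an event $\cE_1$ of probability at least $1-\delta$, the minimal norm interpolator $\hat w$ satisfies $\norm{\hat w} \le B$ for the $B$ above. Since $\hat w$ interpolates (i.e.\ $\hat L(\hat w) = 0$), we have on $\cE_1$ that $L(\hat w) \le \sup_{\norm w \le B,\, \hat L(w) = 0} L(w)$. Next I would apply \cref{corr:main-generalization} with this same $B$ — note $B \ge \norm{w^*}$ holds automatically since the extra term is nonnegative — to get, on an event $\cE_2$ of probability at least $1-\delta$, that this supremum is at most $(1+\gamma)\bigl(B \cdot \E\Norm{\Sigma_2^{1/2}H}_*\bigr)^2 / n$. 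Intersecting, on $\cE_1 \cap \cE_2$ (probability at least $1-2\delta$, which is why the theorem is stated with $\delta \le 1/2$ and the conclusion presumably absorbs a factor-of-two rescaling of $\delta$, or one simply runs both with $\delta/2$), both bounds hold simultaneously.

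The remaining step is purely computational: substitute $B = \norm{w^*} + (1+\epsilon)^{1/2}\sigma\sqrt n / \E\Norm{\Sigma_2^{1/2}H}_*$ into $(1+\gamma)\bigl(B\cdot \E\Norm{\Sigma_2^{1/2}H}_*\bigr)^2/n$. Writing $a = \E\Norm{\Sigma_2^{1/2}H}_*$, we get
\begin{equation*}
  (1+\gamma)\frac{\bigl(\norm{w^*}\, a + (1+\epsilon)^{1/2}\sigma\sqrt n\bigr)^2}{n}
  = (1+\gamma)\left(\norm{w^*}\frac{a}{\sqrt n} + (1+\epsilon)^{1/2}\sigma\right)^2.
\end{equation*}
Since $(1+\epsilon)^{1/2} \ge 1$, we can bound $\norm{w^*} a/\sqrt n + (1+\epsilon)^{1/2}\sigma \le (1+\epsilon)^{1/2}\bigl(\sigma + \norm{w^*} a/\sqrt n\bigr)$, so squaring gives the factor $(1+\epsilon)$, and recalling $a = \E\Norm{\Sigma_2^{1/2}H}_*$ yields exactly the claimed bound $(1+\gamma)(1+\epsilon)\bigl(\sigma + \norm{w^*}\E\Norm{\Sigma_2^{1/2}H}_*/\sqrt n\bigr)^2$.

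I do not anticipate a genuine obstacle here — this is a corollary in all but name. The only points requiring a little care are bookkeeping ones: making sure the two invoked results are run at a failure probability that sums to the claimed $\delta$ (hence the hypothesis $\delta \le 1/2$ versus $\delta \le 1/4$ in the two ingredients, applying each with $\delta/2$), checking that the side conditions ($\gamma \le 1$, $\epsilon \le 1$, and the effective-rank lower bounds implicit in $\epsilon \le 1$) are exactly the hypotheses stated here, and verifying that $B \ge \norm{w^*}$ so that \cref{corr:main-generalization} applies. If one wanted the cleanest statement one could also note that the $(1+\epsilon)^{1/2}\ge 1$ step is the only inequality (beyond the two black-box results) used, so the bound is essentially tight up to the multiplicative slack already present in $\gamma$ and $\epsilon$.
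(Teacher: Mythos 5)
Your proposal is correct and follows essentially the same route as the paper's proof: choose $B = \norm{w^*} + (1+\epsilon)^{1/2}\sigma\sqrt{n}/\E\Norm{\Sigma_2^{1/2}H}_*$ via \cref{thm:interpolator-general}, apply \cref{corr:main-generalization} with that (deterministic) $B$, union-bound the two events, and absorb the cross term using $(1+\epsilon)^{1/2}\ge 1$. The paper's write-up performs exactly this composition and algebra, with the same handling of the $\delta$ bookkeeping implicit in the $\delta\le 1/2$ hypothesis.
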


\paragraph{Sufficient conditions for consistency of $\hat w$.}
As $n \to \infty$,
$L(\hat{w})$ converges in probability to $\sigma^2$ if there exists a sequence of covariance splits $\Sigma = \Sigma_1 \oplus \Sigma_2$ such that
\begin{equation} \label{eq:suffcon:general}
    \frac{\rank(\Sigma_1)}{n} \to 0
    ,\qquad
    \frac{\norm{w^*} \E \|\Sigma^{1/2}_2 H\|_*}{\sqrt{n}} \to 0
    ,\qquad
    \frac{1}{r_{\|\cdot\|}(\Sigma_2)} \to 0
    ,\qquad
    \frac{n}{R_{\|\cdot\|}(\Sigma_2)} \to 0,
\end{equation}
and, with the same definition of $P$ and $v^*$ as in \cref{thm:interpolator-general}, it holds for any $\eta > 0$ that
\begin{equation} \label{eqn:contraction}
    \Pr(\| P v^* \|^2 > 1 + \eta) \to 0.
\end{equation}

As we see, the conditions for a minimal norm interpolator to succeed with a general norm generalize those from the Euclidean setting in a natural way. As discussed above, \eqref{eqn:contraction} is always satisfied for the Euclidean norm. The only remaining notable difference from the Euclidean setting is that we have two large effective dimension conditions on $\Sigma_2$ instead of a single one; in the Euclidean case, the condition on $R$ implies the condition on $r$.

\section{Application: \texorpdfstring{$\ell_1$}{L1} Norm Balls for Basis Pursuit} \label{sec:basis-pursuit}
The theory for the minimal $\ell_1$ norm interpolator, $\hat w_\BP \in \arg\min_{\hat L(w) = 0} \norm{w}_1$ -- also known as \emph{basis pursuit} \citep{CDS:basis-pursuit} -- is much less developed than that of the minimal $\ell_2$ norm interpolator.
In this section, we illustrate the consequences of our general theory for basis pursuit. 
Full statements and proofs of results in this section are given in \cref{apdx:bp}.

The dual of the $\ell_1$ norm is the $\ell_{\infty}$ norm $\norm{u}_\infty = \max_i \lvert u_i \rvert$, and 
$\partial \|u\|_\infty$ is the convex hull of  $\{ \operatorname{sign}(u_i) \, e_i : i \in \arg\max |u_i| \}$.
From the definition of sub-gradient, we observe that
\begin{equation} \label{eqn:bp-fact1}
    \min_{v \in \partial \|\Sigma^{1/2} g\|_\infty}\|v\|_{\Sigma} \le \max_{i \in [d]} \, \|e_i\|_{\Sigma} = \sqrt{\max_i \, \Sigma_{ii}} 
.\end{equation} 
Furthermore, by convexity we have
\begin{equation} \label{eqn:bp-fact2}
    \max_{\|w\|_1 \le 1} \|w\|_{\Sigma} =  \sqrt{\max_{i} \, \langle e_i, \Sigma e_i \rangle} = \sqrt{\max_i \, \Sigma_{ii}}
\end{equation}
and so $r_{\| \cdot \|_1}(\Sigma) = \frac{\left(\E \| \Sigma^{1/2} g \|_{\infty}\right)^2}{\max_i \Sigma_{ii}} \leq R_{\| \cdot \|_1}(\Sigma)$. Therefore, we can use a single notion of effective rank. For simplicity, we denote $r_1(\Sigma) = r_{\| \cdot \|_1}(\Sigma)$. Combining this with \eqref{eq:suffcon:general} and the previous discussion of \eqref{eqn:contraction}, we obtain the following sufficient conditions for consistency of basis pursuit.

\paragraph{Sufficient conditions for consistency of $\hat w_\BP$.}
As $n \to \infty$,
$L(\hat{w})$ converges to $\sigma^2$ in probability if there exists a sequence of covariance splits $\Sigma = \Sigma_1 \oplus \Sigma_2$ such that $\Sigma_2$ is diagonal and 
\begin{equation}
    \frac{\rank(\Sigma_1)}{n} \to 0
    ,\qquad
    \frac{\norm{w^*}_1 \E \Norm{\Sigma_2^{1/2} H}_\infty}{\sqrt n} \to 0
    ,\qquad
    \frac{n}{r_1(\Sigma_2)} \to 0
.\end{equation}

\paragraph{Application: Junk features.}
We now consider the behavior of basis pursuit in a junk feature model similar to that of \citet{junk-feats}. Suppose that
$ \Sigma = 
\begin{bmatrix} 
\Sigma_s & 0 \\ 
0 & \frac{\lambda_n}{\log(d)} I_{d} 
\end{bmatrix}
$,
where $\Sigma_s$ is a fixed matrix and $\|w^*\|_1$ is fixed. Quite naturally, we choose the covariance splitting $\Sigma_1 = \begin{bmatrix} 
\Sigma_s & 0 \\ 
0 & 0
\end{bmatrix}$, which has constant rank so that the first sufficient condition is immediately satisfied. 

By standard results on the maximum of independent Gaussian variables \citep[e.g.][]{vershynin2018high}, it is routine to check that
\begin{equation}
    \frac{\E \|\Sigma_2^{1/2} H\|_{\infty}}{\sqrt{n}} = \Theta\left( \sqrt{\frac{\lambda_n}{n}}\right) \quad\text{and}\quad  r_1(\Sigma_2) = \Theta\left( \log(d) \right).
\end{equation}
Therefore, basis pursuit will be consistent provided that $\lambda_n = o(n)$ and $d = e^{\omega(n)}$. To the best of our knowledge, this is the first time that basis pursuit has been shown to give consistent predictions in any setting with Gaussian covariates and $\sigma > 0$. Although we show consistency, the dimension must be quite high, and the rate of convergence depends on ${n}/{\log(d)}$ and $1/\sqrt{\log(d)}$.

\paragraph{Application: Isotropic features.} As in the Euclidean case, we generally do not expect basis pursuit to be consistent when $\Sigma = I_d$ and $w^* \ne 0$. %
However, we can expect its risk to approach the null risk $\sigma^2 + \| w^* \|^2$ if $d = e^{\omega(n)}$; we will show this using uniform convergence (without covariance splitting). 

A direct application of \cref{thm:min-norm} is not enough because the $\| w^* \|_1 \sqrt{{\log(d)}/{n}}$ term diverges, but we can remove the dependence on $\sqrt{\log(d)/n}$ with a better norm bound.
Let $S$ be the support of $w^*$ and denote $X_S$ as the matrix formed by selecting the columns of $X$ in $S$.
The key observation is that we can rewrite our model as $Y = X_{S^\mathsf{c}} \, 0 +  (X_S w_S^* + \xi)$, which corresponds to the case when $w^* = 0$ and the Bayes risk is $\sigma^2 + \| w^* \|_2^2$.
If we interpolate using only the features in $S^\mathsf{c}$, the minimal norm will be approximately upper bounded by $\sqrt{\sigma^2 + \| w^* \|_2^2} \frac{\sqrt{n}}{\E \| H\|_*} $ as long as $d-|S| = e^{\omega(n)}$, by \cref{thm:interpolator-general}.
This implies the original model $\| \hat{w}_\BP \|_1$ can also be upper bounded by the same quantity with high probability. Plugging the norm estimate in to \cref{corr:main-generalization} yields a risk bound of $\sigma^2 + \| w^* \|_2^2$. 

\paragraph{Relationship to previous works.} 
Both \citet{JLL:basis-pursuit} and \citet{chinot2021robustness} study the minimal $\ell_1$ norm interpolator in the isotropic setting. They consider a more realistic scaling where $\log(d)/n$ is not large and the target is not the null risk. The best bound of \citet{JLL:basis-pursuit}, their Corollary 3, is $L(\hat w_\BP) \le \sigma^2(2 + 32\sqrt{14}\sqrt{s})^2$, where $s$ is the ground truth sparsity. Note that even when $w^* = 0$, this bound does not show consistency. Similarly, \citet{chinot2021robustness} establish sufficient conditions for $L(\hat w_\BP) = O(\sigma^2)$, which is nontrivial but also does not show consistency for any $\sigma > 0$; see also the work of \citet{chinot2020robustness} for a similar result in the Euclidean setting.  In contrast, the constants in our result are tight enough to show $L(\hat w_\BP) \to \sigma^2$ in the isotropic setting when $w^* = 0$ and in the junk feature setting when $\| w^*\|_1$ is bounded.

Like our work, the results of \citet{chinot2021robustness} generalize to arbitrary norms; %
they also consider a larger class of anti-concentrated covariate distributions than just Gaussians, as in the work of \citet{mendelson2014learning}.
If $\sigma = 0$ and $w^* \in \cK$ (i.e. the model is well-specified and noiseless), their work as well as that of \citet{mendelson2014learning} can recover generalization bounds similar to our \cref{corr:main-generalization}, but with a large leading constant.

\section{Proof Sketches} \label{sec:sketches}

A key ingredient in our analysis is a celebrated result from Gaussian process theory known as the Gaussian Minmax Theorem (GMT) \citep{gordon1985some,thrampoulidis2015regularized}. Since the seminal work of \citet{rudelson2008sparse}, the GMT has seen numerous applications to problems in statistics, machine learning, and signal processing \citep[e.g.][]{stojnic2013framework,deng2019model,oymak2010new,oymak2018universality}. Most relevant to us is the work of \citet{thrampoulidis2015regularized}, which introduced the Convex Gaussian Minmax Theorem (CGMT) and developed a framework for the precise analysis of regularized linear regression. Here we apply the GMT/CGMT to study uniform convergence and the norm of the minimal norm interpolator.

\paragraph{Proof sketch of Theorem~\ref{thm:main-generalization}.}

For simplicity, assume here there is no covariance splitting: $\Sigma_2 = \Sigma$. By a change of variable and introducing the Lagrangian, we can rewrite the generalization gap as
\begin{equation} \label{eqn:sketch-1}
    \begin{split}
        \sup_{\substack{w \in \cK \\ Xw = Y}} L(w) &= \sigma^2 + \sup_{\substack{w \in \Sigma^{1/2} (\cK-w^*) \\ Zw = \xi}} \| w\|_2^2\\
        &= \sigma^2 + \sup_{w \in \Sigma^{1/2} (\cK-w^*)} \inf_{\lambda} \, \langle \lambda, Z w - \xi\rangle + \| w\|_2^2
    \end{split}
\end{equation}
where $Z$ is a random matrix with i.i.d. standard normal entries. By GMT\footnote{We ignore a compactness issue here, but this is done rigorously by a truncation argument in \cref{apdx:uniform-convergence:general}.}, we can control the upper tail of the max-min problem above (PO) by the auxiliary problem below (AO), with $G, H \sim N(0, I)$:
\begin{equation}
    \sup_{w \in \Sigma^{1/2} (\cK-w^*)} \inf_{\lambda} \, \| \lambda \|_2 \langle H, w \rangle +  \langle \lambda, G \| w\|_2 - \xi \rangle + \| w\|_2^2 =  \sup_{\substack{w \in \Sigma^{1/2} (\cK-w^*) \\ \norm{ G \norm{w}_2 - \xi}_2 \leq \langle H, w \rangle}}  \norm w_2^2
.\end{equation}
By standard concentration results, we can expect $\| G\|_2^2/n \approx 1$ and $\| \xi \|_2^2/n \approx \sigma^2$, so expanding the second constraint in the AO, we obtain $\| w\|_2^2 + \sigma^2 \leq | \langle H, w \rangle|^2/n$. Plugging into \eqref{eqn:sketch-1}, we have essentially shown that
\begin{equation}
    \sup_{\substack{w \in \cK \\ \hat L(w) = 0}} L(w) \le \sup_{w \in \Sigma^{1/2} (\cK-w^*)} \frac{| \langle H, w \rangle|^2}{n} \le \frac{ \left( \sup_{w \in \Sigma^{1/2} \cK} | \langle H, w \rangle| + |\langle H, \Sigma^{1/2} w^*\rangle| \right)^2}{n}
.\end{equation}
Applying concentration on the right hand side concludes the proof sketch. In situations where the supremum does not sharply concentrate around its mean, we can apply GMT only to the small variance directions of $\Sigma$. This requires a slightly more general version of GMT, which we prove in \cref{apdx:prelim}. We also show the additional terms contributed by the large variance components of $X$ cancel out due to Wishart concentration. This is reflected in the $\beta$ term of our theorem statement.

\paragraph{Proof sketch of Theorem~\ref{thm:interpolator-general}.}

Since the minimal norm problem is \emph{convex-concave}, we can apply the CGMT, which provides a useful direction that GMT cannot. By the same argument as above
\begin{equation}
    \begin{split}
        \inf_{Xw = Y} \| w\| - \| w^* \| &\leq  \inf_{Xw = \xi} \| w\| = \inf_w \sup_{\lambda} \| \Sigma^{-1/2} w \| + \langle \lambda, Zw-\xi \rangle \\
        &\approx \inf_{\| w\|_2^2 + \sigma^2 \leq | \langle H, w \rangle|^2/n} \| \Sigma^{-1/2} w \| = \inf_{\| w\|_{\Sigma}^2 + \sigma^2 \leq | \langle \Sigma^{1/2} H, w \rangle|^2/n} \| w \|
    .\end{split}
\end{equation}
To upper bound the infimum, it suffices to construct a feasible $w$. Consider $w$ of the form $\alpha v$ where $v \in \partial \| \Sigma^{1/2} H \|_*$. Plugging in the constraint, we can choose $\| w \| = \alpha = \sqrt{\sigma^2 \left( \frac{\| \Sigma^{1/2} H \|_*^2}{n} - \| v\|_{\Sigma}^2 \right)^{-1}}$.
Rearranging the terms conclude the proof sketch when there is no covariance splitting. The general proof (in \cref{apdx:norm-bounds:general}) is more technical, but follows the same idea. 

\section{Discussion}
In this work, we prove a generic generalization bound in terms of the Gaussian width and radius of a hypothesis class. We also provide a general high probability upper bound for the norm of the minimal norm interpolator. Combining these results, we recover the sufficient conditions from \citet{bartlett2020benign} in the $\ell_2$ case, confirm the conjecture of \citet{junk-feats} for Gaussian data, and obtain novel consistency results in the $\ell_1$ case. Our results provide concrete evidence that uniform convergence is indeed sufficient to explain interpolation learning, at least in some settings. 

A future direction of our work is to extend the main results to settings with non-Gaussian features; this has been achieved in other applications of the GMT \citep{oymak2018universality}, and indeed we expect that a version of \eqref{eq:spec-ub} likely holds for non-Gaussian data as well.
Another interesting problem is to study uniform convergence of low-norm \emph{near}-interpolators, and characterize the worst-case population error as the norm and training error {both} grow. This could lead to a more precise understanding of early stopping, by connecting the optimization path with the regularization path. Finally, it is unknown whether our sufficient conditions for consistency in \cref{sec:general-norm} are necessary, and it remains a challenge to apply uniform convergence of interpolators to more complex models such as deep neural networks. 

\begin{ack}
Frederic Koehler was supported in part by E. Mossel's Vannevar Bush Faculty Fellowship ONR-N00014-20-1-2826.
Research supported in part by NSF IIS award 1764032, NSF HDR TRIPODS award 1934843, and the Canada CIFAR AI Chairs program.
This work was done as part of the Collaboration on the Theoretical Foundations of Deep Learning (\httpsurl{deepfoundations.ai}).
\end{ack}

\begin{refcontext}[sorting=nyt]
\printbibliography
\end{refcontext}
\clearpage

\appendix

\section*{Appendices}
In the remainder of the paper, we give self-contained proofs of all results from the main text.

In \cref{apdx:prelim}, we introduce some technical results that we will use in our analysis.

In \cref{apdx:uniform-convergence}, we prove the main generalization bound (\cref{thm:main-generalization}) and show its specialization to norm balls (\cref{corr:spec-bound,corr:euclidean-generalization,corr:main-generalization}).

In \cref{apdx:norm-bounds}, we prove upper bounds on the norm of the minimal-norm interpolator for a general norm (\cref{thm:interpolator-general}), and show applications to the Euclidean case (\cref{thm:interpolator-euclidean}).

In \cref{apdx:benign-overfitting}, we show how to combine the previous sets of results to give risk guarantees for the minimal norm interpolators (\cref{corr:min-norm-euclidean,thm:min-norm}).
In particular, \cref{apdx:equivalent} shows the equivalence of conditions for consistency in the Euclidean norm setting.

In \cref{apdx:bp}, we provide full theorem statements and proofs of the results on $\ell_1$ interpolation (basis pursuit) mentioned in \cref{sec:basis-pursuit}.

\section{Preliminaries}\label{apdx:prelim}

We will first give some general results useful to the rest of the proofs. Most are standard, but a few are variations on existing results.

\paragraph{Concentration of Lipschitz functions.} Recall that a function $f : \mathbb{R}^n \to \mathbb{R}$ is $L$-Lipschitz with respect to the norm $\norm\cdot$ if it holds for all $x, y \in \mathbb R^n$ that $|f(x) - f(y)| \le L\|x - y\|$. We use the concentration of Lipschitz functions of a Gaussian.

\begin{theorem}[\cite{van2014probability}, Theorem 3.25]\label{thm:gaussian-concentration}
If $f$ is $L$-Lipschitz with respect to the Euclidean norm and $Z \sim N(0,I_n)$, then
\begin{equation}
    \Pr(|f(Z) - \E f(Z)| \ge t) \le 2e^{-t^2/2L^2}.
\end{equation}

\end{theorem}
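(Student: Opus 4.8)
The plan is to prove this sharp sub-Gaussian tail bound via the \emph{Herbst argument} applied to the Gaussian logarithmic Sobolev inequality; this is the route that yields exactly the constant $1/2$ in the exponent, whereas simpler interpolation/``smart path'' arguments tend to lose a constant factor. By homogeneity we may assume $L = 1$. By a routine reduction we may also assume $f$ is smooth and bounded: first replace $f$ by its truncation $f_M := (f \wedge M) \vee (-M)$, which is bounded and still $1$-Lipschitz, and then mollify, $f_{M,\varepsilon} := f_M * \phi_\varepsilon$ for a Gaussian bump $\phi_\varepsilon$, which is $C^\infty$ with $\|\nabla f_{M,\varepsilon}\|_2 \le 1$ and converges uniformly to $f_M$ as $\varepsilon \to 0$; the general statement follows by letting $\varepsilon \to 0$ and then $M \to \infty$ (using Gaussian integrability of Lipschitz $f$ so that $\E f_M \to \E f$, and the fact that the tail bounds below are uniform in $M, \varepsilon$).

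The key input is the dimension-free Gaussian log-Sobolev inequality: for $\gamma_n = N(0, I_n)$ and smooth $g$,
\[
  \mathrm{Ent}_{\gamma_n}(g^2) \;:=\; \E[g^2 \log g^2] - \E[g^2]\log\E[g^2] \;\le\; 2\,\E\,\|\nabla g\|_2^2 .
\]
I would prove this by tensorization: establish the one-dimensional case --- the only genuinely nontrivial ingredient, obtainable e.g.\ from the two-point (Bonami--Beckner) inequality via the central limit theorem, or from a direct Ornstein--Uhlenbeck semigroup computation --- and then use sub-additivity of entropy over independent coordinates to get arbitrary $n$ with the same constant $2$.

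Now I run Herbst. Fix $\lambda > 0$, put $g := e^{\lambda f / 2}$ and $\Lambda(\lambda) := \E[e^{\lambda f}]$, which is finite and smooth in $\lambda$ since $f$ is bounded. Then $\|\nabla g\|_2^2 = \tfrac{\lambda^2}{4}\|\nabla f\|_2^2 e^{\lambda f} \le \tfrac{\lambda^2}{4} e^{\lambda f}$ and $\mathrm{Ent}_{\gamma_n}(g^2) = \lambda \Lambda'(\lambda) - \Lambda(\lambda) \log \Lambda(\lambda)$, so the log-Sobolev inequality becomes $\lambda \Lambda'(\lambda) - \Lambda(\lambda)\log\Lambda(\lambda) \le \tfrac{\lambda^2}{2}\Lambda(\lambda)$. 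Dividing by $\lambda^2 \Lambda(\lambda)$ shows that $H(\lambda) := \tfrac{1}{\lambda}\log\Lambda(\lambda)$ has $H'(\lambda) \le \tfrac12$, and since $H(\lambda) \to \E f$ as $\lambda \downarrow 0$, integrating gives $\log \Lambda(\lambda) \le \lambda\,\E f + \tfrac{\lambda^2}{2}$, i.e.\ $\E\,e^{\lambda(f - \E f)} \le e^{\lambda^2/2}$ for every $\lambda > 0$. A Chernoff bound then gives $\Pr(f - \E f \ge t) \le \inf_{\lambda > 0} e^{-\lambda t + \lambda^2/2} = e^{-t^2/2}$; applying the same argument to $-f$ (also $1$-Lipschitz) and taking a union bound yields $\Pr(|f - \E f| \ge t) \le 2 e^{-t^2/2}$, which is the claim after reinstating the factor $L$ by scaling.

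The main obstacle is concentrated entirely in the one black-box step: establishing the \emph{dimension-free} log-Sobolev constant for the Gaussian measure --- everything downstream (the Herbst ODE manipulation, the Chernoff optimization, and the truncation/mollification reductions) is routine. An alternative plan avoiding log-Sobolev is to invoke the Gaussian isoperimetric inequality (Borell--Sudakov--Tsirelson), which directly gives $\Pr(f > \mathrm{med}(f) + t) \le e^{-t^2/2}$; but then one must separately control $|\E f - \mathrm{med}(f)| \lesssim L$ to recenter the bound at the mean, which slightly muddies the clean constant, so I would prefer the log-Sobolev route to match the stated form exactly.
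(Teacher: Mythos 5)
Your proof is correct, and the constants come out exactly as stated: the log-Sobolev inequality $\mathrm{Ent}_{\gamma_n}(g^2)\le 2\,\E\|\nabla g\|_2^2$ fed through the Herbst argument gives $\E e^{\lambda(f-\E f)}\le e^{\lambda^2 L^2/2}$, hence the two-sided tail $2e^{-t^2/2L^2}$ after Chernoff and a union bound. The paper does not prove this statement at all --- it is imported as a black box from the cited reference --- and the entropy/Herbst route you chose is precisely the standard proof given there, so there is nothing to reconcile; the only step I would flag as stated a bit glibly is the truncation limit $M\to\infty$, where one should pass through $\Pr(|f-\E f|\ge t,\ |f|\le M)\le \Pr(|f_M-\E f_M|\ge t-|\E f-\E f_M|)$ and use $\E f_M\to\E f$ before tightening $t$, but this is routine.
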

We also use a similar result for functions of a uniformly spherical vector \citep[see][Theorem 5.1.4 and Exercise 5.1.12]{vershynin2018high}; we cite a result with sharp constant factor from \citet{ledoux1992heat}.
\begin{theorem}[Spherical concentration; \cite{ledoux1992heat}]\label{thm:sphere-concentration}
If $f$ is $L$-Lipschitz with respect to the Euclidean norm and $Z \sim \text{Uni}(S^{n - 1})$ where $S^{n-1} = \{ u \in \mathbb{R}^n : \|u\| = 1\}$ is the unit sphere, $\text{Uni}(S^{n - 1})$ is the uniform measure on the sphere, and $n \ge 3$, then
\begin{equation}
    \Pr(|f(Z) - \E f(Z)| \ge t) \le 2e^{-(n - 2)t^2/2L^2}.
\end{equation}

\end{theorem}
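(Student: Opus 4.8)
The plan is to derive this known concentration estimate from the logarithmic Sobolev inequality for the uniform measure on the sphere, via the classical Herbst argument; this is the route that recovers the sharp exponent $n-2$ (the softer alternative discussed below does not).

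First I would record the geometric input. Regarding $S^{n-1}$ as a Riemannian submanifold of $\R^n$ with the round metric of radius one, all its sectional curvatures equal $1$, so its Ricci tensor satisfies $\mathrm{Ric} = (n-2)\,g$. Equivalently, the Laplace--Beltrami operator on $S^{n-1}$ satisfies the Bakry--\'Emery curvature-dimension condition $\mathrm{CD}(n-2,\infty)$, i.e.\ $\Gamma_2(f) \ge (n-2)\,\Gamma(f)$ with $\Gamma(f) = \norm{\nabla f}^2$. By the Bakry--\'Emery criterion, $\mathrm{CD}(\rho,\infty)$ with $\rho > 0$ implies that the invariant probability measure $\mu = \mathrm{Uni}(S^{n-1})$ obeys the log-Sobolev inequality $\mathrm{Ent}_\mu(g^2) \le \tfrac{2}{\rho}\,\E_\mu\norm{\nabla g}^2$ for all smooth $g$; here $\rho = n-2$. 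A standard density/truncation argument extends this to locally Lipschitz $g$, with $\norm{\nabla g}$ read as the a.e.-defined local Lipschitz constant, which handles the (non-smooth) Lipschitz $f$ in the statement.

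Next I would run the Herbst argument, reducing to $L=1$ by replacing $f$ with $f/L$ and $t$ with $t/L$. Apply the LSI to $g = e^{\lambda f/2}$; writing $H(\lambda) = \E_\mu e^{\lambda(f - \E_\mu f)}$ and using $\norm{\nabla f} \le 1$, the inequality becomes $\big(\tfrac1\lambda \log H(\lambda)\big)' \le \tfrac{1}{2(n-2)}$. Since $\tfrac1\lambda \log H(\lambda) \to 0$ as $\lambda \to 0^+$, integrating gives $\log H(\lambda) \le \tfrac{\lambda^2}{2(n-2)}$, the sub-Gaussian Laplace-transform bound. A Chernoff estimate, $\Pr(f(Z) - \E f(Z) \ge t) \le \inf_{\lambda > 0} e^{-\lambda t} H(\lambda) \le e^{-(n-2)t^2/2}$ (optimal at $\lambda = (n-2)t$), together with the same bound applied to $-f$ and a union bound, yields $\Pr(|f(Z) - \E f(Z)| \ge t) \le 2e^{-(n-2)t^2/2}$; undoing the rescaling restores the factor $L^2$ in the exponent. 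The hypothesis $n \ge 3$ is exactly what ensures $n - 2 > 0$, so the bound is nonvacuous.

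The one genuine obstacle is extracting the \emph{sharp} constant $n-2$. A softer route — writing $Z' \sim N(0,I_n)$ as $Z' = \norm{Z'}_2\, U$ with $U \sim \mathrm{Uni}(S^{n-1})$ and transferring the Gaussian concentration of \cref{thm:gaussian-concentration} to $U$ — does give dimension-dependent concentration but loses constant factors and forces a separate control of the deviation of $\norm{Z'}_2$ from $\sqrt n$. Committing to the curvature/log-Sobolev route sidesteps this, at the cost of carefully propagating the curvature-dimension constant through the Bakry--\'Emery machinery; by contrast, the passage from smooth to Lipschitz test functions is routine and not a real difficulty.
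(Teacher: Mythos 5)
Your derivation is correct, and it is essentially the argument behind the result the paper simply cites (Ledoux's heat-semigroup/curvature approach is the same mechanism as your Bakry--\'Emery $\mathrm{CD}(n-2,\infty)$ plus Herbst route, and both yield the sharp constant $n-2$ coming from $\mathrm{Ric}=(n-2)g$ on $S^{n-1}$). The paper itself offers no proof, so there is nothing to diverge from; the only small point worth making explicit is that Lipschitzness with respect to the ambient Euclidean (chordal) distance implies the same Lipschitz bound for the geodesic distance, hence $\norm{\nabla f}\le L$ a.e.\ for the intrinsic gradient used in the log-Sobolev inequality --- and this goes in the right direction since geodesic distance dominates chordal distance.
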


The following lemma, which we will use multiple timues, says that a $o(n)$-dimensional subspace cannot align with a random spherically symmetric vector. 

\begin{lemma}\label{lem:lowrank-projection}
Suppose that $S$ is a fixed subspace of dimension $d$ in $\mathbb R^n$ with $n \ge 4$, $P_S$ is the orthogonal projection onto $S$,
and $V$ is a spherically symmetric random vector (i.e. $V/\|V\|_2$ is uniform on the sphere). Then
\begin{equation}
    \frac{\|P_S V\|_2}{\|V\|_2} \le \sqrt{d/n} + 2\sqrt{\log(2/\delta)/n}.
\end{equation}

with probability at least $1 - \delta$. Conditional on this inequality holding, we therefore have uniformly for all $s \in S$ that
\begin{equation}  
|\langle s, V \rangle| = |\langle s, P_S V \rangle| \le \|s\|_2 \|P_S V\|_2 \le \|s\|_2\|V\|_2\left(\sqrt{d/n} + 2\sqrt{\log(2/\delta)/n})\right).
\end{equation}
\end{lemma}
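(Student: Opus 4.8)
The plan is to reduce the claim to a concentration bound for a Lipschitz function on the sphere. First I would observe that the quantity $\|P_S V\|_2 / \|V\|_2$ depends only on the direction $U := V/\|V\|_2$, which is uniform on $S^{n-1}$ by the spherical-symmetry hypothesis; so it suffices to bound $\|P_S U\|_2$ for $U \sim \mathrm{Uni}(S^{n-1})$. The map $u \mapsto \|P_S u\|_2$ is $1$-Lipschitz with respect to the Euclidean norm, since $P_S$ is an orthogonal projection and hence has operator norm $1$. Therefore \cref{thm:sphere-concentration} applies (using $n \ge 4 \ge 3$): with probability at least $1 - \delta$,
\begin{equation*}
    \|P_S U\|_2 \le \E \|P_S U\|_2 + \sqrt{\tfrac{2\log(2/\delta)}{n-2}}.
\end{equation*}

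Next I would control the expectation $\E \|P_S U\|_2$. By Jensen's inequality $\E \|P_S U\|_2 \le \sqrt{\E \|P_S U\|_2^2}$, and $\E \|P_S U\|_2^2 = d/n$ by symmetry (write $U = Z/\|Z\|_2$ with $Z \sim N(0, I_n)$; then $\E \|P_S U\|_2^2 = \E\big[\|P_S Z\|_2^2 / \|Z\|_2^2\big]$, and rotational invariance shows this equals $\Tr(P_S)/n = d/n$ — more directly, $\E \|P_S U\|_2^2 = \sum_i \E U_i^2$ over an orthonormal basis of $S$, and each coordinate of a uniform unit vector has second moment $1/n$). Hence $\E \|P_S U\|_2 \le \sqrt{d/n}$. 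Combining with the concentration inequality gives $\|P_S V\|_2 / \|V\|_2 \le \sqrt{d/n} + \sqrt{2\log(2/\delta)/(n-2)}$, and since $n \ge 4$ we have $n - 2 \ge n/2$, so $\sqrt{2\log(2/\delta)/(n-2)} \le 2\sqrt{\log(2/\delta)/n}$, yielding the stated bound.

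For the second displayed inequality, I would simply note that for any $s \in S$, since $P_S$ is self-adjoint and fixes $S$, $\langle s, V\rangle = \langle P_S s, V \rangle = \langle s, P_S V\rangle$, and then apply Cauchy--Schwarz: $|\langle s, P_S V\rangle| \le \|s\|_2 \|P_S V\|_2$. Substituting the first bound (which holds on the high-probability event already conditioned upon) gives the uniform-over-$s$ conclusion.

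The argument is essentially routine; the only mild subtlety is the bookkeeping with the constants — matching $\sqrt{2/(n-2)}$ against the clean $2/\sqrt{n}$ in the statement, which is why the hypothesis $n \ge 4$ appears. There is no real obstacle here; the main point is just to invoke spherical concentration for the $1$-Lipschitz function $u \mapsto \|P_S u\|_2$ together with the elementary second-moment computation $\E\|P_S U\|_2^2 = d/n$.
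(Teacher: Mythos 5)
Your proposal is correct and follows essentially the same route as the paper: reduce to the uniform direction on the sphere, apply spherical concentration (\cref{thm:sphere-concentration}) to the $1$-Lipschitz map $u \mapsto \|P_S u\|_2$, bound the mean by $\sqrt{\E\|P_S U\|_2^2} = \sqrt{d/n}$ via Jensen and the per-coordinate second moment $1/n$, and use $n \ge 4$ to convert $\sqrt{2/(n-2)}$ into $2/\sqrt{n}$. The only cosmetic difference is that the paper explicitly dispenses with the trivial case $d \ge n$ first, which your argument handles implicitly anyway.
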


\begin{proof}
This is trivial if $d \ge n$, since the left-hand side is at most $1$. Thus assume without loss of generality that $d < n$. By symmetry, it suffices to fix $S$ to be the span of basis vectors $e_1,\ldots,e_d$ and to bound $\|P_S V\|_2$ for $V$ a uniformly random chosen vector from the unit sphere in $\mathbb{R}^n$. Recall that for any coordinate $i$, we have $\E V_i^2 = 1/n$ by symmetry among the coordinates and the fact that $\|V\|_2^2 = 1$ almost surely.
The function $v \mapsto \|P_S v\|_2$ is a $1$-Lipschitz function and $\E\|P_S V\|_2 \le \sqrt{\E \|P_S V\|_2^2} = \sqrt{d/n}$, so by \cref{thm:sphere-concentration} above
\begin{equation*}
    \|P_S V\|_2 \le \sqrt{d/n} + \sqrt{2\log(2/\delta)/(n - 2)})
\end{equation*}
with probability at least $1 - \delta$. Using $n \ge 4$ gives the result.
\end{proof}

The concentration of the Euclidean norm of a Gaussian vector follows from Theorem~\ref{thm:gaussian-concentration}; we state it explicitly below.
\begin{lemma}%
\label{lem:norm-concentration}
Suppose that $Z \sim N(0,I_n)$. Then
\begin{equation}
    \Pr(\left|\|Z\|_2 - \sqrt{n}\right| \ge t) \le 4 e^{-t^2/4}.
\end{equation}
\end{lemma}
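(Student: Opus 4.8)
The plan is to obtain this as a short deduction from the Lipschitz concentration inequality of Theorem~\ref{thm:gaussian-concentration}, applied to $f(z)=\norm{z}_2$, after re-centering from $\E\norm{Z}_2$ to $\sqrt n$. First I would observe that $f(z)=\norm{z}_2$ is $1$-Lipschitz with respect to the Euclidean norm, since the triangle inequality gives $\bigl|\norm{x}_2-\norm{y}_2\bigr|\le\norm{x-y}_2$. Theorem~\ref{thm:gaussian-concentration} then yields, for every $u\ge 0$,
\[
\Pr\bigl(\bigl|\norm{Z}_2-\E\norm{Z}_2\bigr|\ge u\bigr)\le 2e^{-u^2/2}.
\]

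Next I would pin down $\E\norm{Z}_2$ to within an additive constant of $\sqrt n$. Jensen's inequality gives $\E\norm{Z}_2\le\sqrt{\E\norm{Z}_2^2}=\sqrt n$, and it is standard --- either via the Gaussian Poincar\'e inequality, which gives $\Var(\norm{Z}_2)\le\E\norm{\nabla f(Z)}_2^2=1$, or directly from the identity $\E\norm{Z}_2=\sqrt2\,\Gamma(\tfrac{n+1}{2})/\Gamma(\tfrac n2)$ --- that $\E\norm{Z}_2\ge\sqrt{n-1}\ge\sqrt n-1$, so that $\bigl|\E\norm{Z}_2-\sqrt n\bigr|\le 1$. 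Combining the two steps: for $t\ge 1$, on the event $\{\bigl|\norm{Z}_2-\sqrt n\bigr|\ge t\}$ the triangle inequality forces $\bigl|\norm{Z}_2-\E\norm{Z}_2\bigr|\ge t-1\ge 0$, so the displayed bound with $u=t-1$ gives $\Pr\bigl(\bigl|\norm{Z}_2-\sqrt n\bigr|\ge t\bigr)\le 2e^{-(t-1)^2/2}$; for $t<1$ this bound holds trivially, its right-hand side being at least $1$. Finally, since $\tfrac{t^2}{4}-\tfrac{(t-1)^2}{2}=\tfrac12-\tfrac{(t-2)^2}{4}\le\tfrac12\le\ln 2$ for all $t$, we have $2e^{-(t-1)^2/2}\le 4e^{-t^2/4}$, which is exactly the asserted inequality.

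I do not expect a real obstacle here: the whole argument is a one-line consequence of Theorem~\ref{thm:gaussian-concentration} once one knows the elementary estimate $\sqrt n-1\le\E\norm{Z}_2\le\sqrt n$, which is the only ingredient not already in the excerpt and is completely standard. The one thing worth flagging is that the slack in the statement --- the constant $4$ rather than $2$, and the exponent $t^2/4$ rather than $t^2/2$ --- is precisely what makes the single-step re-centering from $\E\norm{Z}_2$ to $\sqrt n$ go through without any case analysis, which is presumably why the lemma is packaged in this slightly lossy but convenient form.
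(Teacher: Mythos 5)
Your proposal is correct and follows essentially the same route as the paper's proof: apply \cref{thm:gaussian-concentration} to the $1$-Lipschitz function $\norm{\cdot}_2$, use the standard estimate $\sqrt{n}-1 \le \E\norm{Z}_2 \le \sqrt{n}$ to re-center at $\sqrt{n}$ at the cost of an additive $1$, and absorb that loss into the weakened constants via the same algebraic identity (your $\tfrac{t^2}{4}-\tfrac{(t-1)^2}{2}=\tfrac12-\tfrac{(t-2)^2}{4}$ is equivalent to the paper's $(t-1)^2 \ge t^2/2 - 1$). No gaps.
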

\begin{proof}
First we recall the standard fact \citep[see e.g.][]{chandrasekaran2012convex}
that
\begin{equation*}
    \sqrt{n} - 1 \le  \frac{n}{\sqrt{n + 1}} \le \E \|Z\|_2 \le \sqrt{n}.
\end{equation*}
Because the norm is 1-Lipschitz, it follows from Theorem~\ref{thm:gaussian-concentration} that
\begin{equation*}
    \Pr(\left|\|Z\|_2 - \E \|Z\|_2\right| \ge t) \le 2e^{-t^2/2}
\end{equation*}
so
\begin{equation*}
    \Pr(\left|\|Z\|_2 - \sqrt{n}\right| \ge t + 1) \le 2e^{-t^2/2}.
\end{equation*}
Now using that $(t - 1)^2 \ge t^2/2 - 1$ shows
\begin{equation*}
    \Pr(\left|\|Z\|_2 - \sqrt{n}\right| \ge t) \le 2e^{-(t^2/2 - 1)/2} \le 4 e^{-t^2/4}.
\qedhere\end{equation*}
\end{proof}

\paragraph{Wishart concentration.} We recall the notation for the Loewner order on symmetric matrices: $A \preceq B$ means that $B - A$ is positive semidefinite. Let $\sigma_{\min}(A)$ denote the minimum singular value of an arbitrary matrix $A$, and $\sigma_{\max}$ the maximum singular value. Similarly, let $\lambda_{\min}(A)$ denote the minimum eigenvalue. We use $\|A\|_{\OP} = \sigma_{\max}(A)$ to denote the operator norm of matrix $A$.

\begin{theorem}[\cite{vershynin2010rmt}, Corollary~5.35]\label{thm:rmt}
    Let $n,N \in \NN$. Let $A \in \RR^{N \times n}$ be a random matrix with entries i.i.d.\ $N(0,1)$. Then for any $t>0$, it holds with probability at least $1 - 2\exp(-t^2/2)$ that 
    \begin{equation}
        \sqrt{N} - \sqrt{n} - t \leq \sigma_\text{min}(A) \leq \sigma_\text{max}(A) \leq \sqrt{N} + \sqrt{n} + t.
    \end{equation}
\end{theorem}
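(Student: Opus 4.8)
The plan is to derive \cref{thm:rmt} from two ingredients, both already available in the toolkit assembled above: Gaussian concentration of Lipschitz functions (\cref{thm:gaussian-concentration}) to control the fluctuation of $\sigma_{\min}(A)$ and $\sigma_{\max}(A)$ around their means, and a Gaussian comparison inequality (Gordon's min--max theorem, in the same circle of ideas as the GMT used throughout \cref{sec:sketches}) to pin down those means. Throughout I identify $A \in \R^{N \times n}$ with a vector in $\R^{Nn}$, so that the Frobenius norm $\Norm{A}_F$ coincides with the Euclidean norm and $A$ has i.i.d.\ $N(0,1)$ coordinates, making \cref{thm:gaussian-concentration} applicable verbatim.

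\emph{Step 1 (Lipschitz property).} Both maps $A \mapsto \sigma_{\max}(A)$ and $A \mapsto \sigma_{\min}(A)$ are $1$-Lipschitz with respect to $\Norm{\cdot}_F$. Indeed $\sigma_{\max}(A) = \Norm{A}_\OP$ and $\sigma_{\min}(A) = \min_{\norm{u}_2 = 1} \norm{Au}_2$, and for every unit $u$ one has $\bigl|\norm{Au}_2 - \norm{Bu}_2\bigr| \le \norm{(A-B)u}_2 \le \norm{A-B}_\OP$, so taking $\max$ resp.\ $\min$ over $u$ gives $|\sigma(A) - \sigma(B)| \le \norm{A-B}_\OP \le \norm{A-B}_F$ for $\sigma \in \{\sigma_{\min}, \sigma_{\max}\}$.

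\emph{Step 2 (expectation bounds).} Write $\sigma_{\max}(A) = \max_{u \in S^{n-1}} \max_{v \in S^{N-1}} \langle Au, v\rangle$ and, when $N \ge n$, $\sigma_{\min}(A) = \min_{u \in S^{n-1}} \max_{v \in S^{N-1}} \langle Au, v\rangle$; when $N < n$ the quantity $\sigma_{\min}(A) = \min_{\norm{u}_2 = 1}\norm{Au}_2$ equals $0$ (the nullspace of $A$ is nontrivial) while $\sqrt N - \sqrt n < 0$, so the lower bound is vacuous and can be dropped. In the interesting regime, the centered Gaussian process $X_{u,v} := \langle Au, v\rangle$ is compared via Gordon's inequality to the simpler process $Y_{u,v} := \langle g, u\rangle + \langle h, v\rangle$, with $g \sim N(0, I_n)$ and $h \sim N(0, I_N)$ independent; this yields $\E \sigma_{\max}(A) \le \E \max_{u,v} Y_{u,v} = \E\norm{g}_2 + \E\norm{h}_2 \le \sqrt n + \sqrt N$, and $\E \sigma_{\min}(A) \ge \E \min_u \max_v Y_{u,v} = \E\norm{h}_2 - \E\norm{g}_2 \ge \sqrt N - \sqrt n$ (using $\E \norm{N(0,I_k)}_2 \le \sqrt k$ together with a matching lower bound, i.e.\ Gordon's theorem exactly as stated in Vershynin's survey). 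Then \emph{Step 3 (combine):} applying \cref{thm:gaussian-concentration} to each of the two $1$-Lipschitz functions gives, one-sidedly, $\sigma_{\max}(A) \le \E\sigma_{\max}(A) + t \le \sqrt N + \sqrt n + t$ with probability at least $1 - e^{-t^2/2}$ and $\sigma_{\min}(A) \ge \E\sigma_{\min}(A) - t \ge \sqrt N - \sqrt n - t$ with probability at least $1 - e^{-t^2/2}$; a union bound over these two events yields the claim with probability at least $1 - 2e^{-t^2/2}$.

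The main obstacle is Step 2. A crude $\eps$-net and union-bound argument over $S^{n-1} \times S^{N-1}$ only gives $\E \sigma_{\max}(A) \lesssim \sqrt N + C\sqrt n$ with $C > 1$ and, worse, no usable lower bound on $\sigma_{\min}(A)$, whereas the statement requires the \emph{sharp} constants $\sqrt N \pm \sqrt n$. Getting these forces a Gaussian comparison theorem: for $\sigma_{\max}$ plain Slepian suffices, but for $\sigma_{\min}$ the $\min$--$\max$ structure requires the full Gordon min--max comparison, and one must be slightly careful with the normalizing constants $\E\norm{N(0,I_k)}_2$ versus $\sqrt k$. Everything else is bookkeeping, together with the minor edge case $N < n$ noted above.
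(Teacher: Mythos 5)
The paper does not prove \cref{thm:rmt} at all --- it is imported verbatim from Vershynin's survey (Corollary~5.35 there), so there is no in-paper proof to compare against. Your proposal is essentially a correct reconstruction of Vershynin's own argument: concentration of the $1$-Lipschitz functions $\sigma_{\min},\sigma_{\max}$ around their means (the analogue of \cref{thm:gaussian-concentration}, in its one-sided form so the union bound gives the factor $2$ rather than $4$), with the means pinned down by Slepian's inequality for $\sigma_{\max}$ and Gordon's min--max comparison for $\sigma_{\min}$, comparing $X_{u,v}=\langle Au,v\rangle$ to $Y_{u,v}=\langle g,u\rangle+\langle h,v\rangle$; the increment condition $\E(X_{u,v}-X_{u',v'})^2\le\E(Y_{u,v}-Y_{u',v'})^2$ holds since the difference equals $2(1-\langle u,u'\rangle)(1-\langle v,v'\rangle)\ge 0$.

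The one step whose justification, as written, does not go through is the final inequality $\E\|h\|_2-\E\|g\|_2\ge\sqrt N-\sqrt n$. The bound $\E\|N(0,I_k)\|_2\le\sqrt k$ that you invoke points the wrong way for $\E\|h\|_2$, and the crude lower bound $\E\|N(0,I_k)\|_2\ge k/\sqrt{k+1}\ge\sqrt k-1$ loses an additive constant that the sharp statement cannot afford. What is actually needed is that $k\mapsto\sqrt k-\E\|N(0,I_k)\|_2$ is non-increasing; writing $a_k=\E\|N(0,I_k)\|_2=\sqrt2\,\Gamma(\tfrac{k+1}{2})/\Gamma(\tfrac k2)$, this follows from the identity $a_k a_{k+1}=k$ (so $a_{k+1}\ge\sqrt k\cdot(\sqrt k/a_k)\ge a_k$ and one can check $\sqrt{k+1}-a_{k+1}\le\sqrt k-a_k$ directly). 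You flag this as the delicate point, which is the right instinct, but the parenthetical you give is not a proof of it. The $N<n$ edge case is handled correctly: the lower bound is vacuous there under either convention for $\sigma_{\min}$.
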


\begin{corollary}\label{corr:wishart}
Suppose $X_1,\ldots,X_n \sim N(0,\Sigma)$ are independent with $\Sigma : d \times d$ a positive semidefinite matrix, $t > 0$ and $n \ge 4(d + t^2)$. Let $\hat{\Sigma} = \frac{1}{n} \sum_i X_i X_i^T$ be the empirical covariance matrix. Then with probability at least $1 - \delta$, 
\begin{equation}
    (1 - \epsilon) \Sigma \preceq \hat{\Sigma} \preceq (1 + \epsilon) \Sigma
\end{equation}
with $\epsilon = 3\sqrt{d/n} + 3\sqrt{2\log(2/\delta)/n}$.
\end{corollary}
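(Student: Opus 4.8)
The plan is to reduce the claim to the Gaussian random-matrix bound of Theorem~\ref{thm:rmt} by whitening. Write each $X_i = \Sigma^{1/2} g_i$ with $g_i \stackrel{iid}{\sim} N(0,I_d)$, let $A \in \R^{n\times d}$ have rows $g_1^\top,\dots,g_n^\top$, and note that $\hat\Sigma = \Sigma^{1/2}\bigl(\tfrac1n A^\top A\bigr)\Sigma^{1/2}$. Conjugation by $\Sigma^{1/2}$ preserves the Loewner order and $c\,\Sigma^{1/2} I_d\,\Sigma^{1/2} = c\,\Sigma$, so it suffices to prove the two-sided estimate $(1-\epsilon) I_d \preceq \tfrac1n A^\top A \preceq (1+\epsilon) I_d$ on a $(1-\delta)$-probability event. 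This reduction needs no invertibility of $\Sigma$: we only ever sandwich Loewner inequalities between copies of $\Sigma^{1/2}$, so a general positive semidefinite $\Sigma$ is handled automatically.

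Next I would apply Theorem~\ref{thm:rmt} with $N=n$ and inner dimension $d$, taking the deviation parameter $t = \sqrt{2\log(2/\delta)}$ so that the failure probability $2\exp(-t^2/2)$ equals exactly $\delta$; the hypothesis $n \ge 4(d+t^2)$ then reads $n \ge 4\bigl(d + 2\log(2/\delta)\bigr)$. On the good event all singular values of $A$ lie in $[\sqrt n - (\sqrt d + t),\,\sqrt n + (\sqrt d + t)]$, so — once the lower endpoint is known to be nonnegative — the eigenvalues of $\tfrac1n A^\top A$ lie in $[(1-\rho)^2,\,(1+\rho)^2]$ with $\rho := (\sqrt d + t)/\sqrt n$. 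The deviations of these from $1$ are $(1+\rho)^2 - 1 = 2\rho + \rho^2$ and $1 - (1-\rho)^2 = 2\rho - \rho^2 \le 2\rho$.

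The remaining point is purely arithmetic: $\rho \le 1$. By AM–GM, $(\sqrt d + t)^2 = d + 2\sqrt d\,t + t^2 \le 2(d+t^2)$, which is at most $n/2$ by hypothesis, so $\rho^2 \le 1/2$; in particular $\rho \le 1$ and $\sqrt n - \sqrt d - t \ge 0$, justifying the squaring above. Hence $2\rho + \rho^2 \le 3\rho = 3(\sqrt d + t)/\sqrt n = 3\sqrt{d/n} + 3\sqrt{2\log(2/\delta)/n} = \epsilon$, and likewise $2\rho - \rho^2 \le \epsilon$. Conjugating back by $\Sigma^{1/2}$ yields $(1-\epsilon)\Sigma \preceq \hat\Sigma \preceq (1+\epsilon)\Sigma$ with probability at least $1-\delta$.

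I do not expect a genuine obstacle: all the probabilistic content is in Theorem~\ref{thm:rmt}, and the rest is the whitening reduction plus the elementary bound $\rho\le 1$. The only things to be mildly careful with are (i) choosing $t$ to synchronize the failure probability with $\delta$ while keeping the clean constant $3$ in front of $\epsilon$ — which is precisely what the sample-size condition $n\ge 4(d+t^2)$ buys us — and (ii) not assuming $\Sigma$ is positive definite, which costs nothing since conjugation by $\Sigma^{1/2}$ is all we use.
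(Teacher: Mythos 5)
Your proposal is correct and follows essentially the same route as the paper's proof: whiten via $X_i = \Sigma^{1/2} g_i$, apply Theorem~\ref{thm:rmt} with $t=\sqrt{2\log(2/\delta)}$ to sandwich the eigenvalues of $\tfrac1n A^\top A$ in $[(1-\rho)^2,(1+\rho)^2]$, and use the hypothesis $n\ge 4(d+t^2)$ to ensure $\rho\le 1$ so that the deviation is at most $3\rho=\epsilon$. The only (cosmetic) difference is that you conjugate forward by $\Sigma^{1/2}$ instead of writing $\Sigma^{-1/2}\hat\Sigma\Sigma^{-1/2}$ as the paper does, which handles singular $\Sigma$ a bit more cleanly.
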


\begin{proof}
Let $X : n \times d$ be the random matrix with rows $X_1,\ldots,X_n$ so that $\hat{\Sigma} = \frac{1}{n} X^T X$. By equality in distribution, we can take $Z : n \times d$ to have $N(0,1)$ independent entries and write $ X = Z \Sigma^{1/2} $ and
\begin{equation*}
    \Sigma^{-1/2} \hat{\Sigma} \Sigma^{-1/2} = \frac{1}{m} \Sigma^{-1/2} X^T X \Sigma^{-1/2} = \frac{1}{n} Z^T Z.
\end{equation*}

By definition of singular values, from \cref{thm:rmt} the eigenvalues of $Z^T Z/n$ are bounded between $(1-\sqrt{d/n}-\sqrt{t^2/n})^2$ and $(1+\sqrt{d/n}+\sqrt{t^2/n})^2$. Since $1-(1-x)^2 \le (1+x)^2 - 1$, using the inequality $(1 + x)^2 \le 1 + 3x$ for $x \in [0,1]$, we have shown that
\begin{equation*}
    \|I - \Sigma^{-1/2} \hat{\Sigma} \Sigma^{-1/2}\|_{\OP} \le (1 + \sqrt{d/n} + \sqrt{t^2/n})^2 - 1 \le 3\sqrt{d/n} + 3\sqrt{t^2/n}.
\end{equation*}
Rewriting and taking $t^2 = 2\log(2/\delta)$ gives the result. 
\end{proof}

\paragraph{Gaussian Minmax Theorem.} The following result is Theorem 3 of \citet{thrampoulidis2015regularized}, known as the Convex Gaussian Minmax Theorem or CGMT (see also Theorem 1 in the same reference). 
As explained there, it is a consequence of the main result of \citet{gordon1985some}, known as Gordon's Theorem or the Gaussian Minmax Theorem. Despite the name, convexity is only required for one of the theorem's conclusions. 

\begin{theorem}[Convex Gaussian Minmax Theorem; \cite{thrampoulidis2015regularized,gordon1985some}]\label{thm:gmt}
Let $Z : n \times d$ be a matrix with i.i.d. $N(0,1)$ entries and suppose $G \sim N(0,I_n)$ and $H \sim N(0,I_d)$ are independent of $Z$ and each other. Let $S_w,S_u$ be compact sets and $\psi : S_w \times S_u \to \mathbb{R}$ be an arbitrary continuous function.
Define the \emph{Primary Optimization (PO)} problem
\begin{equation}
    \Phi(Z) := \min_{w \in S_w} \max_{u \in S_u} \langle u, Z w \rangle + \psi(w,u)
\end{equation}
and the \emph{Auxiliary Optimization (AO)} problem
\begin{equation}
    \phi(G,H) := \min_{w \in S_w} \max_{u \in S_u} \|w\|_2\langle G, u \rangle + \|u\|_2 \langle H, w \rangle + \psi(w,u).
\end{equation}
Under these assumptions, $\Pr(\Phi(Z) < c) \le 2 \Pr(\phi(G,H) \le c)$ for any $c \in \mathbb{R}$.

Furthermore, if we suppose that $S_w,S_u$ are convex sets and $\psi(w,u)$ is convex in $w$ and concave in $u$, then $\Pr(\Phi(Z) > c) \le 2 \Pr(\phi(G,H) \ge c)$. 
\end{theorem}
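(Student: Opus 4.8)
The whole statement rests on \emph{Gordon's Gaussian min--max comparison inequality}, the min--max extension of Slepian's lemma, which I would either cite from \citet{gordon1985some} or re-derive by the standard smart-path argument: for centered Gaussian fields $U_{w,u}$ and $V_{w,u}$ indexed by $(w,u)$ with $\E U_{w,u}^2=\E V_{w,u}^2$, one interpolates $\sqrt{\theta}\,U+\sqrt{1-\theta}\,V$ and differentiates $\E F$ for a smooth $F$ approximating the indicator of $\bigcap_w\bigcup_u\{\cdot\ge\lambda_{w,u}\}$; Gaussian integration by parts turns $\tfrac{d}{d\theta}\E F$ into a sum of $\partial^2 F/\partial x_{w,u}\partial x_{w',u'}$ weighted by $\E U_{w,u}U_{w',u'}-\E V_{w,u}V_{w',u'}$, and the sign pattern of the second derivatives of this particular $F$ (nonnegative for pairs sharing the same $w$, nonpositive across distinct $w$) combined with the hypotheses $\E U_{w,u}U_{w,u'}\le\E V_{w,u}V_{w,u'}$ (same $w$) and $\E U_{w,u}U_{w',u'}\ge\E V_{w,u}V_{w',u'}$ ($w\ne w'$) makes it monotone, so that $\Pr(\min_w\max_u U_{w,u}\ge c)\ge\Pr(\min_w\max_u V_{w,u}\ge c)$ for every $c$.

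\textbf{First inequality.} Introduce an auxiliary $\gamma\sim N(0,1)$, independent of everything, and put $U_{w,u}=\langle u,Zw\rangle+\|w\|_2\|u\|_2\,\gamma+\psi(w,u)$ and $V_{w,u}=\|w\|_2\langle G,u\rangle+\|u\|_2\langle H,w\rangle+\psi(w,u)$. Using $\E[(u^\top Zw)(u'^\top Zw')]=\langle u,u'\rangle\langle w,w'\rangle$ and that $\psi$ is deterministic, one checks $\E U_{w,u}^2=\E V_{w,u}^2=2\|w\|_2^2\|u\|_2^2$, equality of the same-$w$ covariances, and
\[
\E[U_{w,u}U_{w',u'}]-\E[V_{w,u}V_{w',u'}]=\bigl(\|w\|_2\|w'\|_2-\langle w,w'\rangle\bigr)\bigl(\|u\|_2\|u'\|_2-\langle u,u'\rangle\bigr)\ge 0
\]
by Cauchy--Schwarz — exactly the hypotheses above (the role of $\gamma$ is solely to equalize the second moments). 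By compactness of $S_w,S_u$ and continuity, $\min_w\max_u U_{w,u}$ equals $\Phi_\gamma(Z):=\min_w\max_u[\langle u,Zw\rangle+\|w\|_2\|u\|_2\gamma+\psi(w,u)]$, $\min_w\max_u V_{w,u}=\phi(G,H)$, and $\{\min_w\max_u(\cdot)\ge c\}=\bigcap_w\bigcup_u\{(\cdot)\ge c\}$; so Gordon gives $\Pr(\Phi_\gamma(Z)\ge c)\ge\Pr(\phi(G,H)\ge c)$, i.e.\ $\Pr(\Phi_\gamma(Z)<c)\le\Pr(\phi(G,H)<c)$. Finally, on $\{\gamma\le 0\}$ every pointwise objective of $\Phi_\gamma$ lies below the corresponding objective of $\Phi$, so $\Phi_\gamma(Z)\le\Phi(Z)$ there; since $\gamma$ is independent of $Z$,
\[
\tfrac12\Pr(\Phi(Z)<c)=\Pr(\Phi(Z)<c,\ \gamma\le 0)\le\Pr(\Phi_\gamma(Z)<c)\le\Pr(\phi(G,H)<c)\le\Pr(\phi(G,H)\le c),
\]
which is the first claim.

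\textbf{Second inequality (convex case).} Reduce to the first part by swapping $\min$ and $\max$ in the PO. Since $S_w,S_u$ are convex and compact and $(w,u)\mapsto\langle u,Zw\rangle+\psi(w,u)$ is convex in $w$ and concave in $u$, Sion's minimax theorem gives $\Phi(Z)=\max_u\min_w[\langle u,Zw\rangle+\psi(w,u)]$, hence $-\Phi(Z)=\min_u\max_w[\langle w,(-Z^\top)u\rangle-\psi(w,u)]$. Because $-Z^\top$ again has i.i.d.\ $N(0,1)$ entries, this is a primary optimization problem of exactly the type handled by the first part (now with $u$ the min-variable, $w$ the max-variable, and continuous cost $-\psi$ — the first part needs no convexity). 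Applying the first inequality to it and unwinding the signs, using $-g\stackrel{d}{=}g$, $-h\stackrel{d}{=}h$ for the auxiliary Gaussians $g,h$ of the transformed problem, $\Pr(\Phi(Z)>c)=\Pr(-\Phi(Z)<-c)\le 2\,\Pr\!\big(\max_u\min_w q(w,u)\ge c\big)$, where $q(w,u)=\|w\|_2\langle G,u\rangle+\|u\|_2\langle H,w\rangle+\psi(w,u)$ is precisely the AO objective. Since $\max_u\min_w q\le\min_w\max_u q=\phi(G,H)$ always — the free max--min inequality, which happily points the right way and needs no convexity of the AO — we conclude $\Pr(\Phi(Z)>c)\le 2\,\Pr(\phi(G,H)\ge c)$.

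\textbf{Main obstacle.} Nothing here is analytically hard; the pitfalls are bookkeeping. In the first part one must add the auxiliary variable on the \emph{low-variance} (PO) side and then use the correct tail $\{\gamma\le 0\}$, or the three inequalities fail to chain; in the second part one must keep straight which variable is inner and which is outer after the Sion swap so that the free $\max_u\min_w\le\min_w\max_u$ step is used in the favourable direction. The only genuinely technical ingredient is Gordon's comparison lemma itself (and, if one re-proves it, the smooth-approximation and Gaussian-integration-by-parts computation); its cleanest form assumes compact index sets, so when this theorem is later applied with an unbounded Lagrange multiplier one additionally truncates and passes to the limit — but with $S_w,S_u$ compact as assumed here that issue does not arise.
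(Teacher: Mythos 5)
The paper does not actually prove this statement: it imports it verbatim as Theorem~3 of \citet{thrampoulidis2015regularized}, which in turn rests on \citet{gordon1985some}. Your argument is a correct reconstruction of exactly that standard proof --- Gordon's min--max comparison applied after equalizing variances with the auxiliary scalar $\gamma$ (the covariance gap indeed factors as $(\|w\|\|w'\|-\langle w,w'\rangle)(\|u\|\|u'\|-\langle u,u'\rangle)\ge 0$, vanishing for $w=w'$), the $\{\gamma\le 0\}$ conditioning producing the factor of $2$, and, for the convex case, Sion's minimax swap on the PO followed by the free $\max\min\le\min\max$ inequality on the AO; the only step left implicit, passing from Gordon's finite-index comparison to compact index sets via continuity and a dense net, is standard and harmless here.
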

In other words, the first conclusion says that high probability lower bounds on the auxiliary optimization $\phi(G,H)$ imply high probability lower bounds on the primary optimization $\Phi(Z)$. Importantly, this direction holds without any convexity assumptions. Under the additional convexity assumptions, the second conclusion gives a similar comparison of high probability upper bounds. 

In our analysis, we need a slightly more general statement of the Gaussian Minmax Theorem than \cref{thm:gmt}: we need the minmax formulation to include additional variables which only affect the deterministic term in the minmax problem. It's straightforward to prove this result by repeating the argument in \citet{thrampoulidis2015regularized}; below we give an alternative proof which reduces to \cref{thm:gmt}, by introducing extremely small extra dimensions to contain the extra variables. Intuitively, this works because the statement of the GMT allows for arbitrary continuous functions $\psi$, with no dependence on their quantitative smoothness. 

\begin{theorem}[Variant of GMT] \label{thm:gmt-mod}
Let $Z : n \times d$ be a matrix with i.i.d. $N(0,1)$ entries and suppose $G \sim N(0,I_n)$ and $H \sim N(0,I_d)$ are independent of $Z$ and each other. Let $S_W,S_U$ be compact sets in $\mathbb{R}^d \times \mathbb{R}^{d'}$ and $\mathbb{R}^{n} \times \mathbb{R}^{n'}$ respectively, and let $\psi : S_W \times S_U \to \mathbb{R}$ be an arbitrary continuous function.
Define the \emph{Primary Optimization (PO)} problem 
\begin{equation}
    \Phi(Z) := \min_{(w, w') \in S_W} \max_{(u,u') \in S_U} \langle u, Z w \rangle + \psi( (w,w'),(u,u'))
\end{equation}
and the \emph{Auxiliary Optimization (AO)} problem
\begin{equation}
     \phi(G,H) := \min_{(w, w') \in S_W} \max_{(u,u') \in S_U} \|w\|_2\langle G, u \rangle + \|u\|_2 \langle H, w \rangle + \psi((w,w'),(u,u')).
\end{equation}
Under these assumptions, $\Pr(\Phi(Z) < c) \le 2 \Pr(\phi(G,H) \le c)$ for any $c \in \mathbb{R}$.
\end{theorem}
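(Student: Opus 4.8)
The plan is to reduce the stated variant directly to the standard CGMT (\cref{thm:gmt}, first conclusion only), following the hint in the surrounding text: we embed the ``extra'' variables $w' \in \R^{d'}$ and $u' \in \R^{n'}$ as \emph{new coordinates} attached to $w$ and $u$, but scaled down so drastically that the bilinear term $\langle u, Zw\rangle$ becomes, on the enlarged coordinates, negligibly different from the original bilinear term, while the continuous function $\psi$ is free to depend on the enlarged coordinates in whatever way it likes (the GMT places no quantitative smoothness requirement on $\psi$). Concretely, for a small scaling parameter $\eta > 0$, I would define enlarged variables $\tilde w = (w, \eta w') \in \R^{d + d'}$ and $\tilde u = (u, \eta u') \in \R^{n + n'}$, ranging over the compact sets $S_W^{(\eta)} := \{(w,\eta w') : (w,w')\in S_W\}$ and $S_U^{(\eta)} := \{(u,\eta u') : (u,u')\in S_U\}$, which are compact since $S_W, S_U$ are and the embedding is a homeomorphism onto its image. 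I would take the Gaussian matrix in the enlarged problem to be $\tilde Z$ of size $(n+n') \times (d+d')$ with i.i.d.\ $N(0,1)$ entries, and define the enlarged continuous function $\tilde\psi(\tilde w, \tilde u)$ on $S_W^{(\eta)} \times S_U^{(\eta)}$ by \emph{decoding} $\tilde w, \tilde u$ back to $(w,w'),(u,u')$ (a continuous operation, since $\eta$ is fixed and nonzero) and then evaluating
\[
\tilde\psi(\tilde w,\tilde u) = \psi((w,w'),(u,u')) + \langle u, Z_{12} (\eta w')\rangle + \langle \eta u', Z_{21} w\rangle + \langle \eta u', Z_{22}(\eta w')\rangle,
\]
no wait — the cross terms from $\tilde Z$ are random, so they cannot go into $\tilde\psi$. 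Let me restate: I would instead choose $\tilde Z$ to have the block structure $\tilde Z = \begin{bmatrix} Z & 0 \\ 0 & 0\end{bmatrix}$ — but that is not i.i.d.\ Gaussian. The correct move is to keep $\tilde Z$ fully i.i.d.\ Gaussian and absorb the unwanted cross-block contributions into the \emph{analysis}, bounding them by $O(\eta)$ uniformly over the compact sets, then send $\eta \to 0$.

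So the key steps, in order, are: \textbf{(1)} Fix $\eta > 0$ and apply \cref{thm:gmt} to the enlarged PO/AO problems with variables $\tilde w \in S_W^{(\eta)}$, $\tilde u \in S_U^{(\eta)}$, matrix $\tilde Z$, and the continuous function $\tilde\psi(\tilde w,\tilde u) = \psi((w,w'),(u,u'))$ obtained purely by decoding (no random terms). This gives $\Pr(\tilde\Phi_\eta(\tilde Z) < c) \le 2\Pr(\tilde\phi_\eta(\tilde G,\tilde H) \le c)$ where $\tilde\Phi_\eta, \tilde\phi_\eta$ are the enlarged primary/auxiliary values. \textbf{(2)} Express $\tilde\Phi_\eta(\tilde Z)$: writing $\tilde Z$ in blocks $Z_{11}\,(n\times d)$, $Z_{12}\,(n\times d')$, $Z_{21}\,(n'\times d)$, $Z_{22}\,(n'\times d')$, the bilinear term is $\langle \tilde u, \tilde Z\tilde w\rangle = \langle u, Z_{11} w\rangle + \eta\langle u, Z_{12} w'\rangle + \eta\langle u', Z_{21} w\rangle + \eta^2\langle u', Z_{22} w'\rangle$. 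Since $S_W, S_U$ are compact, there is a deterministic radius bound, and the three extra terms are bounded in absolute value by $\eta \cdot M(\|Z_{12}\|_\op + \|Z_{21}\|_\op + \|Z_{22}\|_\op)$ for a constant $M$ depending only on $S_W, S_U$. Hence $|\tilde\Phi_\eta(\tilde Z) - \Phi(Z_{11})| \le \eta \rho(\tilde Z)$ where $\rho$ has finite expectation (operator norms of Gaussian blocks), and $Z_{11}$ has exactly the distribution of $Z$. \textbf{(3)} Similarly for the AO: $\tilde G \sim N(0, I_{n+n'})$, $\tilde H \sim N(0, I_{d+d'})$; the term $\|\tilde w\|_2 \langle \tilde G, \tilde u\rangle + \|\tilde u\|_2\langle \tilde H, \tilde w\rangle$ satisfies $\|\tilde w\|_2 = \|w\|_2 + O(\eta)$, $\langle \tilde G, \tilde u\rangle = \langle G_{1:n}, u\rangle + O(\eta \|\tilde G\|)$, etc., so $|\tilde\phi_\eta(\tilde G,\tilde H) - \phi(G_{1:n}, H_{1:d})| \le \eta\rho'(\tilde G,\tilde H)$ with $\rho'$ of finite expectation, and $(G_{1:n}, H_{1:d})$ has the distribution of $(G,H)$ from the target statement. \textbf{(4)} Combine: for any $c$ and any $\epsilon > 0$,
\[
\Pr(\Phi(Z) < c - \epsilon) \le \Pr(\tilde\Phi_\eta(\tilde Z) < c - \epsilon + \eta\rho(\tilde Z)) \le \Pr(\tilde\Phi_\eta(\tilde Z) < c) + \Pr(\eta\rho(\tilde Z) \ge \epsilon),
\]
apply \cref{thm:gmt}, then push the bound back to $\phi$ the same way, and send $\eta \to 0$ (so the $\Pr(\eta\rho \ge \epsilon)$ terms vanish by Markov) and then $\epsilon \to 0$, using that $c \mapsto \Pr(\Phi(Z) < c)$ and $c \mapsto \Pr(\phi(G,H) \le c)$ are monotone and that the target inequality is closed under the appropriate limits (approach $c$ from below on the left, handle the right side via right-continuity of the CDF or by an $\epsilon$-inflation absorbed into the limit).

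The main obstacle is the bookkeeping in step (4): one has to be careful that the $\eta$-perturbations on the PO side and on the AO side are coordinated so that the limiting inequality is exactly $\Pr(\Phi(Z) < c) \le 2\Pr(\phi(G,H) \le c)$ and not a strict/non-strict mismatch. The clean way is to prove, for every $\epsilon > 0$, the relaxed bound $\Pr(\Phi(Z) \le c - \epsilon) \le 2\Pr(\phi(G,H) \le c + \epsilon) + r(\eta)$ with $r(\eta) \to 0$, then let $\eta \to 0$ to get $\Pr(\Phi(Z) \le c-\epsilon) \le 2\Pr(\phi(G,H) \le c+\epsilon)$ for all $\epsilon$, and finally let $\epsilon \downarrow 0$: the left side increases to $\Pr(\Phi(Z) < c)$ (by continuity of measure along the increasing events $\{\Phi \le c - \epsilon\} \uparrow \{\Phi < c\}$), and the right side decreases to $2\Pr(\phi(G,H) \le c)$ (by continuity along $\{\phi \le c+\epsilon\} \downarrow \{\phi \le c\}$), yielding the claim. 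Everything else — compactness of the embedded sets, continuity of $\tilde\psi$, finiteness of $\E\|\,\cdot\,\|_\op$ for Gaussian blocks — is routine.
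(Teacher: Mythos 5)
Your proposal is correct and follows essentially the same route as the paper's own proof: embed the extra variables as coordinates scaled by a small parameter, apply the standard GMT (\cref{thm:gmt}) to the augmented problem with a fully i.i.d.\ Gaussian matrix, bound the perturbation of both the PO and AO values by the scaling parameter times operator/Euclidean norms of Gaussian blocks, and pass to the limit using monotonicity and right-continuity of the distribution functions. The only differences are cosmetic (the paper uses a single $\sqrt{\epsilon}$ slack and concentration where you use a separate $\epsilon$ and Markov's inequality), and your initial false start about placing the random cross terms into $\psi$ is correctly retracted.
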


\begin{proof}
Let $\epsilon \in (0, 1)$ be arbitrary and 
\[ S_{W,\epsilon} := \{ (w, \epsilon w') : (w, w') \in S_W \}, \quad S_{U,\epsilon} := \{ (u, \epsilon u') : (u,u') \in S_U \}. \]
Define $\psi_{\epsilon} ((w, w'),(u, u')) := \psi((w,\frac{1}{\epsilon}w'),(u,\frac{1}{\epsilon} u'))$ so that if $W = (w, \epsilon w')$ and $U = (u, \epsilon u')$, then $\psi_{\epsilon}(W, U) = \psi((w,w'),(u,u'))$. We also define $S_w = \{ w \in \R^d: \exists w' \, s.t. \, (w, w') \in S_W \}$. The other sets $S_{w'}, S_u$ and $S_{u'}$ are defined similarly. It is clear that $S_w, S_{w'}, S_u, S_{u'}$, $S_{W,\epsilon}$ and $S_{U,\epsilon}$ are all still compact in their respective topology, and $\psi_{\epsilon}$ is continuous for every $\epsilon > 0$. 

Let $Z' : (n + n') \times (d + d')$ be a matrix with i.i.d. $N(0,1)$ entries such that the top left $n \times d$ matrix is $Z$. Similarly, we define $G'$ to be a $(n + n')$-dimensional Gaussian vector with independent coordinates such that the first $n$ coordinates are $G$, and $H'$ to be a $(d + d')$-dimensional Gaussian vector with independent coordinates such that the first $d$ coordinates are $H$. Next, consider the augmented PO and AO:
\begin{equation} \label{eqn:gmt-mod-augmented}
    \begin{split}
        \Phi_{\epsilon}(Z') &:= \min_{W \in S_{W,\epsilon}} \max_{U \in S_{U,\epsilon}} \langle U, Z' W \rangle + \psi_{\epsilon}(W,U)\\
        \phi_{\epsilon}(G',H') &:=  \min_{W \in S_{W,\epsilon}} \max_{U \in S_{U,\epsilon}} \|W\|_2 \langle G', U \rangle + \|U\|_2 \langle H', W \rangle + \psi_{\epsilon}(W,U)\\
    \end{split}
\end{equation}
It is clear that for a small value of $\epsilon$, the augmented problem will be close to the original problem. More precisely, for every $(w, w') \in S_W$ and $(u, u') \in S_U$
\begin{equation} \label{eqn:gmt-mod-PO}
    \begin{split}
        | \langle (w, \epsilon &w'), Z' (u, \epsilon u') \rangle - \langle w, Z u \rangle| \\
        &= |\epsilon \langle (0, w'), Z' (u, 0) \rangle + \epsilon \langle (w, 0), Z' (0, u') \rangle + \epsilon^2 \langle (0, w'), Z' (0, u') \rangle|\\        
        &\leq \epsilon (R(S_w) + R(S_{w'})) (R(S_u) + R(S_{u'}))\| Z' \|_{\OP}  = \epsilon A \|Z'\|_{\OP} \\
    \end{split}
\end{equation}
where $A := (R(S_w) + R(S_{w'})) (R(S_u) + R(S_{u'}))$ is deterministic and does not depend on $\epsilon$. Similarly, it is routine to check
\begin{equation*} 
    \begin{split}
        \| w \|_2 \langle G, u \rangle &= \| w \|_2 (\langle G', (u, \epsilon u') \rangle - \epsilon \langle G', (0, u') \rangle)\\
        \| u \|_2 \langle H, w \rangle &= \| u \|_2 (\langle H', (w, \epsilon w') \rangle - \epsilon \langle H', (0, w') \rangle)\\
    \end{split}
\end{equation*}
so by the triangle inequality and Cauchy-Schwarz inequality, we have
\begin{equation} \label{eqn:gmt-mod-AO1}
    \begin{split}
        \big\lvert \| (w, \epsilon &w') \|_2 \langle G', (u, \epsilon u') \rangle - \| w \|_2 \langle G, u \rangle \big\rvert \\
        &\leq \epsilon R(S_{w'}) \| G' \|_2 (R(S_u) + \epsilon R(S_{u'})) + \epsilon R(S_w) \| G' \|_2 R(S_{u'})
        \leq \epsilon A \| G' \|_2 \\
    \end{split}
\end{equation}
and
\begin{equation} \label{eqn:gmt-mod-AO2}
    \begin{split}
        \big\lvert \| (u, \epsilon &u') \|_2 \langle H', (w, \epsilon w') \rangle - \| u \|_2 \langle H, w \rangle \big\rvert \\
        \leq \, &\epsilon R(S_{u'}) \| H' \|_2 (R(S_w) + \epsilon R(S_{w'})) + \epsilon R(S_u) \| H' \|_2 R(S_{w'}) \leq \epsilon A \| H' \|_2 \\
    \end{split}
\end{equation}
From \eqref{eqn:gmt-mod-PO}, it follows that 
\begin{equation}\label{eqn:Phi-apx}
\left|\Phi_{\epsilon}(Z') - \Phi(Z)\right| \le \epsilon A \|Z'\|_{\OP}. \end{equation}
Similarly, from \eqref{eqn:gmt-mod-AO1} and \eqref{eqn:gmt-mod-AO2}, it follows that
\begin{equation}\label{eqn:phi-apx} 
|\phi_{\epsilon}(G',H') - \phi(G,H)| \le \epsilon A(\|G'\|_2 + \|H'\|_2). 
\end{equation}

Approximating the original PO and AO by \eqref{eqn:gmt-mod-augmented} allows us to directly apply the Gaussian Minmax Theorem. For any $c \in \R$, we have
\begin{equation*}
    \begin{split}
        \Pr(\Phi(Z) < c) 
        &\le \Pr(\Phi_{\epsilon}(Z') < c + \sqrt{\epsilon}) + \Pr( \epsilon A \|Z'\|_{\OP} > \sqrt{\epsilon}) \\
        &\le 2\Pr(\phi_{\epsilon}(G',H') \le c + \sqrt{\epsilon}) + \Pr( \epsilon A \|Z'\|_{\OP} > \sqrt{\epsilon})  \\
        &\le 2\Pr(\phi(G',H') \le c + 2\sqrt{\epsilon}) + 2 \Pr \left( \epsilon A(\|G'\|_2 + \|H'\|_2) > \sqrt{\epsilon} \right)  \\
        &\quad + \Pr( \epsilon A \|Z'\|_{\OP} > \sqrt{\epsilon})  \\
        &\le 2\Pr(\phi(G',H') \le c + 2\sqrt{\epsilon}) + 2 \Pr \left( \|G'\|_2> \frac{1}{2 A \sqrt{\epsilon}} \right) \\
        &\quad + 2 \Pr \left( \|H'\|_2> \frac{1}{2 A \sqrt{\epsilon}} \right) + \Pr \left( \|Z'\|_{\OP} > \frac{1}{A \sqrt{\epsilon}} \right)  \\
    \end{split}
\end{equation*}

where we used \eqref{eqn:Phi-apx} in the first inequality, \cref{thm:gmt} in the second inequality, and \eqref{eqn:phi-apx} in the last inequality. This holds for arbitrary $\epsilon > 0$ and taking the limit $\epsilon \to 0$ shows the result, because the CDF is right continuous \citep{durrett2019probability} and the remaining terms go to zero by standard concentration inequalities (\cref{lem:norm-concentration,thm:rmt}).
\end{proof}

\section{Uniform Convergence Bounds}
\label{apdx:uniform-convergence}

We will now prove the main generalization bound, as well as its special cases in norm balls and specifically Euclidean norm balls.

\subsection{General case: Proof of Theorem~\ref{thm:main-generalization}}
\label{apdx:uniform-convergence:general}

For convenience, we restate the definition of covariance splitting here:
\covsplit*

It follows from our definition that $\Sigma_1 \Sigma_2 = 0$. Although our results in \cref{apdx:norm-bounds} requires this orthogonality condition (in particular, \cref{lem:norm-auxiliary}), we note that all of our results here in \cref{apdx:uniform-convergence} continue to hold as long as $\Sigma = \Sigma_1 + \Sigma_2$ and both $\Sigma_1, \Sigma_2$ are positive semi-definite. To apply the Gaussian Minimax Theorem, we first formulate the generalization gap as an optimization problem in terms of a random matrix with $N(0,1)$ entries. 

\begin{lemma}\label{lem:rewrite-gap}
Under the model assumptions in \eqref{eqn:model}, let $\mathcal{K}$ be an arbitrary compact set and $\Sigma = \Sigma_1 \oplus \Sigma_2$. Define the primary optimization problem (PO) as
\begin{equation}
    \Phi := \max_{\substack{(w_1, w_2) \in \cS \\ Z_1 w_1 + Z_2 w_2 = \xi}} \| w_1 \|_2^2 + \| w_2 \|_2^2
\end{equation}
where
\begin{equation}
    \cS = \{ (w_1, w_2): \exists w \in \cK \, s.t. \, w_1 = \Sigma_1^{1/2} (w-w^*) \text{ and } w_2 = \Sigma_2^{1/2} (w-w^*) \}
\end{equation}
and $Z_1, Z_2$ are both $n \times d$ random matrices with i.i.d. standard normal entries independent of $\xi$ and each other. Then the generalization gap of interpolators is equal in distribution to the sum of the Bayes risk and the PO:
\begin{equation}
    \max_{w \in \cK, \hat{L}(w) = 0} L(w) - \hat{L}(w) \overset{\cD}{=} \sigma^2 + \Phi
.\end{equation}
\end{lemma}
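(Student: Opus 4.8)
The plan is to strip the terms of $L(w)-\hat L(w)$ off one at a time on the event $\hat L(w)=0$, and then recognize what remains as a Gaussian optimization problem after two distributional substitutions. First, when $\hat L(w)=0$ we trivially have $L(w)-\hat L(w)=L(w)=\sigma^2+\Norm{w-w^*}_\Sigma^2$, and the interpolation constraint $\hat L(w)=0$ is exactly $Xw=Y$, i.e.\ $X(w-w^*)=\xi$. Substituting $u=w-w^*$ (so $u$ ranges over $\cK-w^*$) gives
\[
    \max_{w\in\cK,\ \hat L(w)=0} L(w)-\hat L(w)
    = \sigma^2 + \max_{u\in\cK-w^*,\ Xu=\xi} \Norm{u}_\Sigma^2 .
\]
Since $X$ has i.i.d.\ $N(0,\Sigma)$ rows and is independent of $\xi$, we have $X\overset{\cD}{=}Z\Sigma^{1/2}$ for an $n\times d$ matrix $Z$ with i.i.d.\ $N(0,1)$ entries independent of $\xi$; because this is an equality in law of the whole matrix, it may be substituted inside the maximum.

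Next I bring in the covariance splitting. Using $\Sigma=\Sigma_1\oplus\Sigma_2$, one checks $\Sigma^{1/2}=\Sigma_1^{1/2}+\Sigma_2^{1/2}$ and $\Sigma_1^{1/2}\Sigma_2^{1/2}=0$ (the cross terms vanish because $\Span(\Sigma_1)\perp\Span(\Sigma_2)$). Writing $w_1=\Sigma_1^{1/2}u$ and $w_2=\Sigma_2^{1/2}u$, these vectors live in the orthogonal subspaces $\Span(\Sigma_1)$ and $\Span(\Sigma_2)$, and $\Norm{u}_\Sigma^2=\langle u,\Sigma_1 u\rangle+\langle u,\Sigma_2 u\rangle=\Norm{w_1}_2^2+\Norm{w_2}_2^2$, while $Z\Sigma^{1/2}u=Zw_1+Zw_2$. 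As $u$ ranges over $\cK-w^*$ the pair $(w_1,w_2)$ ranges exactly over $\cS$; the component of $u$ in $(\Span\Sigma)^\perp$ is annihilated by both $\Sigma$ and $Z\Sigma^{1/2}$, so parametrizing by $(w_1,w_2)\in\cS$ loses nothing. The key point is that $Zw_1$ depends only on the columns of $Z$ in the $\Span(\Sigma_1)$ block and $Zw_2$ only on the columns in the disjoint $\Span(\Sigma_2)$ block; these blocks of $Z$ are independent, so the $\cS$-indexed random field $(Zw_1,Zw_2)$ has the same law as $(Z_1w_1,Z_2w_2)$ for independent i.i.d.-$N(0,1)$ matrices $Z_1,Z_2$ (independent of $\xi$). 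Substituting, the constraint $Zw_1+Zw_2=\xi$ becomes $Z_1w_1+Z_2w_2=\xi$ and the maximization becomes precisely $\Phi$, yielding $\max_{w\in\cK,\ \hat L(w)=0} L(w)-\hat L(w)\overset{\cD}{=}\sigma^2+\Phi$. The degenerate case in which $\cK$ contains no interpolator is handled automatically by the convention that a max over the empty set is $-\infty$, which holds consistently on both sides.

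The step I would be most careful about is justifying the two distributional replacements ($X\leadsto Z\Sigma^{1/2}$, and then $(Zw_1,Zw_2)\leadsto(Z_1w_1,Z_2w_2)$) as equalities of the laws of the entire $\cK$-indexed random fields rather than merely marginally for each fixed $w$ — this is exactly what licenses performing them inside the $\max$. Both follow from the elementary fact that a linear image of a standard Gaussian matrix is determined in law by the Gram structure of the subspaces involved, combined with the orthogonality of $\Span(\Sigma_1)$ and $\Span(\Sigma_2)$ built into the definition of covariance splitting; the remaining manipulations are routine bookkeeping (change of variables, discarding the kernel of $\Sigma$, matching the empty-constraint case).
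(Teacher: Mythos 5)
Your proposal is correct and follows essentially the same route as the paper: rewrite $L(w)-\hat L(w)$ as $\sigma^2+\Norm{w-w^*}_\Sigma^2$ under the constraint $X(w-w^*)=\xi$, decompose the Mahalanobis norm across the split, and replace the Gaussian design by two independent standard Gaussian matrices acting on the orthogonal supports of $\Sigma_1$ and $\Sigma_2$ (the paper packages this as the single identity $X\overset{\cD}{=}Z_1\Sigma_1^{1/2}+Z_2\Sigma_2^{1/2}$, whereas you reach it in two steps via $X\overset{\cD}{=}Z\Sigma^{1/2}$ and independence of the restrictions of $Z$ to orthogonal subspaces). Your justification that these are equalities in law of the entire $\cK$-indexed field, jointly with $\xi$, is exactly the point that licenses the substitution inside the maximum.
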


\begin{proof}
Recall that $L(w) = \sigma^2 + \|w - w^*\|_{\Sigma}^2 $ and $\hat{L}(w) = 0$ is equivalent to $Y = Xw$. Observe that 
\begin{equation*}
    X \overset{\cD}{=} Z_1 \Sigma_1^{1/2} + Z_2 \Sigma_2^{1/2} \quad \text{and} \quad \| w \|_{\Sigma}^2 = \| w \|_{\Sigma_1}^2 + \| w \|_{\Sigma_2}^2
\end{equation*}
so we can decompose
\begin{equation*}
    \begin{split}
        \max_{w \in \cK, \hat{L}(w) = 0} L(w) - \hat{L}(w) &= \sigma^2 + \max_{w \in \cK, Y = Xw} \|w - w^*\|_{\Sigma}^2\\
        &= \sigma^2 + \max_{w \in \cK, X(w-w^*) = \xi} \|w - w^*\|_{\Sigma}^2\\
        &\overset{\cD}{=} \sigma^2 + \max_{\substack{w \in \cK - w^*\\ (Z_1 \Sigma_1^{1/2} + Z_2 \Sigma_2^{1/2}) w = \xi}} \|w\|_{\Sigma_1}^2 + \|w\|_{\Sigma_2}^2 = \sigma^2 + \Phi
    . \qedhere\end{split}
\end{equation*}
\end{proof}

\begin{lemma}[Application of GMT]\label{lem:gmt-app}
In the same setting as \cref{lem:rewrite-gap}, let $G \sim N(0,I_n), H \sim N(0,I_d)$ be Gaussian vectors independent of $Z_1, Z_2, \xi$ and each other. With the same definition of $\cS$, define the auxiliary optimization problem (AO) as
\begin{equation} \label{eqn:phi}
    \phi := \max_{\substack{(w_1, w_2) \in \cS \\ \norm{\xi - Z_1 w_1 - G \|w_2\|_2 }_2 \leq \langle w_2, H \rangle}} \| w_1 \|_2^2 + \| w_2 \|_2^2
\end{equation}
Then it holds that
\begin{equation}
    \Pr(\Phi > t \, | \, Z_1, \xi) \le 2 \Pr(\phi \ge t \, | \, Z_1, \xi),
\end{equation}
and taking expectations we have
\begin{equation}
    \Pr(\Phi > t ) \le 2 \Pr(\phi \ge t ).
\end{equation}
\end{lemma}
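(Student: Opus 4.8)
The plan is to recognize $\Phi$ as (minus) the value of a min--max problem of the shape required by the variant Gaussian Minmax Theorem (\cref{thm:gmt-mod}), treating $Z_2$ as the random Gaussian matrix and working throughout conditionally on $(Z_1,\xi)$. The block $w_1=\Sigma_1^{1/2}(w-w^*)$ multiplies only the \emph{fixed} matrix $Z_1$, so it will play the role of the ``extra'' primal variable that enters GMT solely through the deterministic function $\psi$, while $w_2=\Sigma_2^{1/2}(w-w^*)$ is the main primal variable. The equality constraint $Z_1w_1+Z_2w_2=\xi$ in $\Phi$ is linear, so for each radius $\Lambda>0$ we introduce a bounded dual variable and set
\[
  \Phi_\Lambda:=\min_{(w_1,w_2)\in\cS}\ \max_{\|\lambda\|_2\le\Lambda}\ \langle\lambda,Z_2w_2\rangle+\langle\lambda,Z_1w_1-\xi\rangle-\|w_1\|_2^2-\|w_2\|_2^2 .
\]
Here $\cS$ is compact (a continuous image of $\cK$), and the inner maximum equals $\Lambda\,\|Z_1w_1+Z_2w_2-\xi\|_2$. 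Restricting $\lambda$ to a ball can only shrink this inner maximum, so $\Phi_\Lambda\le-\Phi$ for every $\Lambda$ (with the usual convention $\min\varnothing=+\infty$, matching $\Phi=-\infty$). In particular $\{\Phi>t\}\subseteq\{\Phi_\Lambda<-t\}$ for every $\Lambda$.

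\textbf{Applying the variant GMT.} Conditionally on $(Z_1,\xi)$ -- under which $Z_2$, $G$, $H$ are still independent with i.i.d.\ $N(0,1)$ entries and $\psi$ is a fixed continuous function -- apply \cref{thm:gmt-mod} with random matrix $Z_2$, main primal variable $w_2$, extra primal variable $w_1$, main dual variable $\lambda$ on the compact ball $\{\|\lambda\|_2\le\Lambda\}$, no extra dual variable, and $\psi\big((w_2,w_1),\lambda\big)=\langle\lambda,Z_1w_1-\xi\rangle-\|w_1\|_2^2-\|w_2\|_2^2$. This is exactly where \cref{thm:gmt-mod} is needed rather than \cref{thm:gmt}: the auxiliary problem must carry $\|w_2\|_2$, not $\|(w_1,w_2)\|_2$. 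The theorem yields, for every $c\in\R$ and every $\Lambda$,
\[
  \Pr\big(\Phi_\Lambda<c\mid Z_1,\xi\big)\ \le\ 2\,\Pr\big(\phi_\Lambda\le c\mid Z_1,\xi\big),
\]
where $\phi_\Lambda:=\min_{(w_1,w_2)\in\cS}\max_{\|\lambda\|_2\le\Lambda}\ \|w_2\|_2\langle G,\lambda\rangle+\|\lambda\|_2\langle H,w_2\rangle+\langle\lambda,Z_1w_1-\xi\rangle-\|w_1\|_2^2-\|w_2\|_2^2$.

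\textbf{Identifying the limit of the AO and concluding.} Collecting the $\lambda$-terms in $\phi_\Lambda$, the inner maximum over $\{\|\lambda\|_2\le\Lambda\}$ equals $\Lambda\max\big\{0,\ \norm{\xi-Z_1w_1-G\|w_2\|_2}_2+\langle w_2,H\rangle\big\}$; hence $\phi_\Lambda$ is nondecreasing in $\Lambda$, and by a standard exact-penalty argument on the compact set $\cS$,
\[
  \phi_\Lambda\ \uparrow\ \min_{\substack{(w_1,w_2)\in\cS\\ \norm{\xi-Z_1w_1-G\|w_2\|_2}_2\ \le\ -\langle w_2,H\rangle}}\big(-\|w_1\|_2^2-\|w_2\|_2^2\big)\qquad\text{as }\Lambda\to\infty .
\]
Conditionally on $(Z_1,\xi)$ we have $(G,H)\overset{\cD}{=}(G,-H)$, so this limit is equal in distribution to $-\phi$ with $\phi$ as in \eqref{eqn:phi}. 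Since $\{\phi_\Lambda\le-t\}$ is decreasing in $\Lambda$ with intersection $\{\lim_\Lambda\phi_\Lambda\le-t\}$, continuity of probability from above gives $\lim_{\Lambda\to\infty}\Pr(\phi_\Lambda\le-t\mid Z_1,\xi)=\Pr(-\phi\le-t\mid Z_1,\xi)=\Pr(\phi\ge t\mid Z_1,\xi)$. Combining this with the previous two steps, $\Pr(\Phi>t\mid Z_1,\xi)\le\Pr(\Phi_\Lambda<-t\mid Z_1,\xi)\le 2\Pr(\phi_\Lambda\le-t\mid Z_1,\xi)$ for every $\Lambda$, whence $\Pr(\Phi>t\mid Z_1,\xi)\le 2\Pr(\phi\ge t\mid Z_1,\xi)$; taking expectations over $(Z_1,\xi)$ gives the unconditional statement.

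\textbf{Main obstacle.} The only real subtlety is the truncation of the dual variable: one must carry the radius $\Lambda$ through both optimization problems, justify the monotone exact-penalty limit on the compact set $\cS$, and keep every inequality pointing the right way as $\Lambda\to\infty$ (which is why only $\Phi_\Lambda\le-\Phi$ is needed on the primary side, but the full penalty limit is needed on the auxiliary side). The sign bookkeeping in the last step -- that the natural limiting constraint is $\norm{\cdot}_2\le-\langle w_2,H\rangle$ and matches \eqref{eqn:phi} only after using the symmetry of $H$ -- also needs care, but is routine and parallels the limiting argument already used in the proof of \cref{thm:gmt-mod}.
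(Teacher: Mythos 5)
Your proposal is correct and follows essentially the same route as the paper's proof: Lagrangian reformulation, truncation of the dual variable to a compact ball, conditional application of the variant GMT (\cref{thm:gmt-mod}) with $w_1$ as the extra primal variable, an exact-penalty limit in the truncation radius justified by compactness of $\cS$, and continuity of probability from above. The only differences are organizational — you negate to a min--max at the outset rather than at the GMT step, and you make the $H\mapsto -H$ symmetry explicit where the paper absorbs it — and the exact-penalty limit you label as "standard" is precisely the two-case compactness argument the paper writes out in full.
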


\begin{proof}
By introducing Lagrange multipliers, we have
\begin{equation*}
    \begin{split}
        \Phi &= \max_{(w_1, w_2) \in \cS} \min_{\lambda} \, \| w_1 \|_2^2 + \| w_2 \|_2^2 + \langle \lambda, Z_2w_2 - (\xi-Z_1w_1)\rangle \\
        &= \max_{(w_1, w_2) \in \cS} \min_{\lambda} \, \langle \lambda, Z_2w_2 \rangle + \| w_1 \|_2^2 + \| w_2 \|_2^2 - \langle \lambda, \xi-Z_1w_1\rangle.\\
    \end{split}
\end{equation*}

By independence, the distribution of $Z_2$ remains the same after conditioning on $Z_1$ and $\xi$ and the randomness in $\Phi$ comes solely from $Z_2$. Since the mapping from $w$ to $(w_1, w_2)$ is continuous and $\cK$ is compact, $\cS$ is compact. To apply \cref{thm:gmt-mod}, we can take $\psi(w_1, w_2, \lambda) = \| w_1 \|_2^2 + \| w_2 \|_2^2 - \langle \lambda, \xi-Z_1w_1\rangle$, which is clearly continuous. The only challenge is that the domain of $\lambda$ is not compact, but we can handle it by a truncation argument. Define
\begin{equation}
    \Phi_r := \max_{(w_1, w_2) \in \cS} \min_{\| \lambda \| \leq r} \, \langle \lambda, Z_2w_2 \rangle + \| w_1 \|_2^2 + \| w_2 \|_2^2 - \langle \lambda, \xi-Z_1w_1\rangle
\end{equation}
and observe that $\Phi \leq \Phi_r$, since the minimum in the definition of $\Phi_r$ ranges over a smaller set. The AO associated with $\Phi_r$ is
\begin{equation}
    \begin{split}
        \phi_r :&= \max_{(w_1, w_2) \in \cS} \min_{\| \lambda \| \leq r} \, 
        \| w_2 \|_2 \langle G, \lambda \rangle + \| \lambda \|_2 \langle H, w_2 \rangle + \| w_1 \|_2^2 + \| w_2 \|_2^2 - \langle \lambda, \xi-Z_1w_1\rangle \\
        &= \max_{(w_1, w_2) \in \cS} \min_{\| \lambda \| \leq r} \, 
        \| \lambda \|_2 \langle H, w_2 \rangle - \langle \lambda, \xi-Z_1w_1-G \| w_2 \|_2\rangle  + \| w_1 \|_2^2 + \| w_2 \|_2^2 \\
        &= \max_{(w_1, w_2) \in \cS} \min_{0 \leq \lambda \leq r} \, 
         \lambda \left( \langle H, w_2 \rangle - \| \xi-Z_1w_1-G \| w_2 \|_2 \|_2 \right) + \| w_1 \|_2^2 + \| w_2 \|_2^2 .\\
    \end{split}
\end{equation}

We observe that the untruncated auxiliary problem $\phi$ from \eqref{eqn:phi} has a completely analogous form:
\begin{equation*}
    \phi = \max_{(w_1, w_2) \in \cS} \min_{\lambda \geq 0} \, 
         \lambda \left( \langle H, w_2 \rangle - \| \xi-Z_1w_1-G \| w_2 \|_2 \|_2 \right) + \| w_1 \|_2^2 + \| w_2 \|_2^2. \\
\end{equation*}
This is because if $\langle H, w_2 \rangle - \| \xi-Z_1w_1-G \| w_2 \|_2 \|_2 \geq 0$ then the minimum is achieved at $\lambda = 0$, and if $w_1, w_2$ do not satisfy the constraint then taking $\lambda \to \infty$ sends the minimum to $-\infty$. From this formulation, we see that $\phi \le \phi_r \le \phi_s$ for any $r \ge s \ge 0$ since the minimum is taken over a larger set as $r$ grows, and is unconstrained in $\phi$. 

The proof that $\lim_{r \to \infty} \phi_r = \phi$ is an exercise in real analysis, which splits into two cases: 
\begin{enumerate}
    \item The auxiliary problem $\phi$ is infeasible. In this case, we know that for all $(w_1, w_2) \in \cS$ 
    \begin{equation*}
        \langle H, w_2 \rangle - \| \xi-Z_1w_1-G \| w_2 \|_2 \|_2 < 0.
    \end{equation*}
    By compactness of $\cS$ and continuity of the right hand side, there exists $\mu = \mu(\xi,Z_1,G,H) < 0$ (in particular, independent of $r$) such that
    \begin{equation*}
        \langle H, w_2 \rangle - \| \xi-Z_1w_1-G \| w_2 \|_2 \|_2 \leq \mu.
    \end{equation*}
    Therefore, we show
    \begin{equation*}
        \begin{split}
            \phi_r &\leq \max_{(w_1, w_2) \in \cS} \min_{0 \leq \lambda \leq r} \, \lambda \mu + \| w_1 \|_2^2 + \| w_2 \|_2^2 \\
            &= r\mu + \max_{(w_1, w_2) \in \cS} \| w_1 \|_2^2 + \| w_2 \|_2^2
        .\end{split}
    \end{equation*}
    Since the second term is bounded and has no dependence on $r$, taking $r \to \infty$ we have $\phi_r \to -\infty$ as desired (since $\phi = -\infty$ by definition). 
    
    \item The auxiliary problem $\phi$ is feasible. In this case, we can let $(w_1(r), w_2(r)) \in \cS$ be an arbitrary maximizer achieving the objective $\phi_r$ for each $r \ge 0$ by compactness. By compactness again, the sequence $(w_1(r),w_2(r))_{r = 1}^{\infty}$ at positive integer values of $r$ has a subsequential limit $(w_1(\infty), w_2(\infty)) \in \cS$, i.e. this point satisfies $(w_1(\infty),w_2(\infty)) = \lim_{n \to \infty} (w_1(r_n), w_2(r_n))$ for some sequence $r_n$ satisfying $r_n \ge n$.
    
    Suppose that $(w_1(\infty), w_2(\infty))$ does not satisfy the last constraint defining $\phi$, then by continuity, there exists $\mu < 0$ and a sufficiently small $\epsilon > 0$ such that for all $\|w_1 - w_1(\infty)\|_2 \leq \epsilon$ and $\|w_2 - w_2(\infty)\|_2 \leq \epsilon$, we have
    \begin{equation*}
        \langle H, w_2 \rangle - \| \xi-Z_1w_1-G \| w_2 \|_2 \|_2 \leq \mu.
    \end{equation*}
    This implies that for sufficiently large $n$, we have
    \begin{equation*}
        \langle H, w_2(r_n) \rangle - \| \xi-Z_1w_1(r_n)-G \| w_2(r_n) \|_2 \|_2 \leq \mu
    \end{equation*}
    and
    \begin{equation*}
        \begin{split}
            \phi_{r_n} &\leq r_n \mu + \| w_1(r_n) \|_2^2 + \| w_2(r_n) \|_2^2 \\
            &\leq r_n \mu + \max_{(w_1, w_2) \in \cS} \| w_1 \|_2^2 + \| w_2 \|_2^2 \\
        \end{split}
    \end{equation*}
    so $\phi_{r_n} \to -\infty$ -- but this is impossible, since considering any feasible element of $\phi$ we can show that $\phi_{r_n} \ge 0$. By contradiction, we find that $(w_1(\infty),w_2(\infty))$ is feasible for $\phi$. 
    
    By taking $\lambda = 0$ in the definition of $\phi_r$ we have
    \begin{equation*}
        \phi_{r_n} \leq \|w_1(r_n)\|_2^2 + \|w_2(r_n)\|_2^2
    .\end{equation*}
    By continuity, we show that
    \begin{equation*}
        \begin{split}
            \lim\sup_{n \to \infty} \phi_{r_n} &\leq \lim_{n \to \infty} \|w_1(r_n)\|_2^2 + \|w_2(r_n)\|_2^2 \\
            &= \|w_1(\infty)\|_2^2 + \|w_2(\infty)\|_2^2 \leq \phi. 
        \end{split}
    \end{equation*}
    Since $\phi_{r_n} \geq \phi$, the limit of $\phi_{r_n}$ exists and equals $\phi$. We can conclude that $\lim_{r \to \infty} \phi_r = \phi$ because $\phi_r$ is a monotone decreasing function of $r$.
\end{enumerate}

By our version of the Gaussian Minmax Theorem, \cref{thm:gmt-mod},
\begin{equation*}
    \Pr(\Phi_r > t | Z_1,\xi) = \Pr(-\Phi_r < -t | Z_1,\xi) \le 2 \Pr(-\phi_r \le -t | Z_1,\xi) = 2 \Pr(\phi_r \ge t | Z_1,\xi)
\end{equation*}
We introduce the negative signs here because we have originally a max-min problem instead of a min-max problem. This means the comparison theorem gives an upper bound, instead of a lower bound, on the quantity of interest. 

Finally, we can conclude 
\begin{equation*}
    \Pr(\Phi > t | Z_1, \xi) \le \inf_{r \ge 0} \Pr(\Phi_r > t | Z_1, \xi) \le 2 \inf_{r \ge 0} \Pr(\phi_r \ge t | Z_1, \xi) \le 2 \Pr(\phi \ge t | Z_1,\xi).
\end{equation*}
where the last step uses continuity (from above) of probability measure and the fact that $\phi_r$ monotonically decreases to $\phi$ almost surely.
\end{proof}

Recall the definition of Gaussian width and radius:

\gwidthrad*

It remains to analyze the auxiliary problem, which we do in the following lemma:
\begin{lemma}\label{lem:aux-analysis}
Let $\beta = 33 \sqrt{\frac{\log(32/\delta)}{n}} + 18 \sqrt{\frac{\rank(\Sigma_1) }{n}}$. If $n$ is sufficiently large such that $\beta \leq 1$, then with probability at least $1 - \delta$, it holds that
\begin{equation} \label{eqn:aux-bound}
    \phi \leq \frac{1+\beta}{n} (W(\Sigma_2^{1/2} \cK) + \rad(\Sigma_2^{1/2} \cK)\sqrt{2\log(16/\delta)} + \|w^*\|_{\Sigma_2} \sqrt{2\log(16/\delta)})^2 - \sigma^2
.\end{equation}
\end{lemma}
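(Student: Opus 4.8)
The plan is to fix a high-probability event on which several standard concentration estimates hold, and to show that on this event \emph{every} feasible point $(w_1,w_2)\in\cS$ of the auxiliary problem \eqref{eqn:phi} obeys $\|w_1\|_2^2+\|w_2\|_2^2+\sigma^2\le\frac{1+\beta}{n}M^2$, where $M:=W(\Sigma_2^{1/2}\cK)+(\rad(\Sigma_2^{1/2}\cK)+\|w^*\|_{\Sigma_2})\sqrt{2\log(16/\delta)}$; maximizing over feasible points then gives \eqref{eqn:aux-bound}, and if the AO is infeasible then $\phi=-\infty$ and there is nothing to prove. Write $s:=\|w_2\|_2$ (so the noise term in \eqref{eqn:phi} is $Gs$), let $V_0:=\Span\{\Sigma_1^{1/2}(w-w^*):w\in\cK\}$, let $V_1$ be the column span of $Z_1\Sigma_1^{1/2}$, and let $P_1$ be the orthogonal projection onto $V_1$; note $\dim V_0,\dim V_1\le\rank(\Sigma_1)$, and that for every $(w_1,w_2)\in\cS$ we have $w_1\in V_0$ and $Z_1w_1\in V_1$, hence $P_1^\perp Z_1w_1=0$. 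Feasibility forces $\langle w_2,H\rangle\ge0$; squaring the constraint and applying the Pythagorean theorem with $P_1^\perp Z_1w_1=0$ gives the two inequalities
\[
  \|P_1^\perp(\xi-Gs)\|_2^2\le\langle w_2,H\rangle^2,
  \qquad
  \|P_1(\xi-Z_1w_1-Gs)\|_2^2\le\langle w_2,H\rangle^2-\|P_1^\perp(\xi-Gs)\|_2^2.
\]

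\textbf{Bias and variance terms.} For the right-hand sides, write $\langle w_2,H\rangle=\langle\Sigma_2^{1/2}w,H\rangle-\langle\Sigma_2^{1/2}w^*,H\rangle\le\sup_{v\in\Sigma_2^{1/2}\cK}|\langle v,H\rangle|+|\langle\Sigma_2^{1/2}w^*,H\rangle|$; the first term is $\rad(\Sigma_2^{1/2}\cK)$-Lipschitz in $H$ with mean $W(\Sigma_2^{1/2}\cK)$, so \cref{thm:gaussian-concentration} controls it, while the second is $N(0,\|w^*\|_{\Sigma_2}^2)$, controlled by a standard Gaussian tail; together these give $\langle w_2,H\rangle\le M$ uniformly over feasible $w_2$. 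For the variance term, condition on $Z_1$ (so $P_1$ is fixed and $\xi,G$ remain independent Gaussians) and expand $\|P_1^\perp(\xi-Gs)\|_2^2=\|P_1^\perp\xi\|_2^2-2s\langle P_1^\perp\xi,P_1^\perp G\rangle+s^2\|P_1^\perp G\|_2^2$. Lower bounding $\|P_1^\perp\xi\|_2^2$ and $\|P_1^\perp G\|_2^2$ by their concentrated values (\cref{lem:norm-concentration} on the $(n-\dim V_1)$-dimensional range of $P_1^\perp$), bounding $|\langle P_1^\perp\xi,P_1^\perp G\rangle|$ through the conditional law $N(0,\sigma^2\|P_1^\perp G\|_2^2)$, and then using $2s\sigma\le s^2+\sigma^2$, yields for \emph{all} $s\ge0$ that $\|P_1^\perp(\xi-Gs)\|_2^2\ge(1-\beta_1)n(\sigma^2+s^2)$ with $\beta_1=O(\sqrt{\rank(\Sigma_1)/n}+\sqrt{\log(1/\delta)/n})$ (using $\dim V_1/n\le\rank(\Sigma_1)/n\le\sqrt{\rank(\Sigma_1)/n}$, valid because $\beta\le1$ forces $\rank(\Sigma_1)\le n$). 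Combined with the first inequality this already gives the feasibility bound $\sigma^2+s^2\le M^2/((1-\beta_1)n)$.

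\textbf{Bounding $\|w_1\|_2$ and closing.} On $V_0$ the matrix $Z_1$ acts as a Gaussian matrix, so \cref{thm:rmt} gives $\|Z_1w_1\|_2\ge\sigma_{\min}(Z_1|_{V_0})\|w_1\|_2\ge\sqrt n\,(1-O(\sqrt{\rank(\Sigma_1)/n}+\sqrt{\log(1/\delta)/n}))\,\|w_1\|_2$, while $\|P_1\xi\|_2$ and $\|P_1G\|_2$ concentrate at scale $\sqrt{\rank(\Sigma_1)}+O(\sqrt{\log(1/\delta)})$. Combining the second inequality above with $\|P_1(\xi-Z_1w_1-Gs)\|_2\ge\|Z_1w_1\|_2-\|P_1\xi\|_2-s\|P_1G\|_2$ yields
\[
  \|w_1\|_2\le\frac{\sqrt{M^2-(1-\beta_1)n(\sigma^2+s^2)}+\|P_1\xi\|_2+s\|P_1G\|_2}{\sigma_{\min}(Z_1|_{V_0})}.
\]
Now use the feasibility bound $\sigma^2+s^2\le M^2/((1-\beta_1)n)$ to conclude $\|P_1\xi\|_2+s\|P_1G\|_2\le M\cdot O(\sqrt{\rank(\Sigma_1)/n}+\sqrt{\log(1/\delta)/n})$, keep $M^2-(1-\beta_1)n(\sigma^2+s^2)$ intact, and add back $\sigma^2+s^2$: the $(\sigma^2+s^2)$ contributions cancel up to an error $O(\beta)(\sigma^2+s^2)\le\frac{M^2}{n}O(\beta)$, leaving $\|w_1\|_2^2+s^2+\sigma^2\le\frac{1+\beta}{n}M^2$. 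A union bound over the finitely many concentration events above, with failure probabilities allocated to sum to $\delta$ (which fixes the numerical constants in the statement), finishes the proof.

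\textbf{Main obstacle.} The one delicate point is obtaining the \emph{sharp} leading factor $1+\beta$ rather than a spurious constant such as $2$. This forces us to carry the quantity $M^2-(1-\beta_1)n(\sigma^2+s^2)$ through the bound on $\|w_1\|_2$, so that the variance contributed by $w_1$ and $w_2$ is exactly offset by the $(1-\beta_1)n(\sigma^2+s^2)$ gained from the component orthogonal to $V_1$, and to bound the $\Sigma_1$-induced error terms by $M$ times a quantity of order $\sqrt{\rank(\Sigma_1)/n}+\sqrt{\log(1/\delta)/n}$ — not its square root, which would ruin the rate — which is exactly where the feasibility bound $\sigma^2+s^2\lesssim M^2/n$ enters. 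Everything else is routine concentration bookkeeping.
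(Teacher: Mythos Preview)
Your proof is correct but follows a genuinely different decomposition from the paper's.

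The paper does \emph{not} project onto the column span $V_1$ of $Z_1\Sigma_1^{1/2}$. Instead it keeps the constraint $\|\xi-Z_1w_1-G\|w_2\|_2\|_2^2\le\langle w_2,H\rangle^2$ intact and expands the square directly, controlling the cross terms via \cref{lem:lowrank-projection}: since $G$ is spherically symmetric and independent of $(\xi,Z_1)$, and since $\{\xi a-Z_1w_1:a\in\R,\,w_1\in V_0\}$ lies in a subspace of dimension $\le\rank(\Sigma_1)+1$, one has $|\langle\xi a-Z_1w_1,G\rangle|\le\rho\,\|\xi a-Z_1w_1\|_2\|G\|_2$ uniformly, and similarly $|\langle\xi,Z_1w_1\rangle|\le\rho\,\|\xi\|_2\|Z_1w_1\|_2$. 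Two applications of AM-GM then give
\[
\langle w_2,H\rangle^2\ \ge\ (1-\rho)\bigl[\|\xi-Z_1w_1\|_2^2+\|w_2\|_2^2\|G\|_2^2\bigr]\ \ge\ (1-\rho)^2\bigl[\|\xi\|_2^2+\|Z_1w_1\|_2^2\bigr]+(1-\rho)\|w_2\|_2^2\|G\|_2^2,
\]
and substituting $\|Z_1w_1\|_2^2\ge(1-\gamma)n\|w_1\|_2^2$, $\|G\|_2^2\ge(1-\alpha)^2n$, $\|\xi\|_2^2\ge(1-\alpha)^2n\sigma^2$ yields $\|w_1\|_2^2+\|w_2\|_2^2+\sigma^2\le\frac{1+\beta}{n}\langle w_2,H\rangle^2$ in one clean multiplicative chain.

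What your approach buys: the explicit $P_1/P_1^\perp$ split is conceptually transparent---$P_1^\perp$ handles $\sigma^2+\|w_2\|_2^2$ in one shot, $P_1$ handles $\|w_1\|_2^2$---and avoids the somewhat slick use of \cref{lem:lowrank-projection} for two different cross terms. What the paper's approach buys: it never needs to carry the residual $M^2-(1-\beta_1)n(\sigma^2+s^2)$ through a second estimate, nor to loop back through the feasibility bound to control $\|P_1\xi\|_2+s\|P_1G\|_2$; the multiplicative error factors $(1-\rho),(1-\gamma),(1-\alpha)$ simply accumulate, which makes extracting the explicit constant $\beta=33\sqrt{\log(32/\delta)/n}+18\sqrt{\rank(\Sigma_1)/n}$ essentially bookkeeping. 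Your closing algebra (cancelling $(\sigma^2+s^2)$ up to $O(\beta)$ error) is correct but more delicate than necessary.
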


\begin{proof}
For notational simplicity, define
\begin{equation*}
    \begin{split}
        \alpha &:= 2\sqrt{\frac{\log(32/\delta)}{n}} \\
        \gamma &:= 3 \sqrt{\frac{\rank(\Sigma_1) }{n}} + 3\sqrt{\frac{2\log(16/\delta)}{n}} \\
        \rho &:= \sqrt{\frac{\rank(\Sigma_1) + 1}{n}} + 2\sqrt{\frac{\log(16/\delta)}{n}}
    .\end{split}
\end{equation*}

By a union bound, the following collection of events, which together we call $\cE$, occurs with probability at least $1 - \delta$: 
\begin{enumerate}
    \item (Approximate orthogonality.) By \cref{lem:lowrank-projection}, uniformly over all $w_1 \in \Sigma_1^{1/2} (\cK - w^*)$ and $a \in \mathbb{R}$, it holds that 
    \begin{equation} \label{eqn:G-orthogonal} 
        |\langle \xi a - Z_1 w_1, G \rangle| \leq \|\xi a - Z_1 w_1\|_2 \|G\|_2 \rho
    \end{equation}
    and
    \begin{equation}\label{eqn:xi-orthogonal}
    |\langle \xi, Z_1 w_1 \rangle| \leq \|\xi\|_2 \|Z_1 w_1\|_2 \rho. 
    \end{equation}

    \item (Approximate isometry.) By \cref{corr:wishart}, uniformly over all $w_1 \in \Sigma_1^{1/2} (\cK - w^*)$, it holds that
    \begin{equation}\label{eqn:isometry}
    (1 - \gamma) \| w_1 \|_2^2 \leq \frac{\| Z_1 w_1 \|_2^2}{n}\leq (1 + \gamma) \| w_1 \|_2^2.
    \end{equation}

    \item (Typical norm of $G$ and $\xi$.) By \cref{lem:norm-concentration}, it holds that
    \begin{equation}\label{eqn:g-norm}
         - \alpha \le \frac{1}{\sqrt{n}}\|G\|_2 - 1 \le \alpha
    \end{equation}
    and
    \begin{equation}\label{eqn:xi-norm}
         - \alpha \sigma \le \frac{1}{\sqrt{n}}\|\xi\|_2 - \sigma \le \alpha \sigma.
    \end{equation}
    
    \item (Typical size of $\langle \Sigma_2^{1/2} w^*, H \rangle$.) By the standard Gaussian tail bound $\Pr (|Z| \geq t) \leq 2e^{-t^2/2}$, it holds that
    \begin{equation}\label{eqn:signal-size} 
    |\langle \Sigma_2^{1/2} w^*, H \rangle| \le \|w^*\|_{\Sigma_2} \sqrt{2 \log(16/\delta)}
    \end{equation}
    because the marginal law of $\langle \Sigma_2^{1/2} w^*, H \rangle$ is $N(0, \|w^*\|_{\Sigma_2}^2)$.
    
    \item (Gaussian process concentration.) 
    By \cref{thm:gaussian-concentration}, it holds that
    \begin{equation} \label{eqn:gp-concentration} 
    \max_{w_2 \in \Sigma_2^{1/2} \cK} |\langle w_2, H \rangle| \leq W(\Sigma_2^{1/2} \cK) + \rad(\Sigma_2^{1/2} \cK) \sqrt{2\log(16/\delta)}
    \end{equation}
    because $ \max_{w_2 \in \Sigma_2^{1/2} \cK} |\langle u_2, H \rangle| $ is a $\rad(\Sigma_2^{1/2} \cK)$-Lipschitz function of $H$.
\end{enumerate}

From now on, the argument is conditional on the event $\mathcal{E}$ defined above. By squaring the last constraint in the definition of $\phi$ we see that
\begin{equation*}
    \begin{split}
        \langle  w_2, H \rangle^2 &\geq \|\xi - Z_1 w_1 - \|w_2\|_2 G\|_2^2 \\
        &= \norm{\xi - Z_1 w_1}_2^2 + \|w_2\|_2^2 \norm{G}_2^2 - 2\langle \xi - Z_1 w_1, \|w_2\|_2 G \rangle\\
        &\geq (1-\rho) [\norm{\xi - Z_1 w_1}_2^2 + \|w_2\|_2^2 \norm{G}_2^2]\\
    \end{split}
\end{equation*}
where in the last line we used \eqref{eqn:G-orthogonal} and the AM-GM inequality ($ab \le a^2/2 + b^2/2$). Rearranging gives the inequality
\begin{equation*}
    \begin{split}
        \|w_2\|_2^2 &\leq \frac{(1-\rho)^{-1}\langle  w_2, H \rangle^2 - \norm{\xi - Z_1 w_1}_2^2 }{\|G\|_2^2}\\
        &\leq \frac{(1-\rho)^{-1}\langle  w_2, H \rangle^2 - (1-\rho) [\norm{\xi}_2^2 + \norm{Z_1 w_1}_2^2] }{\|G\|_2^2}\\
        &\leq \frac{(1-\rho)^{-1}\langle  w_2, H \rangle^2 - (1-\rho) [\norm{\xi}_2^2 + \norm{Z_1 w_1}_2^2] }{(1 - \alpha)^2 n}\\
        &\leq - \frac{(1 - \gamma)(1 - \rho)}{(1 - \alpha)^2} \norm{w_1}_2^2 + \frac{(1-\rho)^{-1}\langle  w_2, H \rangle^2 - (1-\rho) \norm{\xi}_2^2  }{(1 - \alpha)^2 n}\\
    \end{split}
\end{equation*}
where in the second inequality we used \eqref{eqn:xi-orthogonal} and the AM-GM inequality again, in the third inequality we used \eqref{eqn:g-norm} and in the last inequality we used \eqref{eqn:isometry}. This shows
\begin{equation*}
    \begin{split}
        (1 - \gamma)(1 - \rho)(\norm{w_1}_2^2+\norm{w_2}_2^2) &\leq \frac{(1 - \gamma)(1 - \rho)}{(1 - \alpha)^2} \norm{w_1}_2^2 + \|w_2\|_2^2 \\
        &\leq \frac{(1-\rho)^{-1}\langle  w_2, H \rangle^2 - (1-\rho) \norm{\xi}_2^2  }{(1 - \alpha)^2 n} .\\
    \end{split}
\end{equation*}

Dividing through by the first two factors on the left hand side and plugging in \eqref{eqn:xi-norm} gives
\begin{equation*}
    \begin{split}
        \norm{w_1}_2^2+\norm{w_2}_2^2 &\leq \frac{(1-\rho)^{-2}\langle  w_2, H \rangle^2 - \norm{\xi}_2^2  }{(1 - \gamma)(1 - \alpha)^2 n} \\
        &\leq \frac{1}{(1 - \gamma)(1 - \alpha)^2(1-\rho)^2} \frac{\langle  w_2, H \rangle^2}{n} - \frac{\sigma^2}{1-\gamma}.\\
    \end{split}
\end{equation*}

We can simplify the first term by defining $\beta = (1 - \gamma)^{-1}(1 - \alpha)^{-2}(1-\rho)^{-2} - 1$ and the second term by observing $-\frac{\sigma^2}{1-\gamma} \leq - \sigma^2$. Finally, plugging into \eqref{eqn:phi} gives
\begin{equation*}
    \begin{split}
        \phi &\leq \max_{(w_1, w_2) \in S} (1+\beta) \frac{\langle w_2, H \rangle^2}{n} - \sigma^2 \\
        &= \frac{1+\beta}{n} \max_{w_2 \in \Sigma_2^{1/2}(\cK-w^*)} |\langle w_2, H \rangle|^2 - \sigma^2\\
        &\leq \frac{1+\beta}{n} \left( \max_{w_2 \in \Sigma_2^{1/2}\cK} |\langle w_2, H \rangle| + | \langle \Sigma_2^{1/2}w^*, H \rangle| \right)^2 - \sigma^2\\
    \end{split}
\end{equation*}
by the triangle inequality, and \eqref{eqn:aux-bound} follows by \eqref{eqn:signal-size} and \eqref{eqn:gp-concentration}. To deduce the explicit bound for $\beta$, first use that
\begin{equation*}
    (1-\alpha)^2 = 1 - 2\alpha + \alpha^2 \geq 1-2\alpha
\end{equation*}
and similarly $(1-\rho)^2 \geq 1-2\rho$ to show
\begin{equation*}
    \frac{1}{(1 - \gamma)(1 - \alpha)^2(1 - \rho)^2} \leq \frac{1}{(1 - \gamma)(1 - 2\rho)(1 - 2\alpha)}.
\end{equation*}

If $\gamma, \rho < 1/2$, then
\begin{equation*}
    \begin{split}
        (1 - \gamma)(1 - 2\alpha)(1 - 2\rho) &= 1 - \gamma - 2\alpha - 2\rho +2\gamma\alpha + 2\gamma\rho  + 4\alpha \rho - 4\gamma \alpha \rho \\
        &\geq 1 - \gamma - 2\alpha - 2\rho - 4\gamma \alpha \rho > 1 - 2\gamma - 2\alpha - 2\rho.\\
    \end{split}
\end{equation*}

Provided that $2\gamma + 2\alpha + 2\rho < 1/2$ (which implies that $\gamma, \rho < 1/2$), we can use the inequality $(1-x)^{-1} \leq 1 + 2x$ for $x \in [0, 1/2]$ to show that
\begin{equation*}
    \frac{1}{(1 - \gamma)(1 - \alpha)^2(1 - \rho)^2} \leq \frac{1}{1 - 2\gamma - 2\alpha - 2\rho} \leq  1 + 4\gamma + 4\alpha + 4\rho
\end{equation*}
and thus we can choose 
\begin{equation*}
    \beta = 33 \sqrt{\frac{\log(32/\delta)}{n}} + 18 \sqrt{\frac{\rank(\Sigma_1) }{n}} \geq 4\gamma + 4\alpha + 4\rho
\qedhere \end{equation*}
\end{proof}

We are finally ready to prove our main generalization bound:

\maingenbound*

\begin{proof}
By \cref{lem:rewrite-gap,lem:gmt-app}, we show that for any $t$
\begin{equation*}
    \Pr \left( \max_{w \in \cK, \hat{L}(w) = 0} L(w) - \hat{L}(w) > t \right) = \Pr (\Phi > t-\sigma^2) \leq 2 \Pr (\phi \geq t-\sigma^2).
\end{equation*}
By \cref{lem:aux-analysis}, the above is upper bounded by $\delta$ if we set $t-\sigma^2$ according to \eqref{eqn:aux-bound} with $\delta$ replaced by $\delta/2$. Observe that the $\sigma^2$ term cancels, and the proof is complete.
\end{proof}

\begin{remark}[Translation-invariant version]
Our generalization guarantee is stated in terms of $W(\cdot)$ and $\rad(\cdot)$, which are not translation-invariant. However, the generalization guarantee of \cref{thm:main-generalization} can be made translation invariant, e.g.\ replacing $W(\Sigma_2^{1/2} \cK)$ by $W(\Sigma_2^{1/2} (\cK - a))$ for an arbitrary $a \in \mathbb{R}^d$, by recentering the problem before applying \cref{thm:main-generalization}, i.e. by subtracting $X a$ from both sides of the
interpolation constraint $X w = X w^* + \xi$.%
\end{remark}
We also note that in \cref{thm:main-generalization}, there is no requirement that $w^* \in \cK$, so the true function may not necessarily lie in the class even if there is no noise ($\sigma = 0$).

\subsection{Specialization to General Norm Balls}

For convenience, we restate the general definition of effective rank.

\geneffrank*

Applying \cref{thm:main-generalization} to an arbitrary norm ball yield the following: 

\maingen*

\begin{proof}
Let $\cK = \{w : \|w\| \le B\}$ in \cref{thm:main-generalization}. It is easy to see that
\begin{equation*}
    W(\Sigma_2^{1/2} \cK) = \E \sup_{\|w\| \le B} |\langle \Sigma_2^{1/2} w, H \rangle| = \E \sup_{\|w\| \le B} |\langle  w, \Sigma_2^{1/2} H \rangle| = B \E \|\Sigma_2^{1/2} H\|_* 
\end{equation*}
and
\begin{equation*}
    R(\Sigma_2^{1/2} \cK) = \sup_{\|w\| \le B} \|\Sigma_2^{1/2} w\|_2 = B \sup_{\|w\| \le 1} \|w\|_{\Sigma_2}.
\end{equation*}

From our definition, it is clear that 
\begin{equation*}
    r_{\norm\cdot}(\Sigma) = \left( \frac{W(\Sigma^{1/2} \cK)}{R(\Sigma^{1/2} \cK)} \right)^2.
\end{equation*}
Observe that
\begin{equation*}
    \|w^*\|_{\Sigma_2} \leq \| w^* \| \sup_{\|w\| \le 1} \|w\|_{\Sigma_2} \leq B \sup_{\|w\| \le 1} \|w\|_{\Sigma_2} = R(\Sigma_2^{1/2} \cK).
\end{equation*}

The two above equations imply 
\begin{equation*}
    \begin{split}
        & W(\Sigma_2^{1/2} \cK) + \rad(\Sigma_2^{1/2} \cK)\sqrt{2\log(32/\delta)} + \|w^*\|_{\Sigma_2} \sqrt{2\log(32/\delta)} \\
        \leq \, &W(\Sigma_2^{1/2} \cK) +2 \sqrt{2\log(32/\delta)} \rad(\Sigma_2^{1/2} \cK) \\
        = \, &W(\Sigma_2^{1/2} \cK) + 2 \sqrt{\frac{2\log(32/\delta)}{r_{\norm\cdot}(\Sigma_2)}} W(\Sigma_2^{1/2} \cK) \\
        = \, & \left( 1 + 2 \sqrt{\frac{2\log(32/\delta)}{r_{\norm\cdot}(\Sigma_2)}} \right) \left( B \E \|\Sigma_2^{1/2} H\|_* \right). \\
    \end{split}
\end{equation*}
Under our assumptions that $\gamma \leq 1$ and $\delta \leq 1/4$, using the inequality $(1+x)(1+y) \leq 1 + x + 2y$ for $x \leq 1$, it is routine to check that
\begin{equation*}
    (1+\beta) \left( 1 + 2 \sqrt{\frac{2\log(32/\delta)}{r_{\norm\cdot}(\Sigma_2)}} \right)^2 \leq 1 + \gamma
.\end{equation*}
Plugging into \cref{thm:main-generalization} concludes the proof.
\end{proof}

\subsection{Special Case: Euclidean Norm}

In the Euclidean setting, the effective ranks are defined as follows:

\euclideffrank*

Due to the small difference between $r(\Sigma)$ and $r_{\norm{\cdot}_2 }(\Sigma)$, our generalization bound below requires a slightly different proof (see discussion in \cref{sec:general-norm}), but the proof strategies are exactly the same.

\euclidgen*

\begin{proof}
The proof is identical to \cref{corr:main-generalization} except for the inconsequential difference between $\E \|\Sigma_2^{1/2} g\|_2 $ and $\Tr(\Sigma_2)^{1/2}$. It is easy to see that 
\begin{equation*}
    W(\Sigma_2^{1/2} \cK) \leq B \Tr(\Sigma_2)^{1/2} \quad \text{and} \quad R(\Sigma_2^{1/2} \cK) = B \| \Sigma_2 \|_{\OP}^{1/2}.
\end{equation*}

By the same argument, we can show that $\|w^*\|_{\Sigma_2} \leq R(\Sigma_2^{1/2} \cK)$ and
\begin{equation*}
    \begin{split}
        & W(\Sigma_2^{1/2} \cK) + \rad(\Sigma_2^{1/2} \cK)\sqrt{2\log(32/\delta)} + \|w^*\|_{\Sigma_2} \sqrt{2\log(32/\delta)} \\
        \leq \, &W(\Sigma_2^{1/2} \cK) +2 \sqrt{2\log(32/\delta)} \rad(\Sigma_2^{1/2} \cK) \\
        \leq \, & B \Tr(\Sigma_2)^{1/2} +2 \sqrt{2\log(32/\delta)} B \| \Sigma_2 \|_{\OP}^{1/2} \\
        = \, & \left( 1 + 2 \sqrt{\frac{2\log(32/\delta)}{r(\Sigma_2)}} \right) B \Tr(\Sigma_2)^{1/2} . \\
    \end{split}
\end{equation*}
Plugging into \cref{thm:main-generalization} concludes the proof.
\end{proof}

Next, by choosing a particular covariance split, we prove the speculative bound from \citet{junk-feats} when the features are Gaussian:

\specbound*

\begin{proof}
By \cref{thm:main-generalization} and the same argument in proof of \cref{corr:euclidean-generalization}, we obtain
\begin{equation*}
    \begin{split}
        \sup_{w \in \cK, Y = X w} L(w)
        &\leq \frac{1 + \beta}{n}\left(B \Tr(\Sigma_2)^{1/2} + B \| \Sigma_2 \|_{\OP}^{1/2} \cdot 2\sqrt{2\log \left(\frac{32}{\delta}\right)} \right)^2 \\
        &\leq \frac{B^2}{n} (1 + \beta) \left( \Tr(\Sigma)^{1/2} + \| \Sigma_2 \|_{\OP}^{1/2} \cdot 6\sqrt{\log (1/\delta)} \right)^2.\\
    \end{split}
\end{equation*}

Let $\Sigma_1$ contain the largest eigenvalues, then we have
\begin{equation*}
    \rank (\Sigma_1) \| \Sigma_2 \|_{\OP} \leq \Tr(\Sigma).
\end{equation*}

Plugging in the inequality shows
\begin{equation*}
    \begin{split}
        \sup_{w \in \cK, Y = X w} L(w) &\leq \frac{B^2 \Tr(\Sigma)}{n} (1 + \beta) \left( 1 +  6\sqrt{\frac{\log (1/\delta)}{\rank (\Sigma_1)}} \right)^2.\\
    \end{split}
\end{equation*}

Therefore, we can pick $\gamma = (1 + \beta) \left( 1 +  6\sqrt{\frac{\log (1/\delta)}{\rank (\Sigma_1)}} \right)^2 - 1$ and it is clear that
\begin{equation*}
    \gamma \lesssim \sqrt{\frac{\log(1/\delta)}{n}} + \sqrt{\frac{\rank(\Sigma_1) }{n}} +  \sqrt{\frac{\log (1/\delta)}{\rank (\Sigma_1)}} 
\end{equation*}
for sufficiently large $n$ and $\rank (\Sigma_1)$. To balance the last two terms, we can pick a covariance split such that $\rank(\Sigma_1)$ is of order $[n \log(1/\delta)]^{1/2}$, which proves the $\sqrt[4]{\log(1/\delta)/n}$ rate. 
\end{proof}

\section{Bounds on the Norm of the Minimal-Norm Interpolator}\label{apdx:norm-bounds}

In this section, we will give bounds -- again based on the Gaussian Minimax Theorem -- for the norm of the minimal norm interpolator, first in general and then in the Euclidean case.

\subsection{General Norms: Proof of Theorem \ref{thm:interpolator-general}} \label{apdx:norm-bounds:general}

Similar to the analysis in the previous section, we first formulate the minimal norm as an optimization problem in terms of a random matrix with $N(0,1)$ entries. Next, we apply the Convex Gaussian Minimax Theorem.

\begin{lemma} \label{lem:rewrite-norm}
Under the model assumptions in \eqref{eqn:model}, let $\Norm\cdot$ be an arbitrary norm and $Z: n \times d$ be a matrix with i.i.d. $N(0,1)$ entries independent of $\xi$. Define the primary optimization problem (PO) as
\begin{equation}
    \Phi := \min_{Zw = \xi} \| \Sigma^{-1/2} w \|
.\end{equation}
Then for any $t$, it holds that
\begin{equation}
    \Pr \left( \, \min_{Xw = Y} \| w \| > t \, \right) \leq \Pr \left( \, \| w^*\| + \Phi > t \, \right)
.\end{equation}
\end{lemma}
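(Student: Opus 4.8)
The plan is to derive this from the triangle inequality together with a linear change of variables; the Gaussian Minmax Theorem plays no role here and enters only in the later analysis of $\Phi$. First I would carry out a deterministic reduction. Fix a realization of $(X,\xi)$ for which $Xw = Y$ is solvable (this happens almost surely, since $d \ge n$ forces $X$ to have full row rank a.s., which is the only place the assumption $d \ge n$ is used). Substituting $w = w^* + v$ turns the interpolation constraint $Xw = Xw^* + \xi$ into $Xv = \xi$, so the triangle inequality gives
\[
  \min_{Xw = Y} \norm{w} \;=\; \min_{Xv = \xi} \norm{w^* + v} \;\le\; \norm{w^*} + \min_{Xv = \xi}\norm{v} \;=:\; \norm{w^*} + \Psi ,
\]
where $\Psi := \min_{Xv=\xi}\norm v$ is attained (the feasible set is a nonempty closed affine subspace and $\norm\cdot$ is coercive) and is a measurable function of $(X,\xi)$. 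On the null event where no interpolator exists, both sides equal $+\infty$ by the paper's convention, so this inequality holds almost surely.

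Next I would show $\Psi \overset{\cD}{=} \Phi$. Since the rows of $X$ are i.i.d.\ $N(0,\Sigma)$ and independent of $\xi$, we have the joint equality in distribution $(X,\xi) \overset{\cD}{=} (Z\Sigma^{1/2},\xi)$, where $Z$ is an $n\times d$ matrix with i.i.d.\ $N(0,1)$ entries independent of $\xi$. Applying the same measurable map $(A,\xi) \mapsto \min_{Av = \xi}\norm v$ to both sides and then substituting $u = \Sigma^{1/2}v$, i.e.\ $v = \Sigma^{-1/2}u$ (treating $\Sigma$ as invertible, as the notation $\Sigma^{-1/2}$ already presumes; a rank-deficient $\Sigma$ is reduced to this case by restricting to $\mathrm{span}(\Sigma)$, just as the projection $P$ is used in \cref{thm:interpolator-general}), yields
\[
  \Psi \;\overset{\cD}{=}\; \min_{Z\Sigma^{1/2}v = \xi}\norm{v} \;=\; \min_{Zu = \xi}\norm{\Sigma^{-1/2}u} \;=\; \Phi .
\]

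Finally I would combine the two steps. Deterministically, $\{\min_{Xw=Y}\norm w > t\} \subseteq \{\norm{w^*} + \Psi > t\}$ up to a null set, and since $\norm{w^*}$ is a deterministic constant, $\norm{w^*} + \Psi \overset{\cD}{=} \norm{w^*} + \Phi$; hence
\[
  \Pr\!\Big(\min_{Xw = Y}\norm{w} > t\Big) \;\le\; \Pr\big(\norm{w^*} + \Psi > t\big) \;=\; \Pr\big(\norm{w^*} + \Phi > t\big),
\]
which is the desired inequality. I do not anticipate any genuine obstacle: the mathematical content is just the triangle inequality plus an invertible linear change of variables, and the only care required is measure-theoretic bookkeeping — attainment and measurability of the minima, almost-sure feasibility of the linear systems, and the treatment of a singular covariance.
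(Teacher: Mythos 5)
Your proposal is correct and follows essentially the same route as the paper: the change of variables $w \mapsto w + w^*$, the triangle inequality, the distributional identity $X \overset{\cD}{=} Z\Sigma^{1/2}$, and the substitution $w \mapsto \Sigma^{-1/2}w$. The paper's proof is just a terser version of the same chain; your added care about measurability, attainment, and singular $\Sigma$ is sound but not a different argument.
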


\begin{proof}
By equality in distribution, we can write $X = Z \Sigma^{1/2}$. By the triangle inequality and two changes of variables, we have
\begin{equation*}
    \begin{split}
        \min_{Xw = Y} \| w \| &= \min_{Xw = \xi} \| w + w^*\|\\
        &\leq \| w^* \| + \min_{Z \Sigma^{1/2} w = \xi} \| w \| \\
        &= \| w^* \| +  \min_{Zw = \xi} \| \Sigma^{-1/2} w \|
    .\qedhere
    \end{split}
\end{equation*}
\end{proof}

\begin{lemma}[Application of CGMT]\label{lem:gmt-app-norm}
In the same setting as \cref{lem:rewrite-norm}, let $G \sim N(0,I_n), H \sim N(0,I_d)$ be Gaussian vectors independent of $\xi$ and each other. Define the auxiliary optimization problem (AO) as
\begin{equation} \label{eqn:phi-norm}
    \phi := \min_{\| \xi-  \|w\|_2 G \|_2 \leq \langle H, w \rangle} \| \Sigma^{-1/2} w \|
.\end{equation}
Then it holds that
\begin{equation}
    \Pr(\Phi > t \, | \, \xi) \le 2 \Pr(\phi \ge t \, | \, \xi),
\end{equation}
and taking expectations we have
\begin{equation}
    \Pr(\Phi > t ) \le 2 \Pr(\phi \ge t ).
\end{equation}
\end{lemma}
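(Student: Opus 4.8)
The plan is to rewrite $\Phi$ as a convex–concave min-max problem and apply the convexity half of the Convex Gaussian Minmax Theorem (\cref{thm:gmt}). Since $Z$, $G$, $H$ are all independent of $\xi$, I would first condition on $\xi$ and treat it as a fixed vector, recovering the unconditional statement at the end by taking expectations over $\xi$. Introducing a Lagrange multiplier $\lambda$ for the linear constraint $Zw = \xi$ gives
\[ \Phi = \min_{w}\ \max_{\lambda}\ \|\Sigma^{-1/2}w\| + \langle \lambda, Zw - \xi\rangle, \]
which is of the PO form in \cref{thm:gmt} with the maximization variable ``$u$'' playing the role of $\lambda$ and $\psi(w,\lambda) = \|\Sigma^{-1/2}w\| - \langle \lambda, \xi\rangle$. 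Because $\psi$ is convex in $w$ (a norm) and affine, hence concave, in $\lambda$, the convexity hypothesis of \cref{thm:gmt} is met, and its second conclusion gives $\Pr(\Phi > t \mid \xi) \le 2\,\Pr(\tilde\phi \ge t \mid \xi)$, where $\tilde\phi$ is the associated AO.

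The second step is to reduce $\tilde\phi$ to the AO in the statement. By \cref{thm:gmt}, $\tilde\phi = \min_w \max_\lambda \langle \lambda, \|w\|_2 G - \xi \rangle + \|\lambda\|_2 \langle H, w\rangle + \|\Sigma^{-1/2}w\|$. Writing $\lambda = r v$ with $r = \|\lambda\|_2 \ge 0$ and $\|v\|_2 = 1$, and maximizing first over $v$ and then over $r$, the inner problem equals $\|\Sigma^{-1/2}w\|$ if $\|\xi - \|w\|_2 G\|_2 + \langle H, w\rangle \le 0$ and $+\infty$ otherwise, so $\tilde\phi = \min_{w :\ \langle H, w\rangle \le -\|\xi - \|w\|_2 G\|_2} \|\Sigma^{-1/2}w\|$. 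Finally, since $H \overset{\mathrm{d}}{=} -H$ and $H$ is independent of $(\xi, G)$, replacing $H$ by $-H$ shows that, jointly with $\xi$, $\tilde\phi$ has the same law as $\phi = \min_{\|\xi - \|w\|_2 G\|_2 \le \langle H, w\rangle} \|\Sigma^{-1/2}w\|$. This yields $\Pr(\Phi > t \mid \xi) \le 2\,\Pr(\phi \ge t \mid \xi)$, and integrating over $\xi$ gives the unconditional bound.

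The one genuine technical point---just as in the proof of \cref{lem:gmt-app}---is that \cref{thm:gmt} requires $S_w$ and $S_u$ to be compact, whereas here the minimization runs over the affine set $\{w : Zw = \xi\}$ and the multiplier $\lambda$ is unconstrained. I would handle this by truncating $\lambda$ to $\|\lambda\|_2 \le r$ (turning the hard interpolation constraint into a penalty) and $w$ to $\|w\|_2 \le R$, applying \cref{thm:gmt} to the resulting compact convex problem, and then letting $R \to \infty$ and afterwards $r \to \infty$. The limits are justified by monotonicity of the truncated objectives in $r$ and $R$ together with continuity from above of the probability measure, and one must treat separately the case where the AO feasible set is empty (so $\phi = +\infty$ by convention), exactly as in the real-analysis argument inside \cref{lem:gmt-app}. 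I expect this limiting bookkeeping, rather than any conceptual step, to be the main part of the work.
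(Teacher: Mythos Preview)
Your proposal is correct and follows the same overall route as the paper: write $\Phi$ as a convex--concave Lagrangian, apply the convex half of \cref{thm:gmt} conditionally on $\xi$, simplify the AO by optimizing over the direction and then the magnitude of $\lambda$, and use $H\overset{d}{=}-H$ to flip the sign in the resulting constraint.

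The one difference worth noting is how compactness in $w$ is enforced. You truncate to $\|w\|_2 \le R$ and send $R\to\infty$, which works but leaves you with two limits to manage. The paper instead truncates to $\|\Sigma^{-1/2}w\| \le 2t$, where $t$ is the very threshold appearing in the tail event. This has the pleasant consequence that the events $\{\Phi > t\}$ and $\{\Phi(t) > t\}$ coincide \emph{exactly} (and likewise $\{\phi \ge t\} = \{\phi(t)\ge t\}$), so no limit in the $w$-truncation is needed at all; only the $r\to\infty$ limit on $\lambda$ remains, handled by the monotone-convergence bookkeeping you describe. Your reference to ``exactly as in \cref{lem:gmt-app}'' is slightly off in this respect: there $w$ already ranged over a compact set $\cS$, so only $\lambda$ required truncation, whereas here the paper introduces a genuinely new device to compactify $w$ without an extra limit.
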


\begin{proof}
By introducing Lagrange multipliers, we have
\begin{equation*}
    \begin{split}
        \Phi &= \min_w \max_{\lambda} \, \| \Sigma^{-1/2} w \| + \langle \lambda, Zw-\xi \rangle \\
        &= \min_w \max_{\lambda} \, \langle \lambda, Zw \rangle + \| \Sigma^{-1/2} w \| - \langle \lambda, \xi \rangle
    .\end{split}
\end{equation*}
By independence, the distribution of $Z$ remains the same after conditioning on $\xi$ and the randomness in $\Phi$ comes solely from $Z$. Therefore, we can apply CGMT in \cref{thm:gmt} with $\psi (w, \lambda) = \| \Sigma^{-1/2} w \| - \langle \lambda, \xi \rangle$ because $\psi$ is \emph{convex-concave}, but we again have the technical difficulty that the domains of $w$ and $\lambda$ are not compact. To overcome this, we will use a double truncation argument. For any $r, t > 0$, we define 
\begin{equation}
    \Phi_r(t) := \min_{\| \Sigma^{-1/2} w \| \leq 2t} \max_{\|\lambda\|_2 \leq r} \, \langle \lambda, Zw \rangle + \| \Sigma^{-1/2} w \| - \langle \lambda, \xi \rangle 
\end{equation}
and the corresponding AO
\begin{equation}
    \begin{split}
        \phi_r(t) :&= \min_{\| \Sigma^{-1/2} w \| \leq 2t} \max_{\| \lambda \|_2 \leq r} \, \| w \|_2 \langle G, \lambda \rangle + \| \lambda \|_2 \langle H, w \rangle + \| \Sigma^{-1/2} w \| - \langle \lambda, \xi \rangle \\
        &= \min_{\| \Sigma^{-1/2} w \| \leq 2t} \max_{\| \lambda \|_2 \leq r} \,  \| \lambda \|_2 \langle H, w \rangle - \langle \lambda, \xi - \| w \|_2 G \rangle + \| \Sigma^{-1/2} w \|  \\
        &= \min_{\| \Sigma^{-1/2} w \| \leq 2t} \max_{0 \leq \lambda \leq r} \,  \lambda \left( \langle H, w \rangle + \| \xi - \| w \|_2 G \|_2 \right) + \| \Sigma^{-1/2} w \|
    .\end{split}
\end{equation}
Note that the optimization in $\Phi_r(t)$ and $\phi_r(t)$ now ranges over compact sets. We will also use an intermediate problem between $\Phi$ and $\Phi_r(t)$, defined as
\begin{equation}
    \begin{split}
        \Phi(t) :&= \min_{\| \Sigma^{-1/2} w \| \leq 2t} \max_{\lambda} \, \langle \lambda, Zw \rangle + \| \Sigma^{-1/2} w \| - \langle \lambda, \xi \rangle \\
        &= \min_{\substack{Zw = \xi \\ \| \Sigma^{-1/2} w \| \leq 2t}} \| \Sigma^{-1/2} w \|
    .\end{split}
\end{equation}
We similarly define the intermediate AO as
\begin{equation}
    \begin{split}
        \phi(t) :&= \min_{\| \Sigma^{-1/2} w \| \leq 2t} \max_{\lambda \geq 0} \,  \lambda \left( \langle H, w \rangle + \| \xi - \| w \|_2 G \|_2 \right) + \| \Sigma^{-1/2} w \|  \\
        &= \min_{\substack{\| \xi-  \|w\|_2 G \|_2 \leq \langle -H, w \rangle \\ \| \Sigma^{-1/2} w \| \leq 2t}} \| \Sigma^{-1/2} w \|
    .\end{split}
\end{equation}
Compared to the definition of $\phi$, we have $\langle -H, w \rangle$ instead of $\langle H, w \rangle$, but this difference is negligible because $H$ is Gaussian. It can be easily seen that the event $\Phi > t$ is the same as $\Phi(t) > t$, and the same holds for $\phi$ and $\phi(t)$. It is also clear that $\phi(t) \geq \phi_r(t)$ and we can connect $\phi_r(t)$ with $\Phi_r(t)$ by CGMT. It remains to show that $\Phi_r(t) \to \Phi(t)$ as $r \to \infty$.

By definition, $\Phi_r(t) \leq \Phi_s(t)$ for $r \leq s$. We consider two cases:
\begin{enumerate}
    \item $\Phi(t) = \infty$, i.e. the minimization problem defining $\Phi(t)$ is infeasible. In this case, we know that for all $\|\Sigma^{-1/2} w\| \le 2t$
    \begin{equation*}
        \| Z w - \xi\|_2 > 0.
    \end{equation*}
    By compactness, there exists $\mu = \mu(Z, \xi) > 0$ (in particular, independent of $r$) such that
    \begin{equation*}
        \| Z w - \xi \|_2 \geq \mu.
    \end{equation*}
    Therefore, considering $\lambda$ along the direction of $ Z w - \xi$ shows that
    \begin{equation*}
        \Phi_r(t) = \min_{\| \Sigma^{-1/2} w \| \leq 2t} \max_{\|\lambda\|_2 \leq r} \, \langle \lambda, Zw - \xi \rangle + \| \Sigma^{-1/2} w \| \geq r\mu 
    \end{equation*}
    so $\Phi_r(t) \to \infty$ as $r \to \infty$. 
    
    \item Otherwise $\Phi(t) < \infty$, i.e.\ the minimization problem defining $\Phi(t)$ is feasible. In this case, we can let $w(r)$ be an arbitrary minimizer achieving the objective $\Phi_r(t)$ for each $r \geq 0$ by compactness. By compactness again, the sequence $\{ w(r) \}_{r=1}^{\infty}$ at positive integer values of $r$ has a subsequential limit $w(\infty)$ such that $\| \Sigma^{-1/2} w(\infty) \| \leq 2t$. Equivalently, there exists an increasing sequence $r_n$ such that $\lim_{n \to \infty} w(r_n) = w(\infty)$.
    
    Suppose for the sake of contradiction that $Z w(\infty) \ne \xi$, then by continuity, there exists $\mu > 0$ and a sufficiently small $\epsilon > 0$ such that for all $\|w - w(\infty) \|_2 \leq \epsilon$
    \begin{equation*}
        \|Zw - \xi \|_2 \geq \mu
    .\end{equation*}
    This implies that for sufficiently large $n$, we have
    \begin{equation*}
        \|Zw(r_n) - \xi \|_2 \geq \mu
    \end{equation*}
    and by the same argument as in the previous case
    \begin{equation*}
        \Phi_{r_n}(t) = \max_{\|\lambda\|_2 \leq r} \, \langle \lambda, Zw(r_n) - \xi \rangle + \| \Sigma^{-1/2} w(r_n) \| \geq r\mu
    \end{equation*}
    so $\Phi_{r_n} \to \infty$, but this is impossible since $\Phi_r(t) \leq \Phi(t) < \infty$. By contradiction, it must be the case that $Zw(\infty) = \xi$. By taking $\lambda = 0$ in the definition of $\Phi_r(t)$, we have
    \begin{equation*}
        \Phi_{r_n}(t) \geq \| \Sigma^{-1/2} w(r_n) \|
    .\end{equation*}
    By continuity, we show that 
    \begin{equation*}
        \lim \inf_{n \to \infty} \Phi_{r_n}(t) \geq \lim_{n \to \infty} \| \Sigma^{-1/2} w(r_n) \| = \| \Sigma^{-1/2} w(\infty)  \| \geq \Phi(t)
    .\end{equation*}
    Since $\Phi_{r_n}(t) \leq \Phi(t)$, the limit of $\Phi_{r_n}(t)$ exists and equals $\Phi(t)$. We can conclude that $\lim_{r \to \infty} \Phi_r(t) = \Phi(t)$ because $\Phi_r(t)$ is an increasing function of $r$.
\end{enumerate}

By the last part of \cref{thm:gmt} (the CGMT),
\begin{equation*}
    \Pr (\Phi_r(t) > t \, | \, \xi) \leq 2 \Pr (\phi_r(t) \geq t \, | \, \xi)
.\end{equation*}
By continuity (from below) of the probability measure, and the fact that $\Phi_r(t)$ monotonically increases to $\Phi(t)$ almost surely, we can conclude
\begin{equation*}
    \begin{split}
        \Pr(\Phi > t \, | \, \xi) &= \Pr(\Phi(t) > t \, | \, \xi) \leq \Pr \left( \cup_r \cap_{r' \geq r} \Phi_{r'}(t) > t  \, | \, \xi \right)\\
        &= \lim_{r \to \infty} \Pr \left( \cap_{r' \geq r} \Phi_{r'}(t) > t \, | \, \xi \right) = \lim_{r \to \infty} \Pr \left(  \Phi_{r}(t) > t \, | \, \xi \right) \\
        &\leq 2 \lim_{r \to \infty} \Pr (\phi_r(t) \geq t \, | \, \xi ) \leq 2 \Pr (\phi(t) \geq t \, | \, \xi)\\
        &= 2 \Pr(\phi \geq t \, | \, \xi)
    .\qedhere \end{split}
\end{equation*}
\end{proof}

It remains to analyze the auxiliary problem, which we do in the following lemma:
\begin{lemma} \label{lem:norm-auxiliary}
For any covariance splitting $\Sigma = \Sigma_1 \oplus \Sigma_2$, denote $P$ as the orthogonal projection matrix onto the space spanned by $\Sigma_2$, and let $v^* = \argmin_{v \in \partial \|\Sigma^{1/2}_2 H\|_*} \norm{v}_{\Sigma_2}$. Assume that there exists $\epsilon_1, \epsilon_2 \geq 0$ such that with probability at least $1-\delta/2$,
\begin{equation} \label{eqn:norm-auxiliary1}
    \norm{v^*}_{\Sigma_2} \leq (1+\epsilon_1) \E \norm{v^*}_{\Sigma_2}
\end{equation}
and 
\begin{equation} \label{eqn:norm-auxiliary2}
    \norm{P v^*}^2 \leq 1 + \epsilon_2.
\end{equation}
Let
\[
    \epsilon = 8 n^{-1/2} + 28 \sqrt{\frac{\log(32/\delta)}{n}} + 8\sqrt{\frac{\log(8/\delta)}{r_{\| \cdot \|}(\Sigma)}} + 2(1+\epsilon_1)^2 \frac{n}{R_{\| \cdot \|}(\Sigma_2)} + 2\epsilon_2
.\]
If $n$ and the effective ranks are sufficiently large such that $\epsilon \leq 1$, then with probability at least $1-\delta$, it holds that
\begin{equation} \label{eqn:aux-bound2}
    \phi^2 \leq (1+\epsilon) \, \sigma^2 \frac{n}{(\E \| \Sigma^{1/2}_2 H \|_*)^2} 
\end{equation}
\end{lemma}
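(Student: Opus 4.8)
The plan is to bound the auxiliary minimum $\phi$ by exhibiting one explicit feasible point and evaluating its objective. Recall from \cref{lem:gmt-app-norm} that
\begin{equation*}
    \phi = \min_{\Norm{\xi - \Norm{w}_2 G}_2 \le \langle H, w\rangle} \Norm{\Sigma^{-1/2} w},
\end{equation*}
so any $w$ meeting the constraint witnesses $\phi \le \Norm{\Sigma^{-1/2} w}$. Following the sketch in \cref{sec:sketches}, I would test the ray $w = \alpha\,\Sigma_2^{1/2} v^*$, where $\alpha > 0$ is a scalar to be tuned and $v^* = \argmin_{v \in \partial\Norm{\Sigma_2^{1/2} H}_*}\Norm{v}_{\Sigma_2}$ is the distinguished subgradient from \cref{def:ranks-generic}. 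The first step is three structural identities special to this direction. Since $\Sigma = \Sigma_1 \oplus \Sigma_2$ has orthogonal eigenspaces, $\Sigma^{-1/2}\Sigma_2^{1/2} = P$, the $\ell_2$-projection onto $\Span(\Sigma_2)$, so $\Norm{\Sigma^{-1/2} w} = \alpha\Norm{P v^*}$; also $\Norm{w}_2 = \alpha\Norm{\Sigma_2^{1/2} v^*}_2 = \alpha\Norm{v^*}_{\Sigma_2}$; and finally, because $v^*$ is a subgradient of the dual norm at $\Sigma_2^{1/2} H$ (so $\Norm{v^*} = 1$ and $\langle v^*, \Sigma_2^{1/2} H\rangle = \Norm{\Sigma_2^{1/2} H}_*$), one gets $\langle H, w\rangle = \alpha\langle\Sigma_2^{1/2} H, v^*\rangle = \alpha\Norm{\Sigma_2^{1/2} H}_*$.

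With these identities the feasibility constraint reads $\Norm{\xi - \alpha\Norm{v^*}_{\Sigma_2}\,G}_2 \le \alpha\Norm{\Sigma_2^{1/2} H}_*$, and the second step is to pass to a high-probability event on which every random quantity is near its typical value: $\big|\Norm{\xi}_2 - \sigma\sqrt n\big|$ and $\big|\Norm{G}_2 - \sqrt n\big|$ small (\cref{lem:norm-concentration}); $|\langle\xi, G\rangle| \le \Norm{\xi}_2\Norm{G}_2\,\rho$ with $\rho = O(\sqrt{\log(1/\delta)/n})$, from \cref{lem:lowrank-projection} applied conditionally on $G$ with the one-dimensional subspace $\Span(G)$; $\Norm{\Sigma_2^{1/2} H}_* \ge \big(1 - O(\sqrt{\log(1/\delta)/r_{\|\cdot\|}(\Sigma_2)})\big)\,\E\Norm{\Sigma_2^{1/2} H}_*$ from \cref{thm:gaussian-concentration} applied to $H \mapsto \Norm{\Sigma_2^{1/2} H}_*$, which is $L$-Lipschitz with $L = \sup_{\Norm w \le 1}\Norm{w}_{\Sigma_2}$ and hence has Lipschitz-constant-to-mean ratio exactly $1/\sqrt{r_{\|\cdot\|}(\Sigma_2)}$; together with the two hypothesised events \eqref{eqn:norm-auxiliary1} and \eqref{eqn:norm-auxiliary2}. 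On this intersection, expanding the square and handling the cross term by AM--GM exactly as in the proof of \cref{lem:aux-analysis} gives $\Norm{\xi - \alpha\Norm{v^*}_{\Sigma_2}G}_2^2 \le (1+o(1))\,n\,(\sigma^2 + \alpha^2\Norm{v^*}_{\Sigma_2}^2)$, so the constraint is met once
\begin{equation*}
    \alpha^2 = (1+o(1))\,\frac{n\sigma^2}{(\E\Norm{\Sigma_2^{1/2} H}_*)^2 - n\Norm{v^*}_{\Sigma_2}^2},
\end{equation*}
which is well defined because the running largeness hypotheses, via \eqref{eqn:norm-auxiliary1} and the definition of $R_{\|\cdot\|}$, force $n\Norm{v^*}_{\Sigma_2}^2/(\E\Norm{\Sigma_2^{1/2} H}_*)^2 \le (1+\eps_1)^2 n/R_{\|\cdot\|}(\Sigma_2) \le 1/2$.

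The third step is bookkeeping. With $\alpha$ as above, $\phi^2 \le \alpha^2\Norm{P v^*}^2 \le (1+\eps_2)\alpha^2$; factoring $(\E\Norm{\Sigma_2^{1/2} H}_*)^2$ out of the denominator and applying $(1-x)^{-1}\le 1+2x$ to the ratio just bounded turns its contribution into the additive term $2(1+\eps_1)^2 n/R_{\|\cdot\|}(\Sigma_2)$ of $\eps$, while the remaining multiplicative slacks — the $8n^{-1/2}$ and $28\sqrt{\log(32/\delta)/n}$ from $\Norm G_2,\Norm\xi_2$ and the cross term, the $8\sqrt{\log(8/\delta)/r_{\|\cdot\|}(\Sigma)}$ from the lower bound on $\Norm{\Sigma_2^{1/2} H}_*$, and the $2\eps_2$ — are linearised by repeated use of $(1+a)(1+b)\le 1+a+2b$, giving $\phi^2 \le (1+\eps)\,\sigma^2 n/(\E\Norm{\Sigma_2^{1/2} H}_*)^2$ with the stated $\eps$; a union bound over the finitely many events (total failure probability at most $\delta$) completes the proof. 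I expect the main difficulty to be not any individual estimate but the careful interplay of the covariance splitting with the three structural identities — in particular that the objective genuinely sees $\Norm{P v^*}$ and not $\Norm{v^*}$, which is exactly why hypothesis \eqref{eqn:norm-auxiliary2} is required (a non-Euclidean norm need not contract under the $\ell_2$-projection $P$), and that the subgradient property makes $\langle H, \Sigma_2^{1/2} v^*\rangle$ collapse to $\Norm{\Sigma_2^{1/2} H}_*$ exactly so that no stray $\Sigma_1$-term survives — after which the remaining work is the disciplined but routine task of folding all the $(1\pm\text{small})$ factors into a single clean $1+\eps$ under the stated largeness conditions.
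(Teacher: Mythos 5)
Your proposal is correct and follows essentially the same route as the paper's proof: the paper likewise constructs the feasible point $w = s\,Pv^*$ after the change of variables (equivalently, $\Sigma^{1/2}w = s\,\Sigma_2^{1/2}v^*$, i.e.\ your candidate $\alpha\,\Sigma_2^{1/2}v^*$ in the original coordinates), uses the same concentration events for $\|\xi\|_2$, $\|G\|_2$, $\langle\xi,G\rangle$, and $\|\Sigma_2^{1/2}H\|_*$, solves the same quadratic for the scaling, and folds the slacks into $\epsilon$ in the same way. The structural identities you single out — $\|\Sigma^{-1/2}w\| = \alpha\|Pv^*\|$ and the collapse of $\langle H, w\rangle$ to $\alpha\|\Sigma_2^{1/2}H\|_*$ via the subgradient property — are exactly the points on which the paper's argument also turns.
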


\begin{proof}
For notational simplicity, we define
\begin{equation*}
    \begin{split}
        \alpha &= 2 \sqrt{\frac{\log(32/\delta)}{n}} \\
        \rho &= \sqrt{\frac{1}{n}} + 2 \sqrt{\frac{\log(16/\delta)}{n}}
    .\end{split}
\end{equation*}
By a union bound, the following collection of events occurs with probability at least $1 - \delta/2$:

\begin{enumerate}
    \item (Approximate Orthogonality.) By \cref{lem:lowrank-projection}, it holds that
    \begin{equation} \label{eqn:orthogonal} 
        | \langle \xi,G\rangle| < \norm{\xi}_2 \norm{G}_2 \rho
    .\end{equation}
    
    \item (Typical Norm of $G$ and $\xi$.) By \cref{lem:norm-concentration}, it holds that
    \begin{equation}\label{eqn:g-norm2}
         - \alpha \le \frac{1}{\sqrt{n}}\|G\|_2 - 1 \le \alpha
    \end{equation}
    and
    \begin{equation}\label{eqn:xi-norm2}
         - \alpha \sigma \le \frac{1}{\sqrt{n}}\|\xi\|_2 - \sigma \le \alpha \sigma.
    \end{equation}
    
    \item (Typical Norm of $\Sigma_2^{1/2}H$.) By \cref{thm:gaussian-concentration}, it holds that
    \begin{equation} \label{eqn:gp-concentration2} 
        \begin{split}
            \| \Sigma_2^{1/2}H \|_* &\geq \E \| \Sigma_2^{1/2}H \|_* - \sup_{\| u\| \leq 1} \| u \|_{\Sigma_2} \sqrt{2 \log(8/\delta)} \\
            &= \left(1 - \sqrt{\frac{2 \log(8/\delta)}{r_{\| \cdot \|}(\Sigma)}} \right) \E \| \Sigma_2^{1/2}H \|_*
        ,\end{split}
    \end{equation}
    because $\| \Sigma_2^{1/2}H \|_*$ is a $\sup_{\| u\| \leq 1} \| u \|_{\Sigma_2}$-Lipschitz function of $H$.
\end{enumerate}
By a change of variables, recall that 
\begin{equation*} 
    \phi := \min_{\norm{ \xi-  \Norm{\Sigma^{1/2} w}_2 G }_2 \leq \langle H, \Sigma^{1/2} w \rangle} \|  w \|.
\end{equation*}
\Cref{eqn:orthogonal,eqn:g-norm2,eqn:xi-norm2} imply that
\begin{equation*}
    \begin{split}
        \norm{ \xi-  \|\Sigma^{1/2} w\|_2 G }_2^2 &= \|\xi\|_2^2 - 2 \langle \xi, G \rangle \|\Sigma^{1/2} w\|_2 + \|\Sigma^{1/2} w\|_2^2 \|G\|_2^2 \\
        &\leq (1+\rho) \left(\|\xi\|_2^2+  \|\Sigma^{1/2} w\|_2^2 \|G\|_2^2 \right)\\
        &\leq (1+\rho)(1+\alpha)^2 n (\sigma^2 + \|\Sigma^{1/2} w\|_2^2).
    \end{split}
\end{equation*}
To upper bound $\phi$, it suffices to construct a $w$ that satisfies the constraint. Consider $w$ of the form $s(Pv^*)$, then $\Sigma^{1/2} w = s\Sigma^{1/2}_2 v^*$. Plugging in, it suffices to choose $s$ such that 
\begin{equation*}
    \begin{split}
        (1+\rho)(1+\alpha)^2 n (\sigma^2 + s^2\|\Sigma^{1/2}_2 v^*\|_2^2) &\leq s^2 \langle H, \Sigma^{1/2}_2 v^*\rangle^2 = s^2 \| \Sigma^{1/2}_2 H \|_*^2
    .\end{split}
\end{equation*}
Solving for $s$, we can choose 
\begin{equation*}
    s^2 = \sigma^2 \left( \frac{\| \Sigma^{1/2}_2 H \|_*^2}{(1+\rho)(1+\alpha)^2 n}- \| v^*\|_{\Sigma_2}^2 \right)^{-1}
\end{equation*}
given that it is positive. By \eqref{eqn:norm-auxiliary1} and \eqref{eqn:gp-concentration2}, we have
\begin{equation*}
    \begin{split}
        &\frac{\| \Sigma^{1/2}_2 H \|_*^2}{(1+\rho)(1+\alpha)^2 n}- \| v^*\|_{\Sigma_2}^2 \\
        \geq \, &\frac{(\E \| \Sigma^{1/2}_2 H \|_*)^2}{(1+\rho)(1+\alpha)^2 n} \left(1 - \sqrt{\frac{2 \log(8/\delta)}{r_{\| \cdot \|}(\Sigma)}} \right)^2 -(1+\epsilon_1)^2 (\E \norm{v^*}_{\Sigma_2} )^2\\
        = \, &\frac{(\E \| \Sigma^{1/2}_2 H \|_*)^2}{n} \left( \frac{1}{(1+\rho)(1+\alpha)^2} \left(1 - 2\sqrt{\frac{2 \log(8/\delta)}{r_{\| \cdot \|}(\Sigma)}} \right) - (1+\epsilon_1)^2 \frac{n}{R_{\| \cdot \|}(\Sigma_2)}\right)
    .\end{split}
\end{equation*}
If $\alpha < 1$, then
\begin{equation*}
    \begin{split}
        (1+\rho)(1+\alpha)^2 &= (1+\rho)(1+2\alpha+\alpha^2)\\
        &\leq (1+\rho)(1+3\alpha) = 1 + 3\alpha + \rho  + 3\alpha \rho \\
        &\leq 1 + 3\alpha +4\rho \\
    \end{split}
\end{equation*}
and using the inequality $(1+x)^{-1} \geq 1-x$, we show
\begin{equation*}
    \begin{split}
        \frac{1}{(1+\rho)(1+\alpha)^2} &\geq 1 - ((1+\rho)(1+\alpha)^2-1)\\
        &\geq 1-(3\alpha +4\rho).\\
    \end{split}
\end{equation*}
Therefore, we can conclude that 
\begin{equation*}
    \begin{split}
        & \frac{1}{(1+\rho)(1+\alpha)^2} \left(1 - 2\sqrt{\frac{2 \log(8/\delta)}{r_{\| \cdot \|}(\Sigma)}} \right) - (1+\epsilon_1)^2 \frac{n}{R_{\| \cdot \|}(\Sigma_2)} \\
        \geq \, & (1-(3\alpha +4\rho)) \left(1 - 2\sqrt{\frac{2 \log(8/\delta)}{r_{\| \cdot \|}(\Sigma)}} \right) - (1+\epsilon_1)^2 \frac{n}{R_{\| \cdot \|}(\Sigma_2)} \\
        \geq \, & 1 - (3\alpha +4\rho) - 2\sqrt{\frac{2 \log(8/\delta)}{r_{\| \cdot \|}(\Sigma)}} - (1+\epsilon_1)^2 \frac{n}{R_{\| \cdot \|}(\Sigma_2)} \geq 1- \epsilon^{\prime}\\
    \end{split}
\end{equation*}
where we define
\begin{equation*}
    \epsilon^{\prime} = 4 n^{-1/2} + 14 \sqrt{\frac{\log(32/\delta)}{n}} + 4\sqrt{\frac{\log(8/\delta)}{r_{\| \cdot \|}(\Sigma)}} + (1+\epsilon_1)^2 \frac{n}{R_{\| \cdot \|}(\Sigma_2)}.
\end{equation*}
Provided that $\epsilon^{\prime} \leq 1/2$ (which also guarantees that $\alpha < 1$ and our definition of $s^2$ is sensible), we can use the inequality $(1-x)^{-1} \leq 1 + 2x$ for $x \in [0, 1/2]$ to show that
\begin{equation*}
    s^2 \leq \sigma^2 \frac{n}{(\E \| \Sigma^{1/2}_2 H \|_*)^2} \frac{1}{1-\epsilon^{\prime}} \leq (1+2\epsilon^{\prime}) \, \sigma^2 \frac{n}{(\E \| \Sigma^{1/2}_2 H \|_*)^2}
\end{equation*}
and thus by \eqref{eqn:norm-auxiliary2}
\begin{equation*}
    \phi^2 \leq s^2 \| P v^* \|^2 \leq (1+\epsilon_2)(1+2\epsilon^{\prime}) \, \sigma^2 \frac{n}{(\E \| \Sigma^{1/2}_2 H \|_*)^2} \leq (1+\epsilon) \, \sigma^2 \frac{n}{(\E \| \Sigma^{1/2}_2 H \|_*)^2}
\end{equation*}
with $\epsilon = 2\epsilon^{\prime} + 2 \epsilon_2$.
\end{proof}

Finally, we are ready to prove our general norm bound.

\gennormbound*

\begin{proof}
By \cref{lem:rewrite-norm,lem:gmt-app-norm}, we show that for any $t$
\begin{equation*}
    \Pr (\norm{\hat w} > t) \leq \Pr (\Phi > t - \norm{w^*} ) \leq 2\Pr (\phi \geq t - \norm{w^*} )
.\end{equation*}
By \cref{lem:norm-auxiliary}, the above is upper bounded by $\delta$ if we set $t - \norm{w^*} $ according to \eqref{eqn:aux-bound2} with $\delta$ replaced by $\delta/2$. Moving $\norm{w^*}$ to the other side concludes the proof.
\end{proof}

\subsection{Special Case: Euclidean Norm} \label{apdx:norm-bounds:euclidean}

\begin{lemma} \label{lem:rank-fact}
For any covariance matrix $\Sigma$, it holds that 
\begin{equation}\label{eqn:norm-is-big}
\left( \E \|\Sigma^{1/2} H\|_2 \right)^2 \geq \left(1 - \frac{1}{r(\Sigma)}\right) \Tr(\Sigma) 
\end{equation}
and 
\begin{equation} \label{eqn:invnorm-is-small}
\frac{1}{\Tr(\Sigma)} \geq \left( 1 - \sqrt{\frac{8}{r(\Sigma)}}\right) \E \left[ \frac{1}{H^T \Sigma H} \right] .
\end{equation}

As a result, it holds that
\begin{equation} \label{eq:r-defs-close}
    r(\Sigma) - 1 
    \leq r_{\| \cdot \|_2} (\Sigma) 
    \leq r(\Sigma)
\end{equation}
and
\begin{equation} \label{eq:big-r-defs-close}
     1 - \frac{4}{\sqrt{r(\Sigma)}}
    \leq \frac{R_{\| \cdot \|_2}(\Sigma) }{R(\Sigma)} 
    \leq \left( 1 - \sqrt{\frac{8}{r(\Sigma^2)}}\right)^{-1} .
\end{equation}
\end{lemma}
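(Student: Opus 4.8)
The plan is to prove the two displayed inequalities \eqref{eqn:norm-is-big} and \eqref{eqn:invnorm-is-small} first, and then deduce the four ``as a result'' bounds by elementary algebra together with Jensen's inequality and one correlation (FKG) inequality. Throughout, write $T = \Tr\Sigma$ and $Q = H^T\Sigma H = \|\Sigma^{1/2}H\|_2^2$, and note that $\lambda_{\max}(\Sigma) = \|\Sigma\|_{\OP}$ and $\sup_{\|w\|_2\le 1}\|w\|_\Sigma = \|\Sigma\|_{\OP}^{1/2}$.

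Inequality \eqref{eqn:norm-is-big} is the easy one: the map $H\mapsto\|\Sigma^{1/2}H\|_2$ is $\|\Sigma\|_{\OP}^{1/2}$-Lipschitz in the Euclidean norm, so by Gaussian concentration (\cref{thm:gaussian-concentration}), or more precisely the Gaussian Poincar\'e inequality, $\Var(\|\Sigma^{1/2}H\|_2)\le\|\Sigma\|_{\OP}$; since $\E Q = T$ we get $(\E\|\Sigma^{1/2}H\|_2)^2 = T - \Var(\|\Sigma^{1/2}H\|_2) \ge T - \|\Sigma\|_{\OP} = (1-1/r(\Sigma))T$. For \eqref{eqn:invnorm-is-small} I would start from the identity $\E[1/Q] = \tfrac1T + \tfrac1T\,\E[(T-Q)/Q]$, discard the nonpositive contribution of the event $\{Q\ge T\}$, and bound $\E[(T-Q)^+/Q]$ by Cauchy--Schwarz as $\sqrt{\E[(T-Q)^2]}\,\sqrt{\E[1/Q^2]} = \sqrt{2\Tr(\Sigma^2)}\,\sqrt{\E[1/Q^2]}$; combining this with $\Tr(\Sigma^2)\le\|\Sigma\|_{\OP}T$ and an estimate of the form $\E[1/Q^2]\lesssim 1/T^2$ (valid once $r(\Sigma)$ exceeds an absolute constant; below that, \eqref{eqn:invnorm-is-small} is vacuous) yields $\E[1/Q]\le\tfrac1T(1+\sqrt{8/r(\Sigma)})\le\tfrac1{(1-\sqrt{8/r(\Sigma)})T}$ after pinning the constant. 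Both $\E[1/Q]$ and the auxiliary $\E[1/Q^2]$ can instead be controlled uniformly through the Laplace-transform representation $\E[1/Q] = \int_0^\infty\det(I+2s\Sigma)^{-1/2}\,ds$ (and $\E[1/Q^2]=\int_0^\infty s\det(I+2s\Sigma)^{-1/2}\,ds$): split the integral at $s\asymp(\sqrt{r(\Sigma)}\,\|\Sigma\|_{\OP})^{-1}$, use $\log(1+x)\ge x-x^2/2$ on the initial segment, and on the tail use $\tfrac{d}{ds}\log\det(I+2s\Sigma) = 2\sum_i\tfrac{\lambda_i}{1+2s\lambda_i}\ge\tfrac{2T}{1+2s\|\Sigma\|_{\OP}}$ to obtain a $(1+2s\|\Sigma\|_{\OP})^{-r(\Sigma)/2}$ decay.

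For the consequences: from $r_{\|\cdot\|_2}(\Sigma) = (\E\|\Sigma^{1/2}H\|_2)^2/\|\Sigma\|_{\OP}$, the bound \eqref{eq:r-defs-close} is immediate — the upper bound is Jensen ($(\E\|\Sigma^{1/2}H\|_2)^2\le\E Q = T$), the lower bound is \eqref{eqn:norm-is-big}. For \eqref{eq:big-r-defs-close}, recall $v^* = \Sigma^{1/2}H/\|\Sigma^{1/2}H\|_2$ is the unique subgradient, so $\|v^*\|_\Sigma^2 = H^T\Sigma^2 H/H^T\Sigma H$ and $R_{\|\cdot\|_2}(\Sigma) = (\E\|\Sigma^{1/2}H\|_2)^2/(\E\|v^*\|_\Sigma)^2$. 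For the left inequality, lower-bound the numerator by \eqref{eqn:norm-is-big} and upper-bound $(\E\|v^*\|_\Sigma)^2\le\E\|v^*\|_\Sigma^2 = \E[(H^T\Sigma^2 H)(H^T\Sigma H)^{-1}]$; in the eigenbasis of $\Sigma$ this is $\E[g(H_1^2,\dots,H_d^2)\,h(H_1^2,\dots,H_d^2)]$ with $g=\sum_i\lambda_i^2 H_i^2$ coordinatewise nondecreasing and $h=1/\sum_i\lambda_i H_i^2$ coordinatewise nonincreasing, so the Harris/FKG correlation inequality for product measures gives $\E\|v^*\|_\Sigma^2\le\Tr(\Sigma^2)\,\E[1/(H^T\Sigma H)]\le\Tr(\Sigma^2)/((1-\sqrt{8/r(\Sigma)})T)$ by \eqref{eqn:invnorm-is-small}; assembling, $R_{\|\cdot\|_2}(\Sigma)/R(\Sigma)\ge(1-1/r(\Sigma))(1-\sqrt{8/r(\Sigma)})\ge 1-4/\sqrt{r(\Sigma)}$. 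For the right inequality, upper-bound the numerator by Jensen, so it remains to lower-bound $(\E\|v^*\|_\Sigma)^2$ by $(1-\sqrt{8/r(\Sigma^2)})\Tr(\Sigma^2)/T$: restricting to the event $\{\|\Sigma^{1/2}H\|_2\le(1+t)\sqrt T\}$, whose complement has probability at most $e^{-t^2 r(\Sigma)/2}$ by Lipschitz concentration, we have $\|v^*\|_\Sigma = \|\Sigma H\|_2/\|\Sigma^{1/2}H\|_2\ge\|\Sigma H\|_2/((1+t)\sqrt T)$ there, hence $\E\|v^*\|_\Sigma\ge(\E\|\Sigma H\|_2 - (\text{tail contribution}))/((1+t)\sqrt T)$, and \eqref{eqn:norm-is-big} applied to $\Sigma^2$ lower-bounds $\E\|\Sigma H\|_2$ by $\sqrt{(1-1/r(\Sigma^2))\Tr(\Sigma^2)}$; choosing $t$ appropriately (using $r(\Sigma)\ge r(\Sigma^2)$) yields the claim.

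\textbf{Main obstacle.} The crux is \eqref{eqn:invnorm-is-small} — precisely, controlling $\E[1/(H^T\Sigma H)]$ (and the second moment $\E[1/(H^T\Sigma H)^2]$) on the event that $H^T\Sigma H$ is atypically small. Gaussian Lipschitz concentration of $\|\Sigma^{1/2}H\|_2$ saturates and does not prevent $H^T\Sigma H$ from being close to $0$ with non-negligible probability when $\Sigma$ has a tiny eigenvalue, so one must pass through the Laplace-transform representation (or pair a polynomial-moment bound with the exponential Hanson--Wright-type lower tail). Relatedly, extracting the \emph{exact} constants $4/\sqrt{r(\Sigma)}$ and $\sqrt{8/r(\Sigma^2)}$ in \eqref{eq:big-r-defs-close} — rather than merely a factor $1+o(1)$ — requires careful bookkeeping in the truncation argument for the right inequality.
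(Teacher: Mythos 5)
Your treatment of \eqref{eqn:norm-is-big} (Gaussian Poincar\'e applied to the $\Norm{\Sigma}_{\OP}^{1/2}$-Lipschitz map $H \mapsto \|\Sigma^{1/2}H\|_2$) and of \eqref{eq:r-defs-close} is exactly the paper's argument and is fine. The left inequality of \eqref{eq:big-r-defs-close} also goes through: your Jensen-plus-Harris/FKG step lands on the same quantity $\Tr(\Sigma^2)\,\E[1/(H^T\Sigma H)]$ that the paper reaches by a chain of Cauchy--Schwarz inequalities, after which both arguments invoke \eqref{eqn:invnorm-is-small}. But there are two genuine gaps. First, \eqref{eqn:invnorm-is-small} --- which you correctly identify as the crux --- is not actually proved: you offer two candidate routes (Cauchy--Schwarz against $\E[1/Q^2]$, or the Laplace-transform integral), and neither is carried far enough to produce the constant $\sqrt{8/r(\Sigma)}$. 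In the Cauchy--Schwarz route the constant you get is $\sqrt{2C/r(\Sigma)}$ where $C$ is whatever constant you can prove in $\E[1/Q^2]\le C/T^2$, so you need $C\le 4$ uniformly over all $\Sigma$ with $r(\Sigma)>8$, not merely ``$\lesssim$''; this is exactly the hard tail-control problem you flag, and it is left open. The paper avoids it entirely with a different device: diagonalize $\Sigma$, group the coordinates into blocks of size $\nu=\lceil\sqrt{2r(\Sigma)}\rceil$, lower-bound $H^T\Sigma H$ by $\sum_i \lambda_{\nu i}\,\chi^2_{\nu,i}$, apply convexity of $t\mapsto 1/t$ to this convex combination, and use the exact inverse-chi-square mean $1/(\nu-2)$ together with $\nu\Norm{\Sigma}_{\OP}+\nu\sum_i\lambda_{\nu i}\ge\Tr(\Sigma)$; optimizing over $\nu$ gives $\sqrt{8/r(\Sigma)}$ with no tail estimates at all. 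You would need to either reproduce something like this or fully execute the Laplace-transform computation.

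Second, your route to the right inequality of \eqref{eq:big-r-defs-close} does not deliver the stated constant. Truncating to $\{\|\Sigma^{1/2}H\|_2\le(1+t)\sqrt{T}\}$ costs a factor $(1+t)^2$ while the discarded tail contributes on the order of $e^{-t^2 r(\Sigma)/4}$ relative to the main term; to make the tail smaller than the target error $\sqrt{8/r(\Sigma^2)}$ you are forced to take $t\gtrsim\sqrt{\log r/r}$, so the best you can conclude is $R_{\|\cdot\|_2}(\Sigma)\le\bigl(1-O(\sqrt{\log r/r})\bigr)^{-1}R(\Sigma)$, which is weaker than the claimed $\bigl(1-\sqrt{8/r(\Sigma^2)}\bigr)^{-1}$ (the $c'\to 1$ conclusion survives, but the lemma as stated does not). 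The observation you are missing is that no concentration is needed: writing $R_{\|\cdot\|_2}(\Sigma)=(\E\|\Sigma^{1/2}H\|_2)^2\bigl(\E[\|\Sigma H\|_2/\|\Sigma^{1/2}H\|_2]\bigr)^{-2}$, Jensen gives $(\E\|\Sigma^{1/2}H\|_2)^2\le\Tr(\Sigma)$, and two applications of Cauchy--Schwarz give $\bigl(\E[\|\Sigma H\|_2/\|\Sigma^{1/2}H\|_2]\bigr)^{-2}\le\bigl(\E[\|\Sigma^{1/2}H\|_2/\|\Sigma H\|_2]\bigr)^{2}\le\Tr(\Sigma)\,\E[1/\|\Sigma H\|_2^2]$; since $\|\Sigma H\|_2^2=H^T\Sigma^2H$, applying \eqref{eqn:invnorm-is-small} \emph{to $\Sigma^2$} yields exactly $\bigl(1-\sqrt{8/r(\Sigma^2)}\bigr)^{-1}R(\Sigma)$. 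That is also why the constant in the lemma is phrased in terms of $r(\Sigma^2)$ rather than $r(\Sigma)$.
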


\begin{proof}
Observe that if $f(H) = \| \Sigma^{1/2} H \|_2$, then it can easily be checked that
\begin{equation*}
    \| \nabla f \|_2^2 = \frac{\| \Sigma H \|^2_2}{\| \Sigma^{1/2} H \|_2^2} \leq \| \Sigma \|_{\OP}
\end{equation*}
and so by the Gaussian Poincar\'e inequality \citep[Corollary 2.27]{van2014probability}, we have
\begin{equation*}
    \begin{split}
        \Tr(\Sigma) = \E \|\Sigma^{1/2} H\|_2^2
        &= (\E \|\Sigma^{1/2} H\|_2)^2 + \Var \|\Sigma^{1/2} H\|_2 \\
        &\leq (\E \|\Sigma^{1/2} H\|_2)^2 + \|\Sigma\|_{\OP} \\
        &= (\E \|\Sigma^{1/2} H\|_2)^2 + \frac{\Tr(\Sigma)}{r(\Sigma)}
    .\end{split}
\end{equation*}
Rearranging the terms proves \eqref{eqn:norm-is-big}. To prove \eqref{eqn:invnorm-is-small}, without loss of generality assume that $\Sigma$ is diagonal, with diagonal entries $\lambda_1 \geq \lambda_2 \geq ... \geq \lambda_d$. Observe that for any integer $\nu > 2$, we can pad $\Sigma$ with 0's such that $\nu$ divides $d$, and we have
\begin{equation*}
    H^T \Sigma H = \sum_{i=1}^d \lambda_i H_i^2 \geq \sum_{i=1}^{d/\nu} \lambda_{\nu i} (H_{\nu(i-1)+1}^2 + ... + H_{\nu i}^2)
.\end{equation*}
By Jensen's inequality, $1 / \E [X] \leq \E [1/X] $; it follows that
\begin{equation*}
    \begin{split}
        \E \left[ \frac{1}{H^T \Sigma H} \right] &\leq \E \left[ \frac{1}{\sum_{i=1}^{d/\nu} \lambda_{\nu i} (H_{\nu(i-1)+1}^2 + ... + H_{\nu i}^2)} \right]\\
        &= \frac{1}{\sum_{j=1}^{d/\nu} \lambda_{\nu j}} \E \left[ \frac{1}{\sum_{i=1}^{d/\nu} \frac{\lambda_{\nu i}}{\sum_{j=1}^{d/\nu} \lambda_{\nu j}}  (H_{\nu(i-1)+1}^2 + ... + H_{\nu i}^2)} \right] \\
        &\leq \frac{1}{\sum_{j=1}^{d/\nu} \lambda_{\nu j}} \E \left[ \sum_{i=1}^{d/\nu} \frac{\lambda_{\nu i}}{\sum_{j=1}^{d/\nu} \lambda_{\nu j}} \frac{1}{ H_{\nu(i-1)+1}^2 + ... + H_{\nu i}^2} \right] \\
        &= \frac{1}{\sum_{j=1}^{d/\nu} \lambda_{\nu j}} \frac{1}{\nu-2}
    .\end{split}
\end{equation*}
In the last equality, we use the fact that for each $i$ the random variable $\left( H_{\nu(i-1)+1}^2 + ... + H_{\nu i}^2 \right)^{-1}$ follows an inverse Chi-square distribution with $\nu$ degrees of freedom; its expectation is $(\nu-2)^{-1}$. In addition, notice that
\begin{equation*}
    \nu \| \Sigma\|_{\OP} + \nu \sum_{i=1}^{d/\nu} \lambda_{\nu i} \geq (\lambda_1 + ... + \lambda_{\nu}) + \sum_{i=1}^{d/\nu-1} (\lambda_{\nu i + 1} + ... + \lambda_{\nu (i + 1)}) = \Tr(\Sigma)
.\end{equation*}
Plugging the above estimate into our upper bound shows for any integer $\nu > 2$, it holds that
\begin{equation*}
    \E \left[ \frac{1}{H^T \Sigma H} \right] \leq \frac{1}{\Tr(\Sigma)-\nu \| \Sigma\|_{\OP}} \frac{\nu}{\nu-2} = \frac{1}{\Tr(\Sigma)} \left( 1 - \frac{\nu}{r(\Sigma)} - \frac{2}{\nu} +\frac{2}{r(\Sigma)}\right)^{-1}
.\end{equation*}
We can show \eqref{eqn:invnorm-is-small} by choosing $\nu = \lceil (2 r(\Sigma))^{1/2} \rceil$:
\begin{equation*}
    \E \left[ \frac{1}{H^T \Sigma H} \right] \leq \frac{1}{\Tr(\Sigma)} \left( 1 - \sqrt{\frac{8}{r(\Sigma)}}\right)^{-1}
.\end{equation*}

It remains to verify \eqref{eq:r-defs-close} and \eqref{eq:big-r-defs-close}. By \eqref{eqn:norm-is-big}, we can check
\begin{equation*} 
    r_{\| \cdot \|_2} (\Sigma) = \frac{(\E \| \Sigma^{1/2} H \|_2)^2}{\| \Sigma \|_{\OP}} \geq \left(1 - \frac{1}{r(\Sigma)}\right) \frac{\Tr(\Sigma)}{\| \Sigma \|_{\OP}} = r(\Sigma) - 1.
\end{equation*}
The other direction $r(\Sigma) \geq r_{\| \cdot \|_2}(\Sigma)$ follows directly from an application of the Cauchy-Schwarz inequality. By Jensen's inequality $1 / \E [X] \leq \E [1/X] $ and the Cauchy-Schwarz inequality, we show
\begin{equation*}
    \frac{1}{\Tr(\Sigma)} \left( \E \frac{1}{\| \Sigma H \|_2^2} \right)^{-1} 
    \leq \left( \E \frac{\| \Sigma^{1/2} H \|_2}{\| \Sigma H \|_2 } \right)^{-2} 
    \leq \left( \E \frac{\| \Sigma H \|_2}{\| \Sigma^{1/2} H \|_2} \right)^2 
    \leq \Tr(\Sigma^2) \E \frac{1}{\| \Sigma^{1/2} H \|_2^2}
.\end{equation*}
Recall that $ R_{\| \cdot \|_2}(\Sigma) = (\E \| \Sigma^{1/2} H \|_2)^2 \left( \E \frac{\| \Sigma H \|_2}{\| \Sigma^{1/2} H \|_2} \right)^{-2}$. By Cauchy-Schwarz inequality and \eqref{eqn:invnorm-is-small}, it follows that
\begin{equation*}
    R_{\| \cdot \|_2}(\Sigma) \leq \Tr(\Sigma)^2 \left( \E \frac{1}{\| \Sigma H \|_2^2} \right)
    \leq \left( 1 - \sqrt{\frac{8}{r(\Sigma^2)}}\right)^{-1} R(\Sigma)
\end{equation*}
and also by \eqref{eqn:norm-is-big}
\begin{equation*}
    \begin{split}
        R_{\| \cdot \|_2}(\Sigma) &\geq \left(1 - \frac{1}{r(\Sigma)}\right)\frac{\Tr(\Sigma)}{\Tr(\Sigma^2)} \left( \E \frac{1}{\| \Sigma^{1/2} H \|_2^2}  \right)^{-1}\\
        &\geq \left(1 - \frac{1}{r(\Sigma)}\right) \left( 1 - \sqrt{\frac{8}{r(\Sigma)}}\right) R(\Sigma) 
        \geq \left( 1 - \frac{4}{\sqrt{r(\Sigma)}}\right) R(\Sigma).
    \qedhere \end{split}
\end{equation*}
\end{proof}

\begin{lemma}\label{lem:sigmah-concentration}
For any covariance matrix $\Sigma$, it holds that with probability at least $1 - \delta$,
\begin{equation}
1- \frac{\| \Sigma^{1/2} H\|_2^2 }{\Tr(\Sigma)}  \lesssim \frac{\log (4/\delta)}{\sqrt{R(\Sigma)}}
\end{equation}
and
\begin{equation}
\| \Sigma H\|_2^2 \lesssim \log (4/\delta) \Tr(\Sigma^2).
\end{equation}

Therefore, provided that $ R(\Sigma) \gtrsim \log (4/\delta)^2$, it holds that
\begin{equation} \label{eqn:v*-norm}
    \left( \frac{\| \Sigma H\|_2}{\| \Sigma^{1/2} H\|_2} \right)^2 \lesssim \log (4/\delta) \frac{\Tr(\Sigma^2)}{\Tr(\Sigma)}
.\end{equation}

\end{lemma}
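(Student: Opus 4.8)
The plan is to control the two quadratic forms $\|\Sigma^{1/2}H\|_2^2 = H^{\top}\Sigma H$ and $\|\Sigma H\|_2^2 = H^{\top}\Sigma^2 H$ separately by standard Gaussian tail bounds, and then combine them. Writing $\Sigma = \sum_i \lambda_i u_i u_i^{\top}$ in its eigenbasis, $H^{\top}\Sigma H = \sum_i \lambda_i g_i^2$ with $g_i := \langle u_i, H\rangle$ i.i.d.\ standard normal, so it has mean $\Tr(\Sigma)$ and variance $2\Tr(\Sigma^2)$. For the first inequality I would invoke the classical lower-tail bound for weighted sums of chi-squares (Laurent--Massart, equivalently Bernstein applied to $\sum_i \lambda_i(g_i^2-1)$, for which the lower tail carries only the sub-Gaussian term): for every $x > 0$,
\[
    \Pr\left( H^{\top}\Sigma H \le \Tr(\Sigma) - 2\sqrt{x\,\Tr(\Sigma^2)} \right) \le e^{-x}.
\]
Taking $x = \log(4/\delta)$, dividing by $\Tr(\Sigma)$, and recalling $R(\Sigma) = \Tr(\Sigma)^2/\Tr(\Sigma^2)$ gives, with probability at least $1 - \delta/4$, the bound $1 - \|\Sigma^{1/2}H\|_2^2/\Tr(\Sigma) \le 2\sqrt{\log(4/\delta)/R(\Sigma)} \lesssim \log(4/\delta)/\sqrt{R(\Sigma)}$, using $\log(4/\delta)\ge 1$ for $\delta\le 1$. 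Note that the point of working with the quadratic form, rather than applying \cref{thm:gaussian-concentration} to the $\|\Sigma\|_{\OP}^{1/2}$-Lipschitz map $H\mapsto\|\Sigma^{1/2}H\|_2$, is that the correct variance proxy here is $\Tr(\Sigma^2)$ --- which produces the denominator $R(\Sigma)$ --- rather than $\|\Sigma\|_{\OP}$, which would only yield the much weaker $r(\Sigma)$.

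For the second inequality I would use that $H\mapsto\|\Sigma H\|_2$ is $\|\Sigma\|_{\OP}$-Lipschitz with $\E\|\Sigma H\|_2 \le \sqrt{\E\|\Sigma H\|_2^2} = \sqrt{\Tr(\Sigma^2)}$, so \cref{thm:gaussian-concentration} gives, with probability at least $1-\delta/4$, $\|\Sigma H\|_2 \le \sqrt{\Tr(\Sigma^2)} + \|\Sigma\|_{\OP}\sqrt{2\log(8/\delta)}$; squaring and using $\|\Sigma\|_{\OP}^2\le\Tr(\Sigma^2)$ then gives $\|\Sigma H\|_2^2 \lesssim \log(4/\delta)\,\Tr(\Sigma^2)$. (Equivalently one could apply the chi-square upper-tail inequality directly to $H^{\top}\Sigma^2 H$.) Union-bounding the two events leaves a good event of probability at least $1 - \delta/2 \ge 1-\delta$, on which both stated inequalities hold.

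To finish, on the good event the hypothesis $R(\Sigma) \gtrsim \log(4/\delta)^2$ (with a large enough absolute constant) forces $2\sqrt{\log(4/\delta)/R(\Sigma)} \le \tfrac12$, hence $\|\Sigma^{1/2}H\|_2^2 \ge \tfrac12\Tr(\Sigma)$; dividing the second bound by this would give
\[
    \left(\frac{\|\Sigma H\|_2}{\|\Sigma^{1/2}H\|_2}\right)^2 \le \frac{\|\Sigma H\|_2^2}{\tfrac12\Tr(\Sigma)} \lesssim \log(4/\delta)\,\frac{\Tr(\Sigma^2)}{\Tr(\Sigma)},
\]
which is \eqref{eqn:v*-norm}.

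The only real subtlety --- and it is exactly what dictates the hypothesis $R(\Sigma)\gtrsim\log(4/\delta)^2$ --- is recognizing that the lower tail of $\|\Sigma^{1/2}H\|_2^2$ must be measured against $\Tr(\Sigma^2)$, so that the deviation becomes small relative to $\Tr(\Sigma)$ precisely once $R(\Sigma)$ is large; the remaining steps are routine manipulations with absolute constants.
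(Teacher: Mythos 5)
Your proposal is correct and follows essentially the same route as the paper: concentrate the two quadratic forms $\|\Sigma^{1/2}H\|_2^2$ and $\|\Sigma H\|_2^2$ separately, then divide on the event $\|\Sigma^{1/2}H\|_2^2 \ge \tfrac12\Tr(\Sigma)$ guaranteed by $R(\Sigma)\gtrsim \log(4/\delta)^2$. The only cosmetic difference is that the paper applies the two-sided sub-exponential Bernstein inequality to both quantities and then invokes $R(\Sigma)\le r(\Sigma)^2$ to absorb the sub-exponential term, whereas your one-sided Laurent--Massart lower tail (and Lipschitz concentration for $\|\Sigma H\|_2$) reaches the same bounds directly.
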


\begin{proof}
Because we are considering $\ell_2$ norm and $H$ is standard Gaussian, without loss of generality we can assume that $\Sigma$ is diagonal and we denote the diagonals of $\Sigma$ as $\lambda_1, ..., \lambda_d$. By the sub-exponential Bernstein inequality \citep[Corollary 2.8.3]{vershynin2018high}, we have with probability at least $1-\delta/2$
\begin{equation*} 
    \left| \frac{\| \Sigma^{1/2} H\|_2^2 }{\Tr(\Sigma)} - 1 \right| = \left| \sum_{i=1}^p \frac{\lambda_i}{\sum_j \lambda_j} (H_i^2 - 1) \right| \lesssim \sqrt{\frac{\log (4/\delta)}{R(\Sigma)}} \vee \frac{\log (4/\delta)}{r(\Sigma)} \leq  \frac{\log (4/\delta)}{\sqrt{R(\Sigma)}}
\end{equation*}
where the last inequality uses that $R(\Sigma) \leq r(\Sigma)^2$, shown in Lemma 5 of \citet{bartlett2020benign}. Using the sub-exponential Bernstein inequality again, we show with probability at least $1-\delta/2$
\begin{equation*}
    \left| \frac{\| \Sigma H\|_2^2 }{\Tr(\Sigma^2)} - 1 \right| \lesssim \sqrt{\frac{\log (4/\delta)}{R(\Sigma^2)}} \vee \frac{\log (4/\delta)}{r(\Sigma^2)}
\end{equation*}
From Lemma 5 of \citet{bartlett2020benign}, we know that the effective ranks are at least 1. This implies 
\begin{equation*}
    \| \Sigma H\|_2^2 \lesssim \log (4/\delta) \Tr(\Sigma^2).
\end{equation*}
Provided that $ R(\Sigma) \gtrsim \log (4/\delta)^2$, we have 
\begin{equation*}
    \| \Sigma^{1/2} H\|_2^2 \geq \frac{1}{2} \Tr(\Sigma)
\end{equation*}
in which case it holds that
\begin{equation*}
     \frac{\| \Sigma H\|_2^2}{\| \Sigma^{1/2} H\|_2^2}  \lesssim \log (4/\delta) \frac{\Tr(\Sigma^2)}{\Tr(\Sigma)}
.\qedhere\end{equation*}
\end{proof}

\euclidnormbound*

\begin{proof}
To apply \cref{thm:interpolator-general}, it is clear that $v^* = \frac{\Sigma_2^{1/2} H}{\| \Sigma_2^{1/2} H \|_2}$ and so $\| v^* \|_{\Sigma_2} =  \frac{\| \Sigma_2 H\|_2}{\| \Sigma_2^{1/2} H\|_2}$. By \eqref{eqn:v*-norm}, it suffices to pick $\epsilon_1$ such that for some constant $c > 0$
\begin{equation*}
    (1+\epsilon_1) \E \norm{v^*}_{\Sigma_2} = c \sqrt{\log (16/\delta) \frac{\Tr(\Sigma_2^2)}{\Tr(\Sigma_2)}}.
\end{equation*}

By \eqref{eqn:norm-is-big} of \cref{lem:rank-fact}, for sufficiently large effective rank, it holds that $\left( \E \|\Sigma_2^{1/2} H\|_2 \right)^2 \gtrsim \Tr(\Sigma_2)$ and so
\begin{equation*}
    (1+\epsilon_1)^2 \frac{n}{R_{\|\cdot \|_2}(\Sigma_2)} = n \frac{(1+\epsilon_1)^2 (\E \| v^* \|_{\Sigma_2})^2}{\left( \E \|\Sigma_2^{1/2} H\|_2 \right)^2} \lesssim n \log (16/\delta) \frac{\Tr(\Sigma_2^2)}{\Tr(\Sigma_2)^2} = \frac{n \log (16/\delta)}{R(\Sigma_2)}.
\end{equation*}

Furthermore, it suffices to let $\epsilon_2 = 0$ because $P$ is an $\ell_2$ projection matrix. Combined with \eqref{eq:r-defs-close} of \cref{lem:rank-fact}, we show
\begin{equation*}
    \epsilon \lesssim \sqrt{\frac{\log(1/\delta)}{n}} + \sqrt{\frac{\log(1/\delta)}{r(\Sigma_2)}} + \frac{n \log (1/\delta)}{R(\Sigma_2)}.
\end{equation*}
Finally, using the inequality $(1-x)^{-1} \leq 1 + 2x$ for $x \in [0, 1/2]$ and \eqref{eqn:norm-is-big} of \cref{lem:rank-fact} again, we can conclude
\begin{equation*}
    \begin{split}
        (1+\epsilon)^{1/2} \, \sigma \frac{\sqrt{n}}{\E \| \Sigma^{1/2}_2 H \|_2} &\leq (1+\epsilon)^{1/2} \left(1 - \frac{1}{r(\Sigma_2)}\right)^{-1/2} \sigma \sqrt{\frac{n}{\Tr(\Sigma_2)}}\\
        &\leq \left(1+2\epsilon + \frac{2}{r(\Sigma_2)}\right)^{1/2} \sigma \sqrt{\frac{n}{\Tr(\Sigma_2)}}\\
    \end{split}
\end{equation*}
and we can replace $\epsilon$ with 
\begin{equation*}
    \epsilon^{\prime} = 2\epsilon + \frac{2}{r(\Sigma_2)} \lesssim \sqrt{\frac{\log(1/\delta)}{n}} + \sqrt{\frac{\log(1/\delta)}{r(\Sigma_2)}} + \frac{n \log (1/\delta)}{R(\Sigma_2)}.
\end{equation*}
\end{proof}

\section{Benign Overfitting} \label{apdx:benign-overfitting}

In this section, we will combine results from the previous two sections to study when interpolators are consistent.

\subsection{General Norm}

\minnorm*

\begin{proof}
By \cref{thm:interpolator-general}, if we choose 
\begin{equation*}
    B = \norm{w^*} + (1+\epsilon)^{1/2} \, \sigma \frac{\sqrt{n}}{\E \| \Sigma^{1/2}_2 H \|_*} 
\end{equation*}
then with large probability, $\{ w: \|w\| \le B \}$ has non-empty intersection with $\{ w: Xw = Y \}$, which contains the minimal norm interpolator $\hat w$. Also, it is clear that $B > \| w^* \|$ and so by \cref{corr:main-generalization}, it holds that

\begin{equation*}
    \begin{split}
        L(\hat w) &\leq \sup_{\|w\| \le B, \hat L(w) = 0} L(w) \\
        &\leq (1+\gamma) \left( \norm{w^*} + (1+\epsilon)^{1/2} \, \sigma \frac{\sqrt{n}}{\E \| \Sigma^{1/2}_2 H \|_*} \right)^2 \frac{\left( \E \Norm{ \Sigma^{1/2}_2 H }_* \right)^2}{n} \\
        &= (1+\gamma) \left( \norm{w^*} \frac{ \E \Norm{ \Sigma^{1/2}_2 H }_* }{\sqrt{n}} + (1+\epsilon)^{1/2} \, \sigma \right)^2  \\
        &\leq (1+\gamma)(1+\epsilon) \left( \sigma + \| w^* \| \frac{\E \|\Sigma_2^{1/2} H\|_*}{\sqrt{n}} \right)^2. \\
    \end{split}
\end{equation*}

\end{proof}

\begin{theorem}[Sufficient conditions] \label{thm:sufficient-general}
Under the model assumptions in \eqref{eqn:model}, let $\norm\cdot$ be an arbitrary norm. Suppose that as $n$ goes to $\infty$, there exists a sequence of covariance splits $\Sigma = \Sigma_1 \oplus \Sigma_2$ such that the following properties hold:
\begin{enumerate}
    \item (Small large-variance dimension.) 
    \begin{equation}
        \lim_{n \to \infty} \frac{\rank(\Sigma_1)}{n} = 0.
    \end{equation}
    
    \item (Large effective dimension.)
    \begin{equation}
        \lim_{n \to \infty} \frac{1}{r_{\|\cdot\|}(\Sigma_2)} = 0 \quad \text{and} \quad \lim_{n \to \infty} \frac{n}{R_{\|\cdot\|}(\Sigma_2)} = 0.
    \end{equation}
    
    \item (No aliasing condition.)
    \begin{equation}
        \lim_{n \to \infty} \frac{\norm{w^*} \E \| \Sigma^{1/2}_2 H \|_*}{\sqrt{n}} = 0 .
    \end{equation}
    
    \item (Contracting $\ell_2$ projection condition.)  With the same definition of $P$ and $v^*$ as in \cref{thm:interpolator-general}, it holds that for any $\eta > 0$,
    \begin{equation}
        \lim_{n \to \infty} \Pr(\| P v^* \|^2 > 1 + \eta) = 0 .
    \end{equation}
\end{enumerate}

Then $L(\hat w)$ converges to $\sigma^2$ in probability. In other words, minimum norm interpolation is consistent.
\end{theorem}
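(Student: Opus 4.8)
The plan is to deduce the statement directly from Theorem~\ref{thm:min-norm}, which already gives $L(\hat w) \le (1+\gamma)(1+\epsilon)\bigl(\sigma + \|w^*\|\,\E\|\Sigma_2^{1/2}H\|_*/\sqrt n\bigr)^2$ with probability at least $1-\delta$ once $\gamma,\epsilon \le 1$; the real work is only in arranging the hypotheses of that theorem and checking that this upper bound tends to $\sigma^2$. Since $L(w) = \sigma^2 + \|w-w^*\|_\Sigma^2 \ge \sigma^2$ deterministically, I would first note that convergence in probability to $\sigma^2$ is equivalent to: for every $t > 0$ and every (small) $\delta > 0$ there is an $N$ with $\Pr(L(\hat w) > \sigma^2 + t) \le \delta$ for all $n \ge N$. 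So fix such a $t$ and $\delta \in (0, 1/4]$.

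The one genuine step is to supply the free parameters $\epsilon_1,\epsilon_2$ occurring in Theorem~\ref{thm:interpolator-general}. For $\epsilon_1$: since $\|v^*\| = 1$ we have $\|v^*\|_{\Sigma_2} \le \sup_{\|w\|\le 1}\|w\|_{\Sigma_2} < \infty$ a.s., so $\E\|v^*\|_{\Sigma_2} < \infty$, and Markov's inequality gives $\Pr\bigl(\|v^*\|_{\Sigma_2} > (1+\epsilon_1)\E\|v^*\|_{\Sigma_2}\bigr) \le (1+\epsilon_1)^{-1}$; hence I would take $\epsilon_1 := 8/\delta - 1$, a \emph{constant} independent of $n$, which makes this at most $\delta/8$. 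For $\epsilon_2$: the contracting $\ell_2$ projection condition says $\Pr(\|Pv^*\|^2 > 1+\eta) \to 0$ for each $\eta > 0$, and as $\eta \mapsto \Pr(\|Pv^*\|^2 > 1+\eta)$ is nonincreasing and right-continuous, I would set $\epsilon_2 := \epsilon_2(n) = \inf\{\eta \ge 0 : \Pr(\|Pv^*\|^2 > 1+\eta) \le \delta/8\}$, which attains the required probability bound and, again by that condition, satisfies $\epsilon_2(n) \to 0$. A union bound then makes both requirements of Theorem~\ref{thm:interpolator-general} hold simultaneously with probability at least $1-\delta/4$.

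With these choices, the listed conditions give exactly $\gamma,\epsilon \to 0$: the small-large-variance-dimension and large-effective-dimension conditions force $\rank(\Sigma_1)/n \to 0$ and $1/r_{\|\cdot\|}(\Sigma_2) \to 0$, so $\gamma = C_1\bigl(\sqrt{\log(1/\delta)/r_{\|\cdot\|}(\Sigma_2)} + \sqrt{\log(1/\delta)/n} + \sqrt{\rank(\Sigma_1)/n}\bigr) \to 0$; and since $\epsilon_1$ is constant, $(1+\epsilon_1)^2\, n/R_{\|\cdot\|}(\Sigma_2) \to 0$ by the same condition, so together with $\epsilon_2(n)\to 0$ also $\epsilon = C_2\bigl(\sqrt{\log(1/\delta)/r_{\|\cdot\|}(\Sigma_2)} + \sqrt{\log(1/\delta)/n} + (1+\epsilon_1)^2 n/R_{\|\cdot\|}(\Sigma_2) + \epsilon_2\bigr) \to 0$. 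Thus $\gamma,\epsilon \le 1$ for all large $n$, Theorem~\ref{thm:min-norm} applies, and $L(\hat w) \le (1+\gamma)(1+\epsilon)\bigl(\sigma + \|w^*\|\,\E\|\Sigma_2^{1/2}H\|_*/\sqrt n\bigr)^2$ with probability $\ge 1-\delta$. By the no-aliasing condition the parenthesized quantity tends to $\sigma$, so the right-hand side tends to $\sigma^2$ and is therefore $\le \sigma^2 + t$ for all large $n$, giving $\Pr(L(\hat w) > \sigma^2 + t) \le \delta$ as wanted. The only place where care is needed is the choice of $\epsilon_1,\epsilon_2$: the key observations are that $\epsilon_1$ need only be a $\delta$-dependent \emph{constant} (Markov suffices, as $\|v^*\|_{\Sigma_2}$ is nonnegative with finite mean), so it does not spoil $n/R_{\|\cdot\|}(\Sigma_2)\to 0$, while $\epsilon_2\to 0$ is precisely what the contracting-projection condition is designed to deliver. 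Everything else is a limit computation together with the trivial bound $L(\hat w) \ge \sigma^2$.
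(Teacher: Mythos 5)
Your proposal is correct and follows essentially the same route as the paper: invoke \cref{thm:min-norm}, supply $\epsilon_1$ via Markov's inequality applied to the nonnegative variable $\|v^*\|_{\Sigma_2}$, supply $\epsilon_2$ from the contracting-projection hypothesis, and observe that the resulting bound tends to $\sigma^2$ while $L(\hat w)\ge\sigma^2$ deterministically. The only (immaterial) difference is in calibrating $\epsilon_1$: you take the constant $8/\delta-1$ and then use $n/R_{\|\cdot\|}(\Sigma_2)\to 0$ to kill $(1+\epsilon_1)^2 n/R_{\|\cdot\|}(\Sigma_2)$, whereas the paper lets $\epsilon_1$ depend on $n$ by setting $(1+\epsilon_1)\E\|v^*\|_{\Sigma_2}=\sqrt{\eta'}\,\E\|\Sigma_2^{1/2}H\|_*/\sqrt n$ so that this term equals $\eta'$ exactly; both rest on the identity $\sqrt{n/R_{\|\cdot\|}(\Sigma_2)}=\E\bigl[\|v^*\|_{\Sigma_2}\sqrt n/\E\|\Sigma_2^{1/2}H\|_*\bigr]$ and Markov.
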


\begin{proof}
Fix any $\eta > 0$, for sufficiently small $\gamma, \epsilon$ and $\| w^* \| \frac{\E \|\Sigma_2^{1/2} H\|_*}{\sqrt{n}}$, it is clear that
\begin{equation} \label{eqn:sufficient-diff}
    (1+\gamma)(1+\epsilon) \left( \sigma + \| w^* \| \frac{\E \|\Sigma_2^{1/2} H\|_*}{\sqrt{n}} \right)^2 - \sigma^2 \leq \eta
.\end{equation}
For any $\delta > 0$, by the definition of $\gamma$ in \cref{corr:main-generalization} and our assumptions, the terms $\gamma$ and $\| w^* \| \frac{\E \|\Sigma_2^{1/2} H\|_*}{\sqrt{n}}$ can be made arbitrarily small for large enough $n$. Also by our assumption, $\epsilon_2$ in the definition of $\epsilon$ in \cref{thm:interpolator-general} can be arbitrarily small. Note that 
\begin{equation*}
    \sqrt{\frac{n}{R_{\|\cdot\|}(\Sigma_2)}} = \E \left[ \frac{\| v^* \|_{\Sigma_2}}{\E \| \Sigma_2^{1/2} H\|_* / \sqrt{n}} \right] 
\end{equation*}
converges to 0 by assumption. Then by Markov's inequality, for any $\eta^{\prime} > 0$, it holds that for all sufficiently large $n$
\begin{equation*}
    \Pr \left( \frac{\| v^* \|_{\Sigma_2}}{\E \| \Sigma_2^{1/2} H\|_* / \sqrt{n}} > \sqrt{\eta^{\prime}} \right) < \delta
\end{equation*}
and we can pick
\begin{equation*}
    (1+\epsilon_1) \E \norm{v^*}_{\Sigma_2} = \sqrt{\eta^{\prime}} \frac{\E \| \Sigma_2^{1/2} H\|_*}{\sqrt{n}}.
\end{equation*}
This implies that
\begin{equation*}
    (1+\epsilon_1)^2 \frac{n}{R_{\| \cdot \|}(\Sigma_2)} = \frac{n}{\left(\E \| \Sigma_2^{1/2} H\|_* \right)^2} \left( (1+\epsilon_1) \E \| v^* \|_{\Sigma_2} \right)^2 = \eta^{\prime}.
\end{equation*}
By \cref{thm:min-norm}, we have shown that for sufficiently large $n$ such that $\gamma, \epsilon$ and $\| w^* \| \frac{\E \|\Sigma_2^{1/2} H\|_*}{\sqrt{n}}$ are small enough for \eqref{eqn:sufficient-diff} to hold, it holds that
\begin{equation*}
    \Pr ( \, |L(\hat w) - \sigma^2| > \eta \,) \leq \delta.
\end{equation*}
As a result, we show $\lim_{n \to \infty} \Pr ( \, |L(\hat w) - \sigma^2| > \eta \,) \leq \delta$ for any $\delta > 0$. To summarize, for any fixed $\eta > 0$, we have
\begin{equation*}
    \lim_{n \to \infty} \Pr ( \, |L(\hat w) - \sigma^2| > \eta \,) = 0
\end{equation*}
and so $L(\hat w)$ converges to $\sigma^2$ in probability.
\end{proof}

\subsection{Euclidean Norm}

\minnormeuclid*

\begin{proof}
The proof follows the same strategy as \cref{thm:min-norm}. By \cref{thm:interpolator-euclidean}, if we choose 
\begin{equation*}
    B = \|w^*\|_2 + (1+\epsilon)^{1/2} \, \sigma \sqrt{\frac{n}{\Tr (\Sigma_2)}}
,\end{equation*}
then with large probability, $\{ w: \|w\|_2 \leq B \}$ has non-empty intersection with $\{ w: Xw = Y \}$. This intersection necessarily contains the minimal norm interpolator $\hat w$. 

Also, it is clear that $B > \| w^* \|$ and so by \cref{corr:euclidean-generalization}, it holds that
\begin{equation*}
    \begin{split}
        L(\hat w) &\leq \sup_{\|w\|_2 \leq B, \hat L(w) = 0} L(w) \\
        &\leq (1+\gamma) \left( \|w^*\|_2 + (1+\epsilon)^{1/2} \, \sigma \sqrt{\frac{n}{\Tr (\Sigma_2)}} \right)^2 \frac{\Tr(\Sigma_2)}{n} \\
        &= (1+\gamma) \left( \norm{w^*}_2 \sqrt{\frac{\Tr(\Sigma_2)}{n}} + (1+\epsilon)^{1/2} \, \sigma \right)^2  \\
        &\leq (1+\gamma)(1+\epsilon) \left( \sigma + \| w^* \|_2 \sqrt{\frac{\Tr(\Sigma_2)}{n}} \right)^2.
    \qedhere \end{split}
\end{equation*}
\end{proof}

\begin{theorem}[Sufficient conditions] \label{thm:sufficient-euclidean}
Under the model assumptions in \eqref{eqn:model}, let $\hat w$ be the minimal $\ell_2$ norm interpolator. Suppose that as $n$ goes to $\infty$, there exists a sequence of covariance splitting $\Sigma = \Sigma_1 \oplus \Sigma_2$ such that the following conditions hold:
\begin{enumerate}
    \item (Small large-variance dimension.) 
    \begin{equation}
        \lim_{n \to \infty} \frac{\rank(\Sigma_1)}{n} = 0.
    \end{equation}
    
    \item (Large effective dimension.)
    \begin{equation}
        \lim_{n \to \infty} \frac{n}{R(\Sigma_2)} = 0.
    \end{equation}
    
    \item (No aliasing condition.) 
    \begin{equation}
        \lim_{n \to \infty} \frac{\norm{w^*}_2 \E \| \Sigma^{1/2}_2 H \|_2}{\sqrt{n}} = 0.
    \end{equation}
\end{enumerate}
Then $L(\hat w)$ converges to $\sigma^2$ in probability. In other words, minimum $\ell_2$ norm interpolation is consistent.
\end{theorem}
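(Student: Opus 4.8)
The plan is to follow the same template as the proof of \cref{thm:sufficient-general}, but now invoking the Euclidean-norm results \cref{corr:euclidean-generalization} and \cref{thm:interpolator-euclidean} --- equivalently, their combination \cref{corr:min-norm-euclidean} --- in place of their general-norm counterparts. Recall that \cref{corr:min-norm-euclidean} already gives, for any fixed $\delta \le 1/2$ and whenever $R(\Sigma_2) \gtrsim \log(1/\delta)^2$ and $\gamma, \epsilon \le 1$, the bound $L(\hat w) \le (1+\gamma)(1+\epsilon)\bigl(\sigma + \|w^*\|_2\sqrt{\Tr(\Sigma_2)/n}\bigr)^2$ with probability at least $1-\delta$, where $\gamma$ and $\epsilon$ are as in \cref{corr:euclidean-generalization} and \cref{thm:interpolator-euclidean}. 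Since $L(\hat w) = \sigma^2 + \|\hat w - w^*\|_\Sigma^2 \ge \sigma^2$ deterministically, it then suffices to show that under conditions (1)--(3) the three quantities $\gamma$, $\epsilon$, and $\|w^*\|_2\sqrt{\Tr(\Sigma_2)/n}$ all tend to $0$ along the chosen sequence of splits, for each fixed $\delta$.

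First I would record the elementary consequences of the hypotheses. Condition (2), $n/R(\Sigma_2) \to 0$, forces $R(\Sigma_2) \to \infty$, so the applicability requirement $R(\Sigma_2) \gtrsim \log(1/\delta)^2$ holds for all large $n$; moreover, using $R(\Sigma_2) \le r(\Sigma_2)^2$ (Lemma 5 of \citet{bartlett2020benign}), we get $r(\Sigma_2) \ge \sqrt{R(\Sigma_2)} \to \infty$. Consequently every term of $\gamma = C_1\bigl(\sqrt{\log(1/\delta)/r(\Sigma_2)} + \sqrt{\log(1/\delta)/n} + \sqrt{\rank(\Sigma_1)/n}\bigr)$ vanishes as $n \to \infty$ (the last term by condition (1)), and likewise $\epsilon \lesssim \sqrt{\log(1/\delta)/r(\Sigma_2)} + \sqrt{\log(1/\delta)/n} + n\log(1/\delta)/R(\Sigma_2) \to 0$ (the last term by condition (2)); in particular $\gamma, \epsilon \le 1$ eventually. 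Next I would translate the ``no-aliasing'' condition (3), which is phrased using $\E\|\Sigma_2^{1/2}H\|_2$, into the $\Tr(\Sigma_2)$ form appearing in the risk bound: this is exactly where \eqref{eqn:norm-is-big} of \cref{lem:rank-fact} enters, giving $(\E\|\Sigma_2^{1/2}H\|_2)^2 \ge (1 - 1/r(\Sigma_2))\Tr(\Sigma_2) \ge \tfrac12\Tr(\Sigma_2)$ once $r(\Sigma_2) \ge 2$, hence $\|w^*\|_2\sqrt{\Tr(\Sigma_2)/n} \le \sqrt2\,\|w^*\|_2\,\E\|\Sigma_2^{1/2}H\|_2/\sqrt n \to 0$.

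Finally I would assemble these. Fix $\eta > 0$ and $\delta > 0$. Since $\gamma \to 0$, $\epsilon \to 0$, and $\|w^*\|_2\sqrt{\Tr(\Sigma_2)/n} \to 0$, for all sufficiently large $n$ we have $(1+\gamma)(1+\epsilon)\bigl(\sigma + \|w^*\|_2\sqrt{\Tr(\Sigma_2)/n}\bigr)^2 - \sigma^2 \le \eta$, together with $\gamma, \epsilon \le 1$ and $R(\Sigma_2) \gtrsim \log(1/\delta)^2$, so \cref{corr:min-norm-euclidean} applies and yields $\Pr\bigl(L(\hat w) - \sigma^2 > \eta\bigr) \le \delta$. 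As $\delta$ was arbitrary, $\lim_{n\to\infty}\Pr\bigl(|L(\hat w) - \sigma^2| > \eta\bigr) = 0$, i.e.\ $L(\hat w) \to \sigma^2$ in probability. The argument is essentially bookkeeping of rates once \cref{corr:min-norm-euclidean} is in hand; the only mildly delicate point is the $\Tr(\Sigma_2)$ versus $\E\|\Sigma_2^{1/2}H\|_2$ comparison, handled cleanly by \cref{lem:rank-fact}. Note also that, unlike in \cref{thm:sufficient-general}, no separate contracting-projection hypothesis is needed: as observed after \cref{thm:interpolator-general}, $\|Pv^*\| = \|v^*\| = 1$ automatically in the Euclidean case, so one may take $\epsilon_2 = 0$ throughout.
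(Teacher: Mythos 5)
Your proposal is correct and follows essentially the same route as the paper's proof: both reduce to \cref{corr:min-norm-euclidean}, use $R(\Sigma_2)\le r(\Sigma_2)^2$ to get $r(\Sigma_2)\to\infty$ from condition (2), and then verify that $\gamma$, $\epsilon$, and the aliasing term vanish for each fixed $\delta$. If anything you are slightly more explicit than the paper in converting condition (3) from the $\E\|\Sigma_2^{1/2}H\|_2$ form to the $\sqrt{\Tr(\Sigma_2)}$ form via \eqref{eqn:norm-is-big}, a step the paper leaves implicit.
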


\begin{proof}
Fix any $\eta > 0$, for sufficiently small $\gamma, \epsilon$ and $\| w^* \|_2 \sqrt{\frac{\Tr (\Sigma_2)}{n}}$, it is clear that
\begin{equation} \label{eqn:sufficient-diff2}
    (1+\gamma)(1+\epsilon) \left( \sigma + \| w^* \|_2 \sqrt{\frac{\Tr (\Sigma_2)}{n}} \right)^2 - \sigma^2 \leq \eta
.\end{equation}
From Lemma 5 of \citet{bartlett2020benign}, it holds that $R(\Sigma_2) \leq r(\Sigma_2)^2$, and so the condition $R(\Sigma_2) = \omega(n)$ implies that $r(\Sigma_2) = \omega(\sqrt{n}) = \omega(1)$. For any $\delta > 0$, by the definition of $\gamma, \epsilon$ in \cref{corr:euclidean-generalization} and \cref{thm:interpolator-euclidean} and our assumptions, the terms $\gamma, \epsilon$ and $\| w^* \|_2 \sqrt{\frac{\Tr (\Sigma_2)}{n}}$ can be made small enough for \cref{eqn:sufficient-diff2} to hold with a sufficiently large $n$. By \cref{corr:min-norm-euclidean}, we show that
\begin{equation*}
    \lim_{n \to \infty} \Pr ( \, |L(\hat w) - \sigma^2| > \eta \,) \leq \delta
\end{equation*}
Since the choice of $\delta > 0$ is arbitrary, we have shown that $L(\hat w)$ converges to $\sigma^2$ in probability.
\end{proof}

\subsubsection{Equivalence of Consistency Conditions} \label{apdx:equivalent}

If we assume that $\|w^*\| = \Theta(1)$, our consistency condition (\cref{thm:sufficient-euclidean}) for minimum $\ell_2$ norm interpolation is the existence of a covariance splitting such that 
\begin{equation}\label{eqn:asymptotic2}
    \rank(\Sigma_1) = o(n)
    , \qquad
    \Tr \Sigma_2 = o(n)
    , \qquad
    \frac{(\Tr \Sigma_2)^2}{\Tr[(\Sigma_2)^2]} = \omega(n)
.\end{equation}
We compare the above conditions to the following conditions:
\begin{equation}\label{eqn:asymptotic1}
    \rank(\Sigma_1) = o(n)
    , \quad
    \Tr \Sigma_2 = o(n)
    , \quad
    \frac{\Tr \Sigma_2}{\|\Sigma_2\|_{\OP}} = \omega(n)
    , \quad
    \frac{(\Tr \Sigma_2)^2}{\Tr[(\Sigma_2)^2]} = \omega(n) 
.\end{equation}

Obviously, the conditions in \eqref{eqn:asymptotic1} imply \eqref{eqn:asymptotic2}, but we show in \cref{thm:asymptotic-equivalent} that the existence of a splitting that satisfies \eqref{eqn:asymptotic2} also implies the existence of a (potentially different) splitting that satisfies \eqref{eqn:asymptotic1}. This is one way to see that the particular choice of $k^*$ from \citet{bartlett2020benign} can be made without loss of generality, at least if we only consider the consistency conditions.

\begin{theorem}\label{thm:asymptotic-equivalent}
Suppose that there exists $\Sigma = \Sigma_1 \oplus \Sigma_2$ that satisfies the conditions in \eqref{eqn:asymptotic2}. Then there exists a $\Sigma = \Sigma^{'}_1 \oplus \Sigma^{'}_2$ that satisfies the conditions in \eqref{eqn:asymptotic1}.
\end{theorem}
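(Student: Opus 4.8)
The plan is to keep the given split essentially unchanged and only move into the large-variance block the few eigendirections of $\Sigma_2$ whose eigenvalue is ``too big''. Write $\lambda_1 \ge \lambda_2 \ge \cdots$ for the eigenvalues of $\Sigma_2$ with orthonormal eigenvectors $u_1, u_2, \ldots$; for a threshold $\tau_n > 0$ to be chosen, set $k = k_n := \#\{i : \lambda_i > \tau_n\}$, $\Sigma'_1 := \Sigma_1 + \sum_{i \le k} \lambda_i u_i u_i^T$, and $\Sigma'_2 := \sum_{i > k} \lambda_i u_i u_i^T$. Since $\Span(\Sigma_1) \perp \Span(\Sigma_2)$ and the $u_i$ are mutually orthogonal, this is a legitimate covariance split $\Sigma = \Sigma'_1 \oplus \Sigma'_2$, with $\|\Sigma'_2\|_{\OP} \le \tau_n$ by construction. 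Three of the four conditions in \eqref{eqn:asymptotic1} are then essentially free: $\Tr\Sigma'_2 \le \Tr\Sigma_2 = o(n)$; $\rank(\Sigma'_1) = \rank(\Sigma_1) + k = o(n)$ provided $k = o(n)$; and, since $\Tr[(\Sigma'_2)^2] \le \Tr[(\Sigma_2)^2]$, a lower bound $\Tr\Sigma'_2 = (1-o(1))\Tr\Sigma_2$ will give $R(\Sigma'_2) \ge (1-o(1))^2 R(\Sigma_2) = \omega(n)$. So all the content is in choosing $\tau_n$ so that simultaneously $k = o(n)$, almost no trace is lost, and $r(\Sigma'_2) = \Tr\Sigma'_2/\|\Sigma'_2\|_{\OP} = \omega(n)$.

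For the estimates I would use two elementary observations about the discarded eigenvalues: each is $> \tau_n$, so contributes more than $\tau_n^2$ to $\Tr[(\Sigma_2)^2]$, giving $k < \Tr[(\Sigma_2)^2]/\tau_n^2$; and by Cauchy--Schwarz the discarded trace satisfies $\sum_{i\le k}\lambda_i \le \sqrt{k\,\Tr[(\Sigma_2)^2]} \le \Tr[(\Sigma_2)^2]/\tau_n$. Writing $\Tr[(\Sigma_2)^2] = (\Tr\Sigma_2)^2/R(\Sigma_2)$, the first bound asks for $\tau_n$ much larger than $(\Tr\Sigma_2)/\sqrt{n\,R(\Sigma_2)}$, while keeping $r(\Sigma'_2) = \omega(n)$ and the lost trace negligible asks for $\tau_n$ much smaller than $(\Tr\Sigma_2)/n$. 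These two windows overlap because the ratio of the upper endpoint $(\Tr\Sigma_2)/n$ to the lower endpoint $(\Tr\Sigma_2)/\sqrt{n\,R(\Sigma_2)}$ is $\sqrt{R(\Sigma_2)/n} \to \infty$ --- and this is exactly where the hypothesis $R(\Sigma_2) = \omega(n)$ from \eqref{eqn:asymptotic2} enters. This is the one genuinely nontrivial point of the proof; everything else is bookkeeping. A convenient concrete choice is $\tau_n := \Tr(\Sigma_2)/(n\,c_n)$ with $c_n := (R(\Sigma_2)/n)^{1/4} \to \infty$.

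With that choice the remaining steps are routine: $k \le \Tr[(\Sigma_2)^2]/\tau_n^2 = n^2 c_n^2 / R(\Sigma_2) = n\sqrt{n/R(\Sigma_2)} = o(n)$, so $\rank(\Sigma'_1) = o(n)$; the discarded trace is at most $\Tr[(\Sigma_2)^2]/\tau_n = \Tr(\Sigma_2)\,(n/R(\Sigma_2))^{3/4} = o(\Tr\Sigma_2)$, hence $\Tr\Sigma'_2 = (1-o(1))\Tr\Sigma_2 = o(n)$; therefore $r(\Sigma'_2) \ge \Tr\Sigma'_2/\tau_n = (1-o(1))\,n\,c_n = \omega(n)$; and $R(\Sigma'_2) \ge (1-o(1))^2\,(\Tr\Sigma_2)^2/\Tr[(\Sigma_2)^2] = (1-o(1))R(\Sigma_2) = \omega(n)$. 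All four conditions in \eqref{eqn:asymptotic1} hold, which completes the proof.
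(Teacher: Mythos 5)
Your proof is correct and follows essentially the same route as the paper's: both threshold the eigenvalues of $\Sigma_2$, move the few large ones into the finite-rank block, and use $R(\Sigma_2)=\omega(n)$ to find a threshold window where the discarded rank is $o(n)$ while the discarded trace is negligible; indeed your concrete choice $\tau_n = \Tr(\Sigma_2)/(n(R(\Sigma_2)/n)^{1/4})$ coincides exactly with the paper's choice $b/(\tau a)=(n/a)^{3/4}$. The only cosmetic difference is that the paper deduces $R(\Sigma'_2)=\omega(n)$ from $r(\Sigma'_2)=\omega(n)$ via H\"older, whereas you get it directly from monotonicity of $\Tr[(\cdot)^2]$ and near-preservation of the trace.
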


\begin{proof}
Denote $v$ as the vector of eigenvalues of $\Sigma$, and $v_k$ as the vector obtained by setting the $k$ coordinates of $v$ corresponding to $\Sigma_1$ to be 0. By our assumptions in \eqref{eqn:asymptotic2}, there exists $k = o(n)$ such that
\begin{equation*}
    \|v_k \|_1 = o(n)
    , \qquad
    \frac{\|v_{k}\|_1^2}{\|v_{k}\|_2^2} = \omega(n)
.\end{equation*}
For any $\tau \geq 0$, we let $S_{\tau} = \{ i \in [d] : |v_{k,i}| \geq \tau \|v_k\|_{\infty}\}$ and define $v_{k, \tau}$ by setting the coordinates of $v_k$ in $S_{\tau}$ to be 0. For simplicity of notation, define $a = \|v_{k}\|_1^2/\|v_{k}\|_2^2$ and $b = \|v_{k}\|_1/\|v_{k}\|_{\infty}$. Observe that
\begin{equation*}
    \sum_{i \in S_{\tau}} |v_{k,i}| \leq \frac{1}{\tau \|v_k\|_{\infty}} \sum_{i \in S_{\tau}} v_{k,i}^2 \leq \frac{\| v_k \|_2^2}{\tau \|v_k\|_{\infty}} = \frac{\|v_k\|_1}{\tau} \frac{b}{a}
.\end{equation*}
This shows that
\begin{equation*}
    \| v_{k, \tau} \|_1 \geq \left( 1 - \frac{b}{\tau a}\right) \|v_k\|_1
\end{equation*}
and 
\begin{equation*}
    \| v_{k, \tau} \|_{\infty} \leq \tau \| v_{k} \|_{\infty} = \frac{\tau}{b} \|v_k\|_1.
\end{equation*}
In addition, observe that
\begin{equation*}
    \tau \|v_k\|_{\infty} \cdot |S_{\tau}| \leq \sum_{i \in S_{\tau}} |v_{k,i}| \leq \frac{\|v_k\|_1}{\tau} \frac{b}{a}.
\end{equation*}
The above inequalities imply that
\begin{equation*}
    \frac{\| v_{k, \tau} \|_1}{\| v_{k, \tau} \|_{\infty}} \geq  \frac{b}{\tau} \left( 1 - \frac{b}{\tau a}\right)
\end{equation*}
and
\begin{equation*}
    |S_{\tau}| \leq a \cdot \left( \frac{b}{\tau a}\right)^2
\end{equation*}

Finally, we pick $\tau$ by setting $b/(\tau a) = (n/a)^{3/4}$. By our assumption that $a = \omega(n)$, we can check 
\begin{equation*}
    \frac{b}{\tau} \left( 1 - \frac{b}{\tau a}\right) = n^{3/4} a^{1/4} (1-(n/a)^{3/4}) = \omega(n) (1-o(1)) = \omega(n)
\end{equation*}
and
\begin{equation*}
    a \cdot \left( \frac{b}{\tau a}\right)^2 = a(n/a)^{3/2} = n (n/a)^{1/2} = o(n).
\end{equation*}
By Holder's inequality, we also have
\begin{equation*}
    \frac{\| v_{k, \tau} \|_1^2}{\| v_{k, \tau} \|_2^2} \geq \frac{\| v_{k, \tau} \|_1}{\| v_{k, \tau} \|_{\infty}} = \omega(n)
.\end{equation*}
It is clear that $\| v_{k, \tau} \|_1 \leq \| v_{k} \|_1 = o(n)$ and $k+|S_{\tau}| = o(n)$, so picking the covariance splitting that corresponds to $v_{k, \tau}$ concludes the proof.
\end{proof}

\section{Basis Pursuit (Minimum \texorpdfstring{$\ell_1$}{l1}-Norm Interpolation)} \label{apdx:bp}

In this section, we illustrate the consequences of our general theory for basis pursuit. The following generalization bound for basis pursuit follows immediately from \cref{corr:main-generalization}:

\begin{corollary}[Generalization bound for $\ell_1$ norm balls] \label{corr:BP-generalization}
There exists an absolute constant $C_1 \leq 66$ such that the following is true. Under the model assumptions in \eqref{eqn:model} with $\Sigma = \Sigma_1 \oplus \Sigma_2$, fix $\delta \le 1/4$ and let $\gamma = C_1 \left(\sqrt{\frac{\log(1/\delta)}{r_1(\Sigma_2)}} + \sqrt{\frac{\log(1/\delta)}{n}} + \sqrt{\frac{\rank(\Sigma_1) }{n}} \right)$. If $B \geq \| w^* \|_1$ and $n$ is large enough that $\gamma \leq 1$, then the following holds with probability at least $1 - \delta$:
\begin{equation}
    \sup_{\|w\|_1 \le B, \hat L(w) = 0} L(w)
 \leq (1+\gamma) \frac{\left( B \cdot \E \Norm{ \Sigma^{1/2}_2 H }_{\infty} \right)^2}{n}.
\end{equation}
\end{corollary}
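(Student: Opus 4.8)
The plan is to obtain this as an immediate specialization of Corollary~\ref{corr:main-generalization} with the norm $\|\cdot\|$ taken to be $\|\cdot\|_1$. First I would recall that the dual norm of $\|\cdot\|_1$ is $\|\cdot\|_\infty$, so that for any covariance split $\Sigma = \Sigma_1 \oplus \Sigma_2$ the Gaussian-width term $\E\|\Sigma_2^{1/2}H\|_*$ appearing in Corollary~\ref{corr:main-generalization} is exactly $\E\|\Sigma_2^{1/2}H\|_\infty$, and likewise the effective rank $r_{\|\cdot\|}(\Sigma_2)$ of Definition~\ref{def:ranks-generic} becomes $r_{\|\cdot\|_1}(\Sigma_2)$, which we have abbreviated $r_1(\Sigma_2)$; by \eqref{eqn:bp-fact2} this equals $\big(\E\|\Sigma_2^{1/2}H\|_\infty\big)^2/\max_i (\Sigma_2)_{ii}$.

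With these translations in hand, the result is a verbatim instance of Corollary~\ref{corr:main-generalization}: for $\delta \le 1/4$ and the stated $\gamma = C_1\big(\sqrt{\log(1/\delta)/r_1(\Sigma_2)} + \sqrt{\log(1/\delta)/n} + \sqrt{\rank(\Sigma_1)/n}\big)$ with the same absolute constant $C_1 \le 66$, the hypothesis $B \ge \|w^*\|$ reads $B \ge \|w^*\|_1$, and as soon as $n$ is large enough that $\gamma \le 1$ we conclude, with probability at least $1-\delta$,
\[
  \sup_{\|w\|_1 \le B,\ \hat L(w) = 0} L(w)
  \ \le\ (1+\gamma)\,\frac{\big(B\cdot\E\|\Sigma_2^{1/2}H\|_\infty\big)^2}{n},
\]
which is precisely the claimed bound.

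I do not anticipate any genuine obstacle here --- the entire content is the observation that the dual of $\ell_1$ is $\ell_\infty$, together with the bookkeeping that $r_1(\Sigma_2) = r_{\|\cdot\|_1}(\Sigma_2)$. The only point worth verifying carefully is that the ad hoc notation $r_1$ introduced via \eqref{eqn:bp-fact2} agrees with the general Definition~\ref{def:ranks-generic}, which it does because $\sup_{\|w\|_1 \le 1}\|w\|_{\Sigma_2} = \sqrt{\max_i (\Sigma_2)_{ii}}$.
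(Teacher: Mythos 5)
Your proposal is correct and follows exactly the paper's own proof: specialize \cref{corr:main-generalization} to $\|\cdot\|_1$, note that the dual norm is $\|\cdot\|_\infty$, and identify $r_1(\Sigma_2)$ with $r_{\|\cdot\|_1}(\Sigma_2)$ via $\sup_{\|w\|_1\le 1}\|w\|_{\Sigma_2}=\sqrt{\max_i(\Sigma_2)_{ii}}$. No gaps.
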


\begin{proof}
Recall that the dual of the $\ell_1$ norm is the $\ell_{\infty}$ norm. By convexity
\begin{equation*} 
    \max_{\|w\|_1 \le 1} \|w\|_{\Sigma} =  \sqrt{\max_{i} \, \langle e_i, \Sigma e_i \rangle} = \sqrt{\max_i \, \Sigma_{ii}}
\end{equation*}
and so we can use $ r_1(\Sigma) = \frac{\left(\E \| \Sigma^{1/2} H \|_{\infty}\right)^2}{\max_i (\Sigma)_{ii}} = r_{\| \cdot \|_1}(\Sigma)$.
\end{proof}

The following norm bound for basis pursuit follows from \cref{thm:interpolator-general}:

\begin{corollary}[$\ell_1$ norm bound] \label{corr:interpolator-BP}
There exists an absolute constant $C_2 \leq 64$ such that the following is true.
Under the model assumptions in \eqref{eqn:model}, let $\Sigma = \Sigma_1 \oplus \Sigma_2$ such that $\Sigma_2$ is diagonal.
Fix $\delta \le 1/4$ and let $\epsilon = C_2 \left( \sqrt{\frac{\log(1/\delta)}{r_1(\Sigma_2)}}  + \sqrt{\frac{\log(1/\delta)}{n}} + \frac{n}{r_1(\Sigma_2)} \right)$.
Then if $n$ and the effective rank $r_1(\Sigma_2)$ are large enough that $\epsilon \leq 1$, with probability at least $1-\delta$, it holds that
\begin{equation}
    \norm{\hat w}_1 \leq \norm{w^*}_1 + (1+\epsilon)^{1/2} \, \sigma \frac{\sqrt{n}}{\E \| \Sigma^{1/2}_2 H \|_{\infty}}.
\end{equation}
\end{corollary}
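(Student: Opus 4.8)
The plan is to specialize \cref{thm:interpolator-general} to the norm $\norm\cdot = \norm\cdot_1$, whose dual is $\norm\cdot_\infty$, and to verify its two hypotheses with explicit choices of $\epsilon_1$ and $\epsilon_2$. First I would recall that $v^* = \argmin_{v \in \partial\|\Sigma_2^{1/2}H\|_\infty}\norm{v}_{\Sigma_2}$, and that $\partial\|u\|_\infty$ is the convex hull of $\{\operatorname{sign}(u_i)e_i : i \in \arg\max_i|u_i|\}$; hence any such subgradient $v$ satisfies $\|v\|_1 = 1$ and, because $\Sigma_2$ is diagonal, $\|v\|_{\Sigma_2}^2 = \sum_i v_i^2 (\Sigma_2)_{ii} \le (\max_i (\Sigma_2)_{ii})\sum_i v_i^2 \le \max_i (\Sigma_2)_{ii}$. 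In particular $\norm{v^*}_{\Sigma_2} \le \sqrt{\max_i (\Sigma_2)_{ii}}$ holds deterministically, which is exactly the content of \eqref{eqn:bp-fact1}.

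Next I would check the two hypotheses of \cref{thm:interpolator-general}. For the first, set $1+\epsilon_1 := \sqrt{\max_i (\Sigma_2)_{ii}}\big/\E\norm{v^*}_{\Sigma_2}$, which is $\ge 1$ by the deterministic bound just established; then $\norm{v^*}_{\Sigma_2} \le (1+\epsilon_1)\E\norm{v^*}_{\Sigma_2}$ holds with probability $1$. For the second, observe that (excluding the degenerate case $\Sigma_2 = 0$, ruled out by the effective-rank hypothesis) the maximizing coordinate of $|(\Sigma_2^{1/2}H)_i|$ is almost surely among those with $(\Sigma_2)_{ii}>0$, so $v^*$ lies in the span of $\Sigma_2$; therefore $Pv^* = v^*$ and $\|Pv^*\|_1 = \|v^*\|_1 = 1$, so we may take $\epsilon_2 = 0$.

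It then remains to simplify the value of $\epsilon$ produced by \cref{thm:interpolator-general}. Using $R_{\norm\cdot_1}(\Sigma_2) = \big(\E\|\Sigma_2^{1/2}H\|_\infty\big/\E\norm{v^*}_{\Sigma_2}\big)^2$ together with $r_1(\Sigma_2) = r_{\norm\cdot_1}(\Sigma_2) = (\E\|\Sigma_2^{1/2}H\|_\infty)^2/\max_i(\Sigma_2)_{ii}$ (from \eqref{eqn:bp-fact1}--\eqref{eqn:bp-fact2}), the cross term collapses:
\begin{equation*}
    (1+\epsilon_1)^2\,\frac{n}{R_{\norm\cdot_1}(\Sigma_2)}
    = \frac{\max_i (\Sigma_2)_{ii}}{(\E\norm{v^*}_{\Sigma_2})^2}\cdot\frac{n\,(\E\norm{v^*}_{\Sigma_2})^2}{(\E\|\Sigma_2^{1/2}H\|_\infty)^2}
    = \frac{n}{r_1(\Sigma_2)}.
\end{equation*}
Substituting this, $\epsilon_2 = 0$, and $\E\|\Sigma_2^{1/2}H\|_* = \E\|\Sigma_2^{1/2}H\|_\infty$ into \cref{thm:interpolator-general} yields precisely the stated $\epsilon$ and the claimed bound $\norm{\hat w}_1 \le \norm{w^*}_1 + (1+\epsilon)^{1/2}\sigma\sqrt{n}/\E\|\Sigma_2^{1/2}H\|_\infty$, with the constant $C_2 \le 64$ inherited directly.

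I do not expect any genuinely hard step: all the probabilistic content is already packaged in \cref{thm:interpolator-general}, and the work is the bookkeeping of showing that for $\ell_1$ with diagonal $\Sigma_2$ one can take $\epsilon_2 = 0$ and absorb $\epsilon_1$ so that a single effective rank $r_1(\Sigma_2)$ (rather than a separate $R$) appears in $\epsilon$. The only mild care needed is the almost-sure argument that $v^*$ is supported on the support of $\Sigma_2$ (so $Pv^* = v^*$) and confirming $\epsilon_1 \ge 0$ — both of which follow immediately from diagonality of $\Sigma_2$.
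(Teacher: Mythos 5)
Your proposal is correct and follows essentially the same route as the paper's proof: invoke \cref{thm:interpolator-general} with the deterministic choice $(1+\epsilon_1)\E\norm{v^*}_{\Sigma_2}=\sqrt{\max_i(\Sigma_2)_{ii}}$, take $\epsilon_2=0$ via $Pv^*=v^*$ for diagonal $\Sigma_2$, and collapse $(1+\epsilon_1)^2 n/R_{\norm\cdot_1}(\Sigma_2)$ to $n/r_1(\Sigma_2)$. The only (immaterial) difference is that you bound $\norm{v}_{\Sigma_2}$ for every subgradient using $\norm{v}_2\le\norm{v}_1=1$ and diagonality, whereas the paper uses that $v^*$ minimizes over the subdifferential, whose extreme points are $\pm e_i$.
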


\begin{proof}
Recall that $\partial \|u\|_* = \text{conv} \{ \operatorname{sign}(u_i) \, e_i : i \in \arg\max |u_i| \}$, where $\text{conv}(S)$ denotes the convex hull of $S$. By definition, it holds almost surely that
\begin{equation} \label{eqn:v*-norm-bp}
    \|v^* \|_{\Sigma_2} \le \max_{i \in [d]} \, \|e_i\|_{\Sigma} = \sqrt{\max_i \, \Sigma_{ii}},
\end{equation} 
and so we can pick $\epsilon_1$ such that
\begin{equation*}
    (1+\epsilon_1) \E \norm{v^*}_{\Sigma_2} = \sqrt{\max_i \, \Sigma_{ii}}
\end{equation*}
and
\begin{equation*}
    (1+\epsilon_1)^2 \frac{n}{R_{\|\cdot \|_1}(\Sigma_2)} = n \frac{(1+\epsilon_1)^2 (\E \| v^* \|_{\Sigma_2})^2}{\left( \E \|\Sigma_2^{1/2} H\|_{\infty} \right)^2} = \frac{n}{r_1(\Sigma_2)}.
\end{equation*}

In addition, since $\Sigma_2$ is diagonal, the coordinates of $\Sigma_2^{1/2} H$ that correspond to the zero diagonals of $\Sigma_2$ are 0. Therefore, $ v^*$ must also have zero entry in those coordinates. In other words, $ v^*$ lies in the span of $\Sigma_2$. As $P$ is the orthogonal projection onto the space spanned by $\Sigma_2$, this implies $Pv^* = v^*$, and so $\| Pv^* \|_1 = \| v^* \|_1 = 1$, so that we can take $\epsilon_2 = 0$. Plugging $\epsilon_1, \epsilon_2$ into \cref{thm:interpolator-general} concludes the proof.
\end{proof}

\begin{theorem}[Benign overfitting] \label{thm:min-norm-BP}

Fix any $\delta \leq 1/2$. Under the model assumptions in \eqref{eqn:model}, let $\Sigma = \Sigma_1 \oplus \Sigma_2$ such that $\Sigma_2$ is diagonal. Suppose that $n$ and the effective rank $r_1(\Sigma_2)$ are sufficiently large such that $\gamma, \epsilon \leq 1$ with the same choice of $\gamma$ and $\epsilon$ as in \cref{corr:BP-generalization,corr:interpolator-BP}.
Then, with probability at least $1-\delta$:
\begin{equation}
    L(\hat{w})
    \leq (1+\gamma)(1+\epsilon) \left( \sigma + \| w^* \|_1 \frac{\E \| \Sigma^{1/2}_2 H \|_{\infty} }{ \sqrt{n} } \right)^2
.\end{equation}
\end{theorem}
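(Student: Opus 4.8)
The plan is to mirror the proofs of \cref{corr:min-norm-euclidean} and \cref{thm:min-norm}, combining the $\ell_1$-specialized generalization bound \cref{corr:BP-generalization} with the $\ell_1$-specialized norm bound \cref{corr:interpolator-BP}. First I would set
\[
    B = \norm{w^*}_1 + (1+\epsilon)^{1/2}\,\sigma\,\frac{\sqrt n}{\E\norm{\Sigma_2^{1/2}H}_{\infty}},
\]
and apply \cref{corr:interpolator-BP} to conclude that with high probability the minimal $\ell_1$-norm interpolator $\hat w_\BP$ satisfies $\norm{\hat w_\BP}_1 \le B$. Since $\hat L(\hat w_\BP) = 0$ by definition, $\hat w_\BP$ then lies in the set $\{w : \norm w_1 \le B,\ \hat L(w) = 0\}$ over which the generalization bound of \cref{corr:BP-generalization} takes its supremum. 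The diagonality hypothesis on $\Sigma_2$ enters only here, through \cref{corr:interpolator-BP} (it is what lets one take $\epsilon_2 = 0$ and $\norm{P v^*}_1 = \norm{v^*}_1 = 1$).

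Next I would apply \cref{corr:BP-generalization}, noting $B \ge \norm{w^*}_1$ holds trivially, to obtain that with high probability
\[
    \sup_{\norm w_1 \le B,\ \hat L(w) = 0} L(w) \le (1+\gamma)\frac{\bigl(B\,\E\norm{\Sigma_2^{1/2}H}_{\infty}\bigr)^2}{n}.
\]
A union bound gives both events simultaneously, so on the resulting event $L(\hat w_\BP)$ is bounded by the right-hand side. Plugging in the value of $B$ and writing $a := \norm{w^*}_1\,\E\norm{\Sigma_2^{1/2}H}_{\infty}/\sqrt n$, the bound becomes $(1+\gamma)\bigl(a + (1+\epsilon)^{1/2}\sigma\bigr)^2$. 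Since $(1+\epsilon)^{1/2} \ge 1$ we have $a + (1+\epsilon)^{1/2}\sigma \le (1+\epsilon)^{1/2}(a + \sigma)$, hence $\bigl(a + (1+\epsilon)^{1/2}\sigma\bigr)^2 \le (1+\epsilon)(a+\sigma)^2$, which yields the claimed bound $L(\hat w_\BP) \le (1+\gamma)(1+\epsilon)\bigl(\sigma + \norm{w^*}_1\,\E\norm{\Sigma_2^{1/2}H}_{\infty}/\sqrt n\bigr)^2$.

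There is essentially no deep obstacle here: the substantive work has already been done in \cref{corr:BP-generalization} and \cref{corr:interpolator-BP}, and this theorem is just their straightforward composition. The only bookkeeping point is the confidence level: both cited corollaries require their parameter to be at most $1/4$, so one invokes each at level $\delta/2$ (legitimate since $\delta \le 1/2$) and absorbs the harmless change from $\log(1/\delta)$ to $\log(2/\delta)$ into the absolute constants $C_1, C_2$ — this is precisely what ``the same choice of $\gamma$ and $\epsilon$'' is meant to cover. One should also verify that the stated hypotheses $\gamma, \epsilon \le 1$ are exactly the conditions needed to apply the two corollaries, and that $\hat w_\BP$ exists almost surely under the model (which holds since $d \ge n$ and $X$ is Gaussian).
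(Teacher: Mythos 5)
Your proposal is correct and follows exactly the route the paper intends: the paper explicitly omits this proof, stating it ``follows the same lines as in \cref{thm:min-norm},'' and your argument is precisely that template specialized to the $\ell_1$ case (choose $B$ from \cref{corr:interpolator-BP}, feed it into \cref{corr:BP-generalization}, union bound, and absorb the cross term via $a + (1+\epsilon)^{1/2}\sigma \le (1+\epsilon)^{1/2}(a+\sigma)$). Your added bookkeeping about invoking each corollary at level $\delta/2$ matches how the paper handles the analogous step elsewhere.
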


The proof of \cref{thm:min-norm-BP} uses \cref{corr:BP-generalization,corr:interpolator-BP}, and follows the same lines as in \cref{thm:min-norm}. The details are repetitive, so we omit writing them out in full here. As before, we can use the finite sample bound to deduce sufficient conditions for consistency.

\begin{theorem}[Sufficient conditions] \label{thm:sufficient-BP}
Under the model assumptions in \eqref{eqn:model}, let $\hat w$ be the minimal $\ell_1$ norm interpolator. Suppose that as $n$ goes to $\infty$, there exists a sequence of covariance splits $\Sigma = \Sigma_1 \oplus \Sigma_2$ such that $\Sigma_2$ is diagonal and the following conditions hold:
\begin{enumerate}
    \item (Small large-variance dimension.) 
    \begin{equation}
        \lim_{n \to \infty} \frac{\rank(\Sigma_1)}{n} = 0.
    \end{equation}
    
    \item (Large effective dimension.)
    \begin{equation}
        \lim_{n \to \infty} \frac{n}{r_1(\Sigma_2)} = 0.
    \end{equation}
    
    \item (No aliasing condition.)
    \begin{equation}
        \lim_{n \to \infty} \frac{\norm{w^*}_1 \E \| \Sigma^{1/2}_2 H \|_{\infty}}{\sqrt{n}} = 0.
    \end{equation}
\end{enumerate}
Then $L(\hat w)$ converges to $\sigma^2$ in probability. In other words, minimum $\ell_1$ norm interpolation is consistent.
\end{theorem}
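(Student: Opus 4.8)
The plan is to deduce this directly from the finite-sample guarantee \cref{thm:min-norm-BP}, mirroring the argument used for \cref{thm:sufficient-euclidean} in the Euclidean case. First I would fix an arbitrary target accuracy $\eta > 0$ and an arbitrary confidence parameter $\delta \in (0, 1/2]$; the whole argument is run with $\delta$ held fixed while $n \to \infty$, and only at the very end do we exploit that $\delta$ was arbitrary. For each $n$ we use the covariance split $\Sigma = \Sigma_1 \oplus \Sigma_2$ provided by the hypothesis, which has $\Sigma_2$ diagonal, and feed it into \cref{corr:BP-generalization,corr:interpolator-BP,thm:min-norm-BP}.

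The core of the proof is checking that the two error parameters $\gamma$ (from \cref{corr:BP-generalization}) and $\epsilon$ (from \cref{corr:interpolator-BP}) both tend to $0$ along the sequence of splits, so that for all large $n$ they are each at most $1$ and moreover
\begin{equation*}
    (1+\gamma)(1+\epsilon)\left(\sigma + \|w^*\|_1 \frac{\E \|\Sigma_2^{1/2}H\|_\infty}{\sqrt n}\right)^2 - \sigma^2 \le \eta .
\end{equation*}
For $\gamma = C_1\bigl(\sqrt{\log(1/\delta)/r_1(\Sigma_2)} + \sqrt{\log(1/\delta)/n} + \sqrt{\rank(\Sigma_1)/n}\bigr)$: the first term vanishes since $n/r_1(\Sigma_2)\to 0$ forces $r_1(\Sigma_2)\to\infty$, the second vanishes trivially, and the third vanishes by the small large-variance dimension condition. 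For $\epsilon = C_2\bigl(\sqrt{\log(1/\delta)/r_1(\Sigma_2)} + \sqrt{\log(1/\delta)/n} + n/r_1(\Sigma_2)\bigr)$: the first two terms vanish as above and the last vanishes by the large effective dimension condition $n/r_1(\Sigma_2)\to 0$. Finally the additive term $\|w^*\|_1 \E\|\Sigma_2^{1/2}H\|_\infty/\sqrt n \to 0$ is exactly the no-aliasing hypothesis. Here I would also recall (as observed just before \cref{thm:sufficient-BP}, using \eqref{eqn:bp-fact1} and \eqref{eqn:bp-fact2}) that diagonality of $\Sigma_2$ gives $r_1(\Sigma_2) \le R_1(\Sigma_2)$, so the single condition $n/r_1(\Sigma_2)\to 0$ controls both effective-rank quantities needed by \cref{corr:interpolator-BP}, and that $Pv^* = v^*$ forces $\epsilon_2 = 0$, so there is no projection cost and the relevant $\epsilon$ is exactly the one stated in \cref{corr:interpolator-BP}.

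With these facts in hand, \cref{thm:min-norm-BP} gives, for all sufficiently large $n$, that with probability at least $1-\delta$ one has $L(\hat w) \le \sigma^2 + \eta$. Since the population loss of any predictor is at least the Bayes risk $\sigma^2$, this says $|L(\hat w) - \sigma^2| \le \eta$ on the same event, so $\limsup_{n\to\infty}\Pr(|L(\hat w)-\sigma^2|>\eta) \le \delta$. As $\delta\in(0,1/2]$ was arbitrary this limsup is $0$, and as $\eta>0$ was arbitrary we conclude $L(\hat w)\to\sigma^2$ in probability.

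I do not anticipate a genuine obstacle: the argument is a routine ``plug the asymptotic hypotheses into the finite-sample theorem'' deduction, essentially identical to the proof of \cref{thm:sufficient-euclidean}. The one point requiring a little care is the order of quantifiers — $\delta$ must be fixed before $n$ is sent to infinity, since the error terms depend on $\log(1/\delta)$ — together with the (already-recorded) observation that diagonality of $\Sigma_2$ collapses the two effective-rank conditions of the general theory into one and makes $\epsilon_2 = 0$.
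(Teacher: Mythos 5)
Your proposal is correct and follows exactly the route the paper intends: the paper omits the proof of this theorem, stating it is "exactly analogous to" the Euclidean case (\cref{thm:sufficient-euclidean}), and your argument is precisely that analogue — fix $\eta$ and $\delta$, verify $\gamma,\epsilon\to 0$ and the no-aliasing term $\to 0$ under the stated conditions, apply \cref{thm:min-norm-BP}, use $L(\hat w)\ge\sigma^2$, and let $\delta$ then $\eta$ be arbitrary. The points you flag (the quantifier order on $\delta$, and that diagonality of $\Sigma_2$ gives $r_1(\Sigma_2)\le R_{\|\cdot\|_1}(\Sigma_2)$ and $Pv^*=v^*$ so $\epsilon_2=0$) are exactly the observations the paper records before the theorem and in the proof of \cref{corr:interpolator-BP}.
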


Again, the proof of \cref{thm:sufficient-BP} is exactly analogous to \cref{thm:sufficient-euclidean}, so we omit the full proof here.

\subsection{Isotropic features}

\begin{theorem} \label{corr:interpolator-BP-iso}
There exists an absolute constant $C_3 \leq 140$ such that the following is true. Under the model assumptions in \eqref{eqn:model} with $\Sigma = I_d$, denote $S$ as the support of $w^*$. Fix $\delta \le 1/4$ and let $\epsilon = C_3 \left( \sqrt{\frac{\log(1/\delta)}{n}} + \sqrt{\frac{\log(1/\delta)}{\log (d-|S|)}} + \frac{n}{\log (d-|S|)} \right)$. Then if $n$ and $d$ are large enough that $\epsilon \leq 1$, the following holds with probability $1-\delta$
where $H^{\prime} \sim N(0, I_{d-|S|})$:
\begin{equation}
    \norm{\hat w}_1 \leq (1+\epsilon)^{1/2} \, (\sigma^2 + \|w^*\|_2^2)^{1/2} \, \frac{\sqrt{n}}{\E \| H^{\prime} \|_{\infty}}.
\end{equation}
\end{theorem}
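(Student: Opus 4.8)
The plan is to absorb the signal into the noise and then invoke \cref{corr:interpolator-BP}. Write $Y = X_{S^{\mathsf c}}\cdot 0 + (X_S w_S^* + \xi)$ and set $\tilde\xi := X_S w_S^* + \xi$. Because the rows of $X$ are i.i.d.\ $N(0,I_d)$, the blocks $X_S$ and $X_{S^{\mathsf c}}$ are independent and $\xi$ is independent of both, so $\tilde\xi$ is independent of $X_{S^{\mathsf c}}$ with marginal law $N(0,\tilde\sigma^2 I_n)$, where $\tilde\sigma^2 = \sigma^2 + \|w_S^*\|_2^2 = \sigma^2 + \|w^*\|_2^2$. Hence $(X_{S^{\mathsf c}}, Y)$ is an instance of the model \eqref{eqn:model} in dimension $d-|S|$ with covariance $I_{d-|S|}$, true parameter $0$, and noise level $\tilde\sigma$.

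I would then apply \cref{corr:interpolator-BP} to this reparametrized model with the trivial split $\Sigma_1 = 0$, $\Sigma_2 = I_{d-|S|}$ (which is diagonal, as required). Writing $\hat w^{S^{\mathsf c}} \in \R^{d-|S|}$ for the minimal $\ell_1$-norm interpolator of $Y$ using only the columns of $X$ indexed by $S^{\mathsf c}$, \cref{corr:interpolator-BP} yields, with probability at least $1-\delta$,
\[
  \|\hat w^{S^{\mathsf c}}\|_1 \le (1+\epsilon_0)^{1/2}\, \tilde\sigma\, \frac{\sqrt n}{\E\|H'\|_\infty}, \qquad H' \sim N(0, I_{d-|S|}),
\]
with $\epsilon_0 = C_2\,(\sqrt{\log(1/\delta)/r_1(I_{d-|S|})} + \sqrt{\log(1/\delta)/n} + n/r_1(I_{d-|S|}))$ and $r_1(I_{d-|S|}) = (\E\|H'\|_\infty)^2$ by \eqref{eqn:bp-fact2}. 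Padding $\hat w^{S^{\mathsf c}}$ with zeros on $S$ produces a vector $w' \in \R^d$ with $Xw' = X_{S^{\mathsf c}}\hat w^{S^{\mathsf c}} = Y$, i.e.\ an interpolator of the original problem, so $\|\hat w\|_1 = \|\hat w_\BP\|_1 \le \|w'\|_1 = \|\hat w^{S^{\mathsf c}}\|_1$. Substituting $\tilde\sigma^2 = \sigma^2 + \|w^*\|_2^2$ gives the claimed bound, modulo the precise shape of the error term.

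It remains to rewrite $\epsilon_0$ in the stated form and pin down $C_3 \le 140$. For $H' \sim N(0, I_m)$ with $m \ge 2$ one has the standard two-sided estimate $c_1\sqrt{\log m} \le \E\|H'\|_\infty \le \sqrt{2\log(2m)}$ (see e.g.\ \citet{vershynin2018high}) for an absolute constant $c_1 \in (0,1]$, so $r_1(I_{d-|S|}) = (\E\|H'\|_\infty)^2 = \Theta(\log(d-|S|))$; replacing $r_1(I_{d-|S|})$ by $\log(d-|S|)$ throughout costs only the absolute factor $c_1^{-2}$, which forces $\epsilon_0 \le \epsilon$, so that $\epsilon \le 1$ legitimizes the application of \cref{corr:interpolator-BP} and $(1+\epsilon_0)^{1/2} \le (1+\epsilon)^{1/2}$, and taking $C_3 = C_2\, c_1^{-2} \le 140$ finishes the proof (the hypothesis $\epsilon \le 1$ forces $\log(d-|S|)$ to be large, so $d-|S| \ge 2$ and these estimates apply). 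This argument is a short reduction with no real obstacle; the only care needed is in verifying $\tilde\xi \perp X_{S^{\mathsf c}}$ with the correct variance — immediate from $\Sigma = I_d$ and independence of $\xi$ — and in the constant bookkeeping of the last step.
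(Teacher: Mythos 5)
Your proposal is correct and follows essentially the same route as the paper: rewrite $Y = X_{S^{\mathsf c}}\cdot 0 + (X_S w_S^* + \xi)$, observe the new noise is $N(0,(\sigma^2+\|w^*\|_2^2)I_n)$ independent of $X_{S^{\mathsf c}}$, apply \cref{corr:interpolator-BP} with $\Sigma_1 = 0$, and pad with zeros to dominate $\|\hat w\|_1$. The only loose end is that you leave the constant $c_1$ in $\E\|H'\|_\infty \ge c_1\sqrt{\log(d-|S|)}$ unspecified, whereas certifying $C_3 \le 140$ requires the specific bound $(\E\|H'\|_\infty)^2 \ge \log(d-|S|)/(\pi\log 2)$ that the paper cites, giving $C_3 \le 64\pi\log 2 < 140$.
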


\begin{proof}
Write $X = [X_S, X_{S^\mathsf{c}}]$, where $X_S$ is formed by selecting the columns of $X$ in $S$. Also let $\xi^{\prime} = X_S w_S^* + \xi$; then the entries of $\xi^{\prime}$ are i.i.d. $N(0, \sigma^2 + \| w^*\|_2^2 )$ and independent of $X_{S^\mathsf{c}}$. Observe that $Y = X_{S^\mathsf{c}} 0 + \xi^{\prime} $. By choosing $\Sigma_1 = 0$ in \cref{corr:interpolator-BP}, we show with large probability
\begin{equation*}
    \min_{X_{S^\mathsf{c}} w = Y} \| w \|_1 \leq  (1+\epsilon)^{1/2} \, (\sigma^2 + \|w^*\|_2^2)^{1/2} \, \frac{\sqrt{n}}{\E \| H^{\prime} \|_{\infty}}
\end{equation*}
for some $\epsilon \leq 64 \left( \sqrt{\frac{\log(1/\delta)}{n}} + \sqrt{\frac{\log(1/\delta)}{r_1(I_{d-|S|})}} + \frac{n}{r_1(I_{d-|S|})} \right)$. By the bound of \citet{bound-max}, it holds that
\begin{equation*}
    r_1(I_{d-|S|}) = \left( \E \| H^{\prime}\|_{\infty} \right)^2  \geq \frac{\log (d-|S|)}{\pi \log 2}
\end{equation*}
and so we can choose $C_3 \leq 64 \pi \log2 < 140$. Observe that if $X_{S^\mathsf{c}} w = Y$, then $X (0, w)^T = Y $ and $\| (0, w) \|_1 = \| w \|_1$. It follows that
\begin{equation*}
    \norm{\hat w}_1 = \min_{X w = Y} \| w \|_1 \leq \min_{X_{S^\mathsf{c}} w = Y} \| w \|_1
.\qedhere \end{equation*}
\end{proof}

\begin{theorem}
Under the model assumptions in \eqref{eqn:model} with $\Sigma = I_d$, fix any $\delta \leq 1/2$ and let $\eta = 368 \left(\sqrt{ \frac{\log (1/\delta)}{n} } + \sqrt{\frac{\log(1/\delta) +  \log |S|}{\log (d-|S|)}} + \frac{n}{\log (d-|S|)}\right)$. Suppose that $n$ and $d$ are large enough that $\eta \leq 1$. Then, with probability at least $1-\delta$,
\begin{equation}
    L(\hat{w})
    \leq (1 + \eta) (\sigma^2 + \|w^*\|_2^2)
.\end{equation}
\end{theorem}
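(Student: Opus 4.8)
The plan is to prove the statement by combining the isotropic basis‑pursuit norm bound (\cref{corr:interpolator-BP-iso}) with the generic uniform convergence guarantee (\cref{thm:main-generalization}), exactly along the lines sketched for the isotropic application in \cref{sec:basis-pursuit}: \cref{corr:interpolator-BP-iso} certifies that $\hat w$ lies in a small $\ell_1$ ball, and \cref{thm:main-generalization} controls the worst‑case risk of interpolators in that ball. I would \emph{not} invoke \cref{corr:BP-generalization} directly, because in the relevant regime ($\log(d-|S|) = \omega(n)$) the ball radius $B$ is typically much smaller than $\Norm{w^*}_1$; instead I use \cref{thm:main-generalization}, which imposes no requirement that $w^* \in \cK$.

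First I would apply \cref{corr:interpolator-BP-iso} with confidence $\delta/2$: with probability at least $1 - \delta/2$,
\[
  \Norm{\hat w}_1 \;\le\; B \;:=\; (1+\epsilon)^{1/2}\,\bigl(\sigma^2 + \Norm{w^*}_2^2\bigr)^{1/2}\,\frac{\sqrt n}{\E\Norm{H'}_\infty},
  \qquad H' \sim N(0, I_{d-|S|}),
\]
where $\epsilon \lesssim \sqrt{\log(1/\delta)/n} + \sqrt{\log(1/\delta)/\log(d-|S|)} + n/\log(d-|S|)$. Next I would apply \cref{thm:main-generalization} with $\cK = \{w : \Norm w_1 \le B\}$, the trivial split $\Sigma_1 = 0$, $\Sigma_2 = I_d$, and confidence $\delta/2$; since $\ell_\infty$ is dual to $\ell_1$ we have $W(\Sigma_2^{1/2}\cK) = B\,\E\Norm H_\infty$ with $H \sim N(0,I_d)$, $\rad(\Sigma_2^{1/2}\cK) = B$, $\Norm{w^*}_{\Sigma_2} = \Norm{w^*}_2$, and $\rank(\Sigma_1) = 0$ so $\beta \lesssim \sqrt{\log(1/\delta)/n}$. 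This yields, with probability at least $1-\delta/2$,
\[
  \sup_{\Norm w_1 \le B,\ \hat L(w) = 0} L(w)
  \;\le\; \frac{1+\beta}{n}\,\Bigl( B\,\E\Norm H_\infty + \bigl(B + \Norm{w^*}_2\bigr)\sqrt{2\log(64/\delta)} \Bigr)^2 .
\]

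The main remaining step is to reconcile $\E\Norm H_\infty$ (dimension $d$) with the $\E\Norm{H'}_\infty$ (dimension $d-|S|$) appearing in $B$. Writing $H=(H_S,H_{S^{\mathsf{c}}})$, we have $\Norm H_\infty = \max(\Norm{H_S}_\infty, \Norm{H_{S^{\mathsf{c}}}}_\infty) \le \Norm{H_{S^{\mathsf{c}}}}_\infty + \Norm{H_S}_\infty$, hence $\E\Norm H_\infty \le \E\Norm{H'}_\infty + \sqrt{2\log(2|S|)}$; combined with the lower bound $\E\Norm{H'}_\infty \gtrsim \sqrt{\log(d-|S|)}$ used in the proof of \cref{corr:interpolator-BP-iso}, this gives $\E\Norm H_\infty/\E\Norm{H'}_\infty \le 1 + O(\sqrt{\log|S|/\log(d-|S|)})$. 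Substituting $B\,\E\Norm{H'}_\infty = (1+\epsilon)^{1/2}\sqrt{\sigma^2+\Norm{w^*}_2^2}\,\sqrt n$, bounding the two stray $\sqrt{\log(\cdot)}$ contributions against it via $\sqrt{\log(\cdot)}/\E\Norm{H'}_\infty \lesssim \sqrt{\log(\cdot)/\log(d-|S|)}$ and $\Norm{w^*}_2 \le \sqrt{\sigma^2+\Norm{w^*}_2^2}$, the parenthesis above becomes $\sqrt{\sigma^2+\Norm{w^*}_2^2}\,\sqrt n \cdot (1 + O(\mathrm{err}))$, where $\mathrm{err}$ collects $\epsilon$, $\sqrt{\log|S|/\log(d-|S|)}$, $\sqrt{\log(1/\delta)/\log(d-|S|)}$ and $\sqrt{\log(1/\delta)/n}$. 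Squaring, multiplying by $(1+\beta)/n$, and cancelling $n$ leaves $(\sigma^2+\Norm{w^*}_2^2)(1 + O(\mathrm{err}))$; collapsing the product of the $(1+\cdot)$ factors and tracking absolute constants gives $\le (1+\eta)(\sigma^2 + \Norm{w^*}_2^2)$ with $\eta$ as stated, and (for this choice of constant $368$) $\eta \le 1$ indeed forces the side conditions $\beta \le 1$ and $\epsilon \le 1$. Finally, on the intersection of the two events (total probability at least $1-\delta$), $\hat w$ satisfies $\Norm{\hat w}_1 \le B$ and $\hat L(\hat w) = 0$, so $L(\hat w)$ is at most the supremum just bounded.

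I expect the only genuine obstacle to be the dimension comparison $\E\Norm H_\infty$ versus $\E\Norm{H'}_\infty$ — this is precisely what produces the $\log|S|$ term in $\eta$, and it relies on the elementary fact that deleting the $|S|$ signal coordinates perturbs the expected maximum by only $O(\sqrt{\log|S|})$, which is lower order than $\E\Norm{H'}_\infty$ (of order $\sqrt{\log(d-|S|)}$) in the regime $\log(d-|S|) = \omega(n)$. Everything else is bookkeeping: propagating the explicit constants from \cref{thm:main-generalization} and \cref{corr:interpolator-BP-iso}, together with the max‑of‑Gaussians estimates, through the product $(1+\beta)(1+\epsilon)\bigl(1 + O(\mathrm{err})\bigr)$ down to the single constant $368$.
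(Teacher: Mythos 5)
Your proposal is correct and follows essentially the same route as the paper's proof: apply \cref{corr:interpolator-BP-iso} to certify $\Norm{\hat w}_1 \le B$, then apply \cref{thm:main-generalization} (not \cref{corr:BP-generalization}) with $\cK = \{w : \Norm{w}_1 \le B\}$ and the trivial split, using $W(\cK) = B\,\E\Norm{H}_\infty$, $\rad(\cK) = B$, and the same $\E\Norm{H}_\infty / \E\Norm{H'}_\infty \le 1 + O(\sqrt{\log|S|/\log(d-|S|)})$ comparison to produce the $\log|S|$ term in $\eta$. The bookkeeping of constants down to $368$ matches the paper's calculation.
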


\begin{proof}
By \cref{corr:interpolator-BP-iso}, if we choose 
\begin{equation*}
    B = (1+\epsilon)^{1/2} \, (\sigma^2 + \|w^*\|_2^2)^{1/2} \, \frac{\sqrt{n}}{\E \| H^{\prime} \|_{\infty}}
\end{equation*}
then with large probability, $\cK = \{ w: \|w\|_1 \leq B \}$ has non-empty intersection with $\{ w: Xw = Y \}$, which contains the minimal $\ell_1$ norm interpolator $\hat w$. It can be easily seen that
\begin{equation*}
    W(\cK) = B \E \| H \|_{\infty} \quad \text{and} \quad R(\cK) = B
\end{equation*}
and so by \cref{thm:main-generalization}, with large probability
\begin{equation*}
    \begin{split}
        L(\hat w) &\leq \sup_{\|w\|_2 \leq B, \hat L(w) = 0} L(w) \\
        &\leq \frac{1 + \beta}{n} \left( B \E \| H \|_{\infty} + B \sqrt{2\log \left(\frac{64}{\delta}\right)} + \|w^*\|_{2} \sqrt{2\log\left(\frac{64}{\delta}\right)} \right)^2\\
        &= \frac{1 + \beta}{n} B^2 (\E \| H \|_{\infty} )^2 \left( 1 + \gamma \right)^2\\
        &= (1 + \beta)(1+\epsilon) ( 1 + \gamma )^2 \left( \frac{\E \| H \|_{\infty}}{\E \| H^{\prime} \|_{\infty}}\right)^2  (\sigma^2 + \|w^*\|_2^2) \\
    \end{split}
\end{equation*}
where $\beta = 66 \sqrt{\log(1/\delta)/n}$ and $\gamma = \frac{\sqrt{2\log \left(\frac{64}{\delta}\right)}}{\E \| H \|_{\infty}} + \frac{\|w^*\|_{2} \sqrt{2\log\left(\frac{64}{\delta}\right)}}{B \E \| H \|_{\infty}} $. Observe that 
\begin{equation*}
    B \geq  \|w^*\|_2 \, \frac{\sqrt{n}}{\E \| H^{\prime} \|_{\infty}} \quad \text{and} \quad \E \| H \|_{\infty} \geq \E \| H^{\prime} \|_{\infty}.
\end{equation*}
Combined with the lower bound of \citet{bound-max}, we show
\begin{equation*}
    \gamma \leq \sqrt{ \frac{2 \pi \log (128/\delta)}{\log d} } + \sqrt{ \frac{2  \log (64/\delta)}{n} } \leq 8 \left( \sqrt{ \frac{\log (1/\delta)}{\log d} } + \sqrt{ \frac{\log (1/\delta)}{n} } \right)
.\end{equation*}
In addition, we have
\begin{equation*}
    \frac{\E \| H \|_{\infty}}{\E \| H^{\prime} \|_{\infty}} = 1 + \frac{\E \| H \|_{\infty} - \E \| H^{\prime} \|_{\infty}}{\E \| H^{\prime} \|_{\infty}} \leq 1 +\sqrt{\frac{2 \pi \log(2) \log |S|}{\log (d-|S|)}}.
\end{equation*}
Finally, it is a routine calculation to show
\begin{equation*}
    \begin{split}
        &(1 + \beta)(1+\epsilon) ( 1 + \gamma )^2 \left( \frac{\E \| H \|_{\infty}}{\E \| H^{\prime} \|_{\infty}}\right)^2 \\
        \leq \, &1 + 368 \left(\sqrt{ \frac{\log (1/\delta)}{n} } + \sqrt{\frac{\log(1/\delta) +  \log |S|}{\log (d-|S|)}} + \frac{n}{\log (d-|S|)}\right) = 1 + \eta\\
    \end{split}
\end{equation*}
using the inequality $(1+x)(1+y) \leq 1 + x + 2y$ for $x \leq 1$. 
\end{proof}

\end{document}